\documentclass[11pt]{article}
\usepackage[a4paper,margin=25mm]{geometry}
\usepackage[T1]{fontenc}
\usepackage{amsmath,amssymb,amsthm,mathtools}
\usepackage{newtxtext,newtxmath}
\usepackage{microtype,booktabs,array}
\usepackage{algorithm,algpseudocode}
\usepackage[round,authoryear]{natbib}
\usepackage{xcolor}
\usepackage[colorlinks=true,linkcolor=blue!45!black,citecolor=blue!45!black,urlcolor=blue!45!black]{hyperref}
\usepackage{bookmark}
\usepackage{needspace}
\hypersetup{pdftitle={Last-Iterate Convergence of Clipped SGD with Linear Decay}}
\newtheorem{assumption}{Assumption}
\newtheorem{theorem}{Theorem}
\newtheorem{lemma}{Lemma}
\newtheorem{proposition}{Proposition}
\newtheorem{corollary}{Corollary}
\theoremstyle{remark}

\newcommand{\E}{\mathbb E}
\newcommand{\Prob}{\mathbb P}
\newcommand{\R}{\mathbb R}
\newcommand{\F}{\mathcal F}
\newcommand{\Et}{\mathbb E_t}
\newcommand{\clip}{\operatorname{clip}}
\newcommand{\prox}{\operatorname{prox}}
\newcommand{\norm}[1]{\lVert #1\rVert}
\newcommand{\ip}[2]{\langle #1,#2\rangle}
\newcommand{\ind}{\mathbf 1}
\newcommand{\dom}{\operatorname{dom}}

\allowdisplaybreaks[1]
\makeatletter
\renewcommand{\@maketitle}{%
  \newpage\null\vskip .5em
  \begin{center}
    {\LARGE\@title\par}
    \ifx\@author\@empty\else
      \vskip 1em{\large\begin{tabular}[t]{c}\@author\end{tabular}\par}
    \fi
    \ifx\@date\@empty\else\vskip .5em{\large\@date\par}\fi
  \end{center}\par\vskip 1em}
\makeatother
\title{Last-Iterate Convergence of Clipped SGD with Linear Decay}
\author{
{Wei Jiang}$^{1,2}$ \quad
{Haomin Bai}$^{2}$ \quad
{Zechao Li}$^{1}$ \\[0.5em]
\small$^1$School of Computer Science and Engineering, 
Nanjing University of Science and Technology\\
\small$^2$National Key Laboratory for Novel Software Technology, Nanjing University\\
\small \{jiangw, baihm\}@lamda.nju.edu.cn, \quad zechao.li@njust.edu.cn
}
\date{}
\begin{document}
\maketitle
\begin{abstract}
In this paper, we study the last-iterate convergence of clipped stochastic gradient descent under heavy-tailed noise. We assume that the stochastic gradients are unbiased and that the noise has a bounded central $p$-th moment, where $1<p\le2$. For smooth convex objectives, a linearly decreasing learning rate gives a high-probability error bound of order $O(LR^2\log(1/\delta)/T+\sigma R(\log(1/\delta)/T)^{(p-1)/p})$. We further establish last-iterate guarantees for nonsmooth composite objectives and smooth composite objectives with proximal updates. In addition, when a causal adversary replaces at most $q$ gradients, the same clipped update achieves a bound with $\log(1/\delta)$ replaced by $\log(1/\delta)+q$. The corresponding lower bounds for the clean and corrupted oracle models are also provided.
\end{abstract}

\section{Introduction}\label{sec:main}
In this paper, we study the problem
\begin{equation}\label{eq:problem}
\min_{x\in\R^d} F(x):=f(x)+r(x).
\end{equation}
When $r=0$, the problem reduces to the smooth unconstrained problem. For such a problem, gradient clipping allows stochastic optimization methods to handle noise whose variance may be infinite. For convex objectives, high-probability guarantees with the statistical rate $T^{-(p-1)/p}$ are available for averaged outputs under a bounded central $p$-th moment \citep{nguyen2023,liu2026refined}. Obtaining this rate for the last iterate of clipped SGD requires a different analysis, since a bound on the average objective value does not control the objective value after the final update. 

This paper studies last-iterate convergence of clipped SGD algorithm. We first establish a last-iterate guarantee for smooth convex objectives. Then, we investigate proximal updates for nonsmooth and smooth composite objectives, seperately. Finally, we  analyze the setting in which a number of samples can be replaced by a causal adversary. Table~\ref{tab:results} summarizes these bounds.

\begin{table}[t]
\centering\small
\caption{High-probability guarantees for the final update. Here $a=(p-1)/p$, $\ell\asymp\log(1/\delta)$.}\label{tab:results}
\begin{tabular}{@{}>{\raggedright\arraybackslash}p{.38\textwidth}p{.38\textwidth}p{.15\textwidth}@{}}
\toprule
Setting & Objective error & Oracle calls\\
\midrule
Smooth, Theorem~\ref{thm:main} & $LR^2\ell/T+\sigma R(\ell/T)^a$ & $T$\\[3pt]
Nonsmooth composite, Theorem~\ref{thm:nonsmooth} & $GR/\sqrt T+GR\ell/T+\sigma R(\ell/T)^a$ & $T$\\[3pt]
Smooth composite, Theorem~\ref{thm:composite} & $LR^2\ell/T+\sigma R(\ell/T)^a$ & $T+O(\ell)$\\[3pt]
$q$ replacements, Theorem~\ref{thm:corrupt} & $LR^2(\ell+q)/T+\sigma R((\ell+q)/T)^a$ & $T$\\
\bottomrule
\end{tabular}
\end{table}
\section{Theoretical guarantee}
We first introduce the assumptions used.
\begin{assumption}\label{ass:convex}
Function $f:\R^d\to\R$ is convex, and $r:\R^d\to\R\cup\{+\infty\}$ is proper, lower semicontinuous, and convex. There exists $x^\star\in\arg\min F$ and we denote $F^\star=F(x^\star)$. The initial point satisfies $x_1\in\dom r$ and $\norm{x_1-x^\star}\le R$, for a known $R>0$. 
\end{assumption}

\begin{assumption}\label{ass:noise}
For some $1<p\le2$ and $\sigma\ge0$, the vector $g_t$ satisfies, almost surely,
\begin{equation}\label{eq:oracle}
h_t:=\Et[g_t]\in\partial f(x_t),\qquad
\Et[\norm{g_t-h_t}^p]\le\sigma^p.
\end{equation}
\end{assumption}
\noindent When $f$ is differentiable, $h_t=\nabla f(x_t)$. 
\begin{assumption}\label{ass:smooth}
Function $f$ has an $L$-Lipschitz gradient:
\[
\norm{\nabla f(x)-\nabla f(y)}\le L\norm{x-y},\quad L>0,\quad x,y\in\R^d.
\]
\end{assumption}

\noindent For $\lambda>0$, define
\[
\clip(g,\lambda)=g\min\{1,\lambda/\norm g\},\qquad \clip(0,\lambda)=0.
\]
Clipped SGD performs one oracle call per iteration, as described in Algorithm~\ref{alg:clip}. Its output is $x_{T+1}$.
\begin{algorithm}[t]
\caption{Clipped SGD with linear decay}\label{alg:clip}
\begin{algorithmic}[1]
\Require Initial point $x_1$, iteration number $T$, clipping level $\lambda>0$, base learning rate $\eta>0$
\For{$t=1,\ldots,T$}
\State Query the stochastic gradient $g_t$ at $x_t$
\State Set $\eta_t=\eta(T-t+1)/T$ and $\widehat g_t=\clip(g_t,\lambda)$
\State Update $x_{t+1}=x_t-\eta_t\widehat g_t$
\EndFor
\State \Return $x_{T+1}$
\end{algorithmic}
\end{algorithm}
The learning rate decreases from $\eta$ to $\eta/T$. We use a fixed clipping level $\lambda$ throughout the run. The two terms in the $\lambda$ below have separate roles: $4LR$ ensures that the true gradient remains below the clipping level, while $\sigma(T/\ell)^{1/p}$ controls the accumulated clipping bias and noise.

\begin{theorem}\label{thm:main}
Under Assumptions~\ref{ass:convex}--\ref{ass:smooth} with $r=0$, fix an integer $T\ge1$ and $\delta\in(0,1)$. Run Algorithm~\ref{alg:clip} with
\begin{equation}\label{eq:parameters}
\ell=\log\frac8\delta,\qquad
\lambda=\max\left\{4LR,\sigma(T/\ell)^{1/p}\right\},\qquad
\eta=\frac{R}{128\lambda\ell}.
\end{equation}
With probability at least $1-\delta$,
\begin{align}
f(x_{T+1})-f(x^\star)
&\le\frac{256R\lambda\ell}{T}\label{eq:mainmax}\\
&\le1024\frac{LR^2\ell}{T}
+256\sigma R\left(\frac\ell T\right)^{(p-1)/p}.\label{eq:mainrate}
\end{align}
\end{theorem}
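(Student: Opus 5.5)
We plan to combine the last-iterate technique of Zamani--Glineur / Orabona (comparison against a sequence of shifting reference points $z_t$ that are convex combinations of $x^\star$ and past iterates) with the standard high-probability clipping decomposition of Gorbunov et al./Nguyen et al. Write $\widehat g_t=h_t+\theta_t^u+\theta_t^b$ with $\theta_t^u=\widehat g_t-\Et\widehat g_t$ (zero mean, $\norm{\theta_t^u}\le2\lambda$) and $\theta_t^b=\Et\widehat g_t-h_t$. On the event $\norm{h_t}\le\lambda/2$, the usual clipping lemmas give
\[
\norm{\theta_t^b}\le 2^p\sigma^p\lambda^{1-p},\qquad \Et\norm{\theta_t^u}^2\le 18\sigma^p\lambda^{2-p}.
\]
The choice $\lambda\ge4LR$ is exactly what makes $\norm{h_t}=\norm{\nabla f(x_t)}\le L\norm{x_t-x^\star}\le 2LR\le\lambda/2$, \emph{provided} $\norm{x_t-x^\star}\le 2R$. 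We therefore run an induction on the event $E_t=\{\norm{x_s-x^\star}\le2R,\ s\le t\}$ and work with truncated noise $\theta_t\ind_{E_t}$ so that martingale concentration is applicable.

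For the last-iterate inequality, we define weights from the linear decay: $z_t=(1-w_t)z_{t-1}+w_t x_t$ with $z_0=x^\star$, where $w_t$ is chosen so that the telescoping sum of one-step inequalities
\[
\norm{x_{t+1}-z_t}^2\le\norm{x_t-z_t}^2-2\eta_t\ip{\widehat g_t}{x_t-z_t}+\eta_t^2\norm{\widehat g_t}^2
\]
produces, after convexity and $L$-smoothness (using $\eta_t\le\eta\le 1/(128 L\ell\cdot 4)\ll1/L$ to absorb $\eta_t^2\norm{h_t}^2$ via $\norm{h_t}^2\le 2L(f(x_t)-f^\star)$), a bound of the form
\[
\Big(\textstyle\sum_t\eta_t\Big)\text{-weighted}\ \big(f(x_{T+1})-f^\star\big)\lesssim R^2+\sum_t\eta_t^2\lambda^2-2\sum_t\eta_t\ip{\theta_t^u+\theta_t^b}{x_t-z_t}.
\]
For linear decay $\eta_t=\eta(T-t+1)/T$ the Orabona-type weighting yields an effective denominator of order $\eta T$ (not $\eta T/\log T$, which is the point of linear decay), giving $f(x_{T+1})-f^\star\lesssim R^2/(\eta T)+\eta\lambda^2+\text{noise}/(\eta T)$. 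With $\eta=R/(128\lambda\ell)$ the first two terms give $R\lambda\ell/T$.

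The noise terms are then bounded on $E_t$. First, the bias term contributes $\sum_t\eta_t\norm{\theta_t^b}\cdot 4R\le 4R\eta T 2^p\sigma^p\lambda^{1-p}\le O(R\eta\lambda\ell)$, since $\sigma^p\le\lambda^p\ell/T$. Second, the martingale $\sum_t\eta_t\ip{\theta_t^u}{x_t-z_t}\ind_{E_t}$ has increments bounded by $\eta\cdot2\lambda\cdot4R$ and conditional variance sum at most $\eta^2 16R^2\cdot18T\sigma^p\lambda^{2-p}\le O(\eta^2R^2\lambda^2\ell)$; Freedman's inequality gives $O(\eta R\lambda\ell)$ with probability $1-\delta/4$. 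Third, the quadratic term $\sum\eta_t^2\norm{\theta_t^u}^2$ is split into its expectation plus a martingale and handled similarly. Each of these contributions is $O(R^2)$ after multiplying by $\eta=R/(128\lambda\ell)$, with constants small enough to be absorbed. The same telescoped inequality, applied with $z_t\equiv x^\star$ up to any time $t$, gives $\norm{x_{t+1}-x^\star}^2\le R^2+(\text{small})\le4R^2$, which closes the induction so that $E_{T+1}$ holds on the intersection of the concentration events. A union bound over four events explains the $\log(8/\delta)$. Finally, \eqref{eq:mainrate} follows from $\lambda\le4LR+\sigma(T/\ell)^{1/p}$.

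The main obstacle is the second step: making the shifting-reference-point argument compatible with the stochastic terms. The inner products now involve $x_t-z_t$ with $z_t$ depending on the iterates, so the increments must be predictable and bounded, requiring $\norm{z_t-x^\star}\le2R$ as well. The cleanest route is likely to fix the deterministic weights first, show $z_t\in\mathcal F_{t-1}$, and verify that the linear-decay weights sum to $\Theta(\eta T)$ without a log factor. Carefully tracking constants (the $128$ and $256$) through this step is the delicate part.
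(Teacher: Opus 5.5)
There is a genuine gap in the step you identify as the main obstacle, and localizing $z_t$ in the $2R$-ball does not close it. With unit weights, the comparator telescoping leaves at most $\eta_T=\eta/T$ in front of the last gap. The effective denominator $\Theta(\eta T)$ appears only after you multiply step $t$ by increasing weights $v_t$ with $v_t(1-w_t)^2\le v_{t-1}$; for linear decay these are $v_t=2/(k(k+1))$ with $k=T-t+1$. These weights multiply the noise as well. Normalize so that the initial term is $\norm{x_1-x^\star}^2/2$ and the left side is $\tfrac{\eta(T+1)}{2}(f(x_{T+1})-f^\star)$. Then the term $\ip{e_t}{z_t-x_t}$ carries the coefficient $\eta_tv_t/v_0=\eta(T+1)/(k+1)$, not $\eta_t$, and this is of order $\eta T$ for the final steps. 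If all you know is $\norm{x_t-z_t}\le4R$, Freedman's inequality gives an increment term of order $\eta TR\lambda\ell$, plus a variance term of order $\eta R\lambda\ell\sqrt T$. Dividing by $\eta(T+1)/2$ then yields only $f(x_{T+1})-f^\star=O(R\lambda\ell)$, a factor $T$ worse than the claimed $256R\lambda\ell/T$. Your increment bound $\eta\cdot2\lambda\cdot4R$ pairs the unit-weight noise coefficient with the amplified left side, and the two are inconsistent.

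The missing ingredient is a bound $\norm{z_{t-1}-x_t}=O(Rk/T)$: the comparator must approach the current iterate late in the run, and no ball localization provides this. The paper builds it into the induction. It takes $\tau$ to be the first $t$ with $\norm{x_t-x^\star}>2R$ or $P_{t-1}>4R^2/T^2$, where $P_{t-1}=v_{t-1}\norm{z_{t-1}-x_t}^2/2$. On $\{t<\tau\}$ the predictable coefficient $A_t=\eta_tv_{t-1}(z_{t-1}-x_t)$ satisfies $\norm{A_t}=\eta_t\sqrt{v_{t-1}}\sqrt{2P_{t-1}}\le4\eta R/T^2$, so Freedman is applied with increments $12\eta R\lambda/T^2$. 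The prefix inequality of Lemma~\ref{lem:prefix}, evaluated at every $s<\tau$, then gives $P_s+\eta_sv_s\Delta_{s+1}<2R^2/T^2$ (with $\Delta_t=f(x_t)-f^\star$). Together with the radius bound, this rules out $\tau\le T+1$.

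Two smaller points also need repair. First, absorbing $\eta_t^2\norm{h_t}^2$ through $\norm{h_t}^2\le2L(f(x_t)-f^\star)$ does not survive this weighting. After the rearrangement, an intermediate gap $\Delta_t$ keeps net weight only $\eta_tv_t-\eta_{t-1}v_{t-1}=2\eta/(T(k+1)(k+2))$, while the term to absorb is of order $L\eta^2/T^2$. Absorption therefore requires $T\gtrsim L\eta k^2$ and fails for early iterates; this form also bounds $f(x_T)$ rather than the output $f(x_{T+1})$. The paper instead uses the descent-lemma step of Lemma~\ref{lem:proxstep}, whose left side is $\eta_t(f(x_{t+1})-f(u))$ and whose right side contains only $\eta_t^2\norm{e_t}^2$ once $\eta_t\le1/(2L)$. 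Second, $\eta\lambda^2=R\lambda/(128\ell)$ is not $O(R\lambda\ell/T)$. The quadratic term must be controlled through $\sum_t\norm{e_t}^2\le7\lambda^2\ell$ (Lemma~\ref{lem:nonnegative}), which contributes only $O(\eta\lambda^2\ell/T)$.
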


For $p=2$, the dominant term in \eqref{eq:mainrate} is $O(\sigma R\sqrt{\log(1/\delta)/T})$. The result needs neither a bounded feasible set nor a uniform bound on $\norm{\nabla f(x)}$. Proposition~\ref{prop:lower} gives the corresponding lower bound.

\section{Clipped SGD with proximal updates}\label{sec:extensions}
\subsection{Nonsmooth composite objectives}\label{sec:nonsmooth}
We now replace smoothness by the following condition.
\begin{assumption}[Lipschitz continuity]\label{ass:lipschitz}
For a known $G\ge0$, $|f(x)-f(y)|\le G\norm{x-y}$ for all $x,y\in\R^d$.
\end{assumption}
Every subgradient of $f$ then has norm at most $G$. The regularizer $r$ need not be Lipschitz. We assume exact access to
\[
\prox_{\alpha r}(y)=\arg\min_{u\in\R^d}
\left\{r(u)+\frac{\norm{u-y}^2}{2\alpha}\right\},\qquad \alpha>0.
\]
We use the proximal clipped update
\begin{equation}\label{eq:nonsmoothupdate}
x_{t+1}=\prox_{\eta_t r}\bigl(x_t-\eta_t\clip(g_t,\lambda)\bigr),
\qquad \eta_t=\eta\frac{T-t+1}{T}.
\end{equation}

\begin{theorem}\label{thm:nonsmooth}
Under Assumptions~\ref{ass:convex}, \ref{ass:noise}, and \ref{ass:lipschitz}, suppose $G+\sigma>0$. For an integer $T\ge1$ and $\delta\in(0,1)$, set
\begin{equation}\label{eq:nsparameters}
\ell=\log\frac4\delta,\quad
\lambda=\max\{2G,\sigma(T/\ell)^{1/p}\},\quad
H=G\sqrt T+\lambda\ell,\quad \eta=\frac{R}{128H}.
\end{equation}
After $T$ updates of \eqref{eq:nonsmoothupdate}, with probability at least $1-\delta$,
\begin{align}
F(x_{T+1})-F^\star
&\le\frac{256RH}{T}\label{eq:nsmax}\\
&\le256\frac{GR}{\sqrt T}+512\frac{GR\ell}{T}
+256\sigma R(\ell/T)^{(p-1)/p}.\label{eq:nsrate}
\end{align}
If $T\ge\ell^2$, the right-hand side is at most
$768GR/\sqrt T+256\sigma R(\ell/T)^{(p-1)/p}$.
\end{theorem}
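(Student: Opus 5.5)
The approach is a last-iterate analysis of the proximal clipped update via a weighted ``comparator sequence'' argument, in the style of Zamani--Glineur / Liu--Zhou last-iterate proofs, with the linear-decay weights doing the bookkeeping. The high-probability control of the clipping bias and noise follows the pattern of Theorem~\ref{thm:main}.

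The starting point is the one-step inequality for the prox step. For any $z\in\dom r$, by the three-point property of $\prox_{\eta_t r}$ and convexity of $f$ and $r$,
\[
\eta_t\bigl(F(x_{t+1})-F(z)\bigr)\le \tfrac12\norm{x_t-z}^2-\tfrac12\norm{x_{t+1}-z}^2+\eta_t\ip{h_t-\widehat g_t}{x_t-z}+\eta_t\bigl(f(x_{t+1})-f(x_t)-\ip{h_t}{x_{t+1}-x_t}\bigr)-\tfrac12\norm{x_{t+1}-x_t}^2 .
\]
Using Assumption~\ref{ass:lipschitz}, the bracket is at most $2G\norm{x_{t+1}-x_t}$. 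Absorbing with the negative square then gives an error $2\eta_t^2G^2$. The clipped term $\widehat g_t$ never needs to be bounded here, which is exactly why only $G\sqrt T$ enters $H$.

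Next, following the last-iterate technique, I would apply this inequality with the moving comparator
\[
z_t=\Bigl(1-\tfrac{w_{t-1}}{w_t}\Bigr)x^\star+\tfrac{w_{t-1}}{w_t}z_{t-1},
\]
a convex combination of $x^\star$ and earlier iterates, where the weights $w_t$ are chosen so that for $\eta_t=\eta(T-t+1)/T$ the telescoping closes. Concretely, $w_t\eta_t$ should be nonincreasing with $\sum_t$ of the weights giving $F(x_{T+1})-F^\star$ on the left with coefficient $\asymp\eta$. Linear decay is precisely the schedule for which the leftover sum $\sum_t w_t\eta_t^2G^2/(\sum w\eta)$ is $O(\eta G^2\cdot T\cdot\ldots)$, producing the $GR/\sqrt T$ term after choosing $\eta=R/(128H)$. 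Convexity of $F$ handles $F(z_t)$, and $\norm{z_t-x^\star}$, $\norm{x_t-z_t}$ stay bounded by $O(R)$ on the good event.

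The stochastic part is $\sum_t \eta_t w_t\ip{h_t-\widehat g_t}{x_t-z_t}$. I would split $h_t-\widehat g_t=(h_t-\E_t\widehat g_t)+(\E_t\widehat g_t-\widehat g_t)$. Since $\norm{h_t}\le G\le\lambda/2$, the standard clipping lemmas give a bias bounded by $\sigma^p\lambda^{1-p}$, a conditional second moment bounded by $\sigma^p\lambda^{2-p}$, and a martingale difference term bounded by $2\lambda$. The martingale sum is controlled with Freedman's inequality with failure probability $\delta/2$; the sum of squared noise terms is controlled with a second Bernstein-type bound, taking another $\delta/2$, which accounts for $\ell=\log(4/\delta)$. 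All of this runs inside an induction on $t$ proving $\norm{x_s-x^\star}\le 2R$ (and the analogous bound for $z_s$) on the good event, using a stopping-time truncation: replace $x_t-z_t$ by $(x_t-z_t)\ind\{\text{bound holds up to }t\}$, so that the increments are bounded a priori. With $\lambda\ge\sigma(T/\ell)^{1/p}$, the bias and variance contributions are $\lesssim \eta R\lambda\ell$ and the Freedman term is $\lesssim\eta R\lambda\ell$, so every error is at most a small constant times $R^2$, which is what $\eta=R/(128H)$ buys. Dividing by the total weight $\asymp\eta T$ yields $256RH/T$. Expanding $H$ and $\lambda\le 2G+\sigma(T/\ell)^{1/p}$ gives \eqref{eq:nsrate}, and $T\ge\ell^2$ gives $GR\ell/T\le GR/\sqrt T$.

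The main obstacle is making the induction on boundedness of $x_t$ compatible with the moving comparator $z_t$ and the weights. The concentration must be applied to the weighted sum with a deterministic bound on the predictable coefficients $\eta_tw_t\norm{x_t-z_t}$, and these depend on the iterates. I would first try to prove the simpler statement $\max_{t\le T+1}\norm{x_t-x^\star}\le 2R$ with $z=x^\star$ using the same concentration event, and only then run the weighted last-iterate sum on that event.
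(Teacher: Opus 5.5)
Your deterministic skeleton (prox one-step inequality, moving comparator, convexity for $F(z_t)$, the clipping lemma via $\norm{h_t}\le G\le\lambda/2$) is essentially the paper's, modulo two slips noted at the end. The genuine gap is your plan for the predictable coefficients. After telescoping, the noise enters as $\sum_t\ip{\xi_t}{A_t}$ with $A_t=\eta_tv_{t-1}(z_{t-1}-x_t)$. The left side carries $\Delta_{T+1}$ only with coefficient $\eta_Tv_T=\eta/T$, so every error term must be $O(R^2/T^2)$, the size of $P_0$; here $\eta R\lambda\ell\le R^2/128$. Suppose you only know $\norm{x_t-x^\star}\le2R$, hence $\norm{z_{t-1}-x_t}\le4R$. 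Since $\eta_Tv_{T-1}=\eta/(3T)$, this allows $\norm{A_T}$ as large as $4\eta R/(3T)$. Freedman's range term is then of order $\eta R\lambda\ell/T$ and its variance term of order $\eta R\lambda\ell\,T^{-3/2}$. These are factors of order $T$ and $\sqrt T$ above the paper's $O(\eta R\lambda\ell/T^2)$, and the bias sum $\sum_t\norm{b_t}\norm{A_t}$ also picks up a $\log T$. After dividing by $\eta/T$ the range term alone leaves $\Delta_{T+1}\lesssim R\lambda\ell$, with no decay in $T$. What is actually needed is $\norm{z_{t-1}-x_t}=O(Rk/T)$ with $k=T-t+1$, and no bound on $\norm{x_t-x^\star}$ supplies that.

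The missing idea is that the potential $P_s=v_s\norm{z_s-x_{s+1}}^2/2$ sits on the left side of the prefix inequality \eqref{eq:prefix} for \emph{every} $s$, so it can serve as its own localization. The paper stops at the first $t$ with $P_{t-1}>4R^2/T^2$. No distance condition is needed in this theorem, because the clipping lemma holds at every iterate. This stopping rule gives the predictable bound $\norm{A_t}=\eta_t\sqrt{v_{t-1}}\sqrt{2P_{t-1}}\le4\eta R/T^2$. Freedman and the bias bound then contribute only $40R^2/(128T^2)$. The same prefix inequality gives $P_s<2R^2/T^2$ for $s<\tau$, so $\tau=\infty$ on the good event.

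Two smaller slips. First, your displayed one-step inequality drops $\eta_t\ip{h_t-\widehat g_t}{x_{t+1}-x_t}$; absorbing it produces the $\eta_t^2\norm{e_t}^2$ term that your ``sum of squared noise terms'' later needs. Second, the comparator must mix in the current iterate: $z_t=\frac{v_{t-1}}{v_t}z_{t-1}+\bigl(1-\frac{v_{t-1}}{v_t}\bigr)x_t$ with $z_0=x^\star$ and increasing $v_t=\eta_T/\sum_{j\ge t}\eta_j$. With these weights $\eta_tv_t=2\eta/(T(T-t+2))$ increases rather than decreases. Your recursion, started at $x^\star$, never leaves $x^\star$.
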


\subsection{Smooth composite objectives}\label{sec:composite}
At $x_1$, query $m=\lceil32\log(2/\delta)\rceil$ pilot gradients $g^{(0)}_1,\ldots,g^{(0)}_m$, and let
\begin{equation}\label{eq:anchor}
c\in\arg\min_{z\in\R^d}\sum_{j=1}^m\norm{z-g^{(0)}_j}.
\end{equation}
We use a measurable choice of this geometric median. The pilot gradients satisfy the conditional oracle assumption at $x_1$, and the main-stage filtration includes all pilot data. The updates are
\begin{equation}\label{eq:compositeupdate}
\widehat g_t=c+\clip(g_t-c,\lambda),\qquad
x_{t+1}=\prox_{\eta_t r}(x_t-\eta_t\widehat g_t).
\end{equation}

\Needspace{17\baselineskip}
\begin{theorem}\label{thm:composite}
Under Assumptions~\ref{ass:convex}--\ref{ass:smooth}, fix $\delta\in(0,1)$ and construct $c$ by \eqref{eq:anchor}. For an integer $T\ge1$, choose
\begin{equation}\label{eq:compositeparameters}
\ell=\log\frac{16}\delta,\quad
\lambda=\max\{6LR+48\sigma,\sigma(T/\ell)^{1/p}\},\quad
\eta_t=\frac{R}{128\lambda\ell}\frac{T-t+1}{T}.
\end{equation}
After $T$ updates of \eqref{eq:compositeupdate}, with probability at least $1-\delta$,
\begin{equation}\label{eq:compositerate}
F(x_{T+1})-F^\star\le\frac{256R\lambda\ell}{T},\qquad
\max_{1\le t\le T+1}\norm{x_t-x^\star}\le2R.
\end{equation}
When $T\ge\ell$,
\begin{equation}\label{eq:compositesimplified}
F(x_{T+1})-F^\star
\le1536\frac{LR^2\ell}{T}+12544\sigma R(\ell/T)^{(p-1)/p}.
\end{equation}
The total number of stochastic-gradient calls is $N=T+m$.
\end{theorem}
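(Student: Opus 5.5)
The proof of Theorem~\ref{thm:composite} splits into a pilot stage and a main stage, which we join with a union bound. In the pilot stage we show that, with probability at least $1-\delta/2$, the geometric median $c$ satisfies $\norm{c-\nabla f(x_1)}\le 48\sigma$. For this we apply Markov's inequality to $\norm{g^{(0)}_j-\nabla f(x_1)}^p$, which makes each pilot gradient fall within $C\sigma$ of $\nabla f(x_1)$ with probability at least $3/4$ (for a suitable absolute $C$). The standard geometric-median lemma then applies: if more than half of the points lie in a ball of radius $\rho$, the median lies within a constant multiple of $\rho$ of its center. A Chernoff bound with $m=\lceil 32\log(2/\delta)\rceil$ shows that more than half the points are good, and this gives the constant $48$. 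In the main stage we then argue as in Theorem~\ref{thm:main}, with the clipping recentered at $c$. On the event $\max_t\norm{x_t-x^\star}\le 2R$ we have
\[
\norm{\nabla f(x_t)-c}\le L\norm{x_t-x_1}+48\sigma\le 3LR+48\sigma\le\lambda/2 .
\]
This is the analogue of $\norm{\nabla f(x_t)}\le 2LR\le\lambda/2$ in the smooth case, and it is what the term $6LR+48\sigma$ in $\lambda$ is for. Note that we cannot use $\nabla f(x^\star)=0$ here, because $r\neq0$. This is exactly why we center at $c\approx\nabla f(x_1)$ instead of $0$.

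For the main-stage recursion we use the prox inequality for $x_{t+1}=\prox_{\eta_t r}(x_t-\eta_t\widehat g_t)$ against an arbitrary $u$, combined with smoothness and convexity:
\[
\eta_t\bigl(F(x_{t+1})-F(u)\bigr)\le \tfrac12\norm{x_t-u}^2-\tfrac12\norm{x_{t+1}-u}^2-\tfrac12(1-\eta_tL)\norm{x_{t+1}-x_t}^2+\eta_t\ip{\nabla f(x_t)-\widehat g_t}{x_{t+1}-u}.
\]
Because $\eta\le R/(128\cdot 6LR\ell)$, the quadratic term has a good sign. We split the error as $\nabla f(x_t)-\widehat g_t=-(\theta^u_t+\theta^b_t)$. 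Here $\theta^u_t=\widehat g_t-\Et\widehat g_t$ is the unbiased part, bounded by $2\lambda$ and with $\Et\norm{\theta^u_t}^2\le 4\lambda^{2-p}\sigma^p$. The bias $\theta^b_t$ satisfies $\norm{\theta^b_t}\le 2^p\sigma^p\lambda^{1-p}$, since the centered gradient is at most $\lambda/2$. The piece $\ip{\theta}{x_{t+1}-x_t}$ is absorbed by Young's inequality into the negative quadratic term, which leaves $\eta_t^2\norm{\theta}^2$-type sums.

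For the last-iterate conversion we reuse the device from Theorem~\ref{thm:main} for linear decay, namely the Orabona/Zamani--Glineur comparison-sequence trick. We take $u=z_t$, a convex combination of $x^\star$ and the later iterates, with weights chosen so that the linear stepsizes telescope to $\eta_T\cdot(F(x_{T+1})-F^\star)$ up to a factor. Summing gives
\[
\frac{\eta}{T}\cdot\frac{T}{2}\bigl(F(x_{T+1})-F^\star\bigr)\lesssim R^2+\text{(martingale)}+\text{(bias)}+\text{(variance)} .
\]
The remaining terms are bounded as follows. The martingale $\sum\eta_t\ip{\theta^u_t}{x_t-z_t}$ is handled by Freedman's inequality with increments bounded by $\eta\lambda\cdot 4R$. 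The variance sum $\sum\eta_t^2\norm{\theta^u_t}^2$ is handled by a Bernstein-type bound. Each of the resulting high-probability bounds is at most $R^2/8$, because $\lambda\ge\sigma(T/\ell)^{1/p}$ gives $T\sigma^p\lambda^{-p}\le\ell$ and $\eta=R/(128\lambda\ell)$. The bias sum $\sum\eta_t\cdot 2^p\sigma^p\lambda^{1-p}\cdot 4R$ is bounded by the same argument. Dividing by $\eta$ then yields $256R\lambda\ell/T$.

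The step I expect to be the main obstacle is making $\max_t\norm{x_t-x^\star}\le2R$ rigorous simultaneously with the concentration bounds. We will run an induction with stopped processes: truncate the increments by the indicator $\ind\{\norm{x_s-x^\star}\le 2R\ \forall s\le t\}$, apply Freedman's inequality once to the truncated martingale, and show deterministically on the good event that the distance recursion, taken with $u=x^\star$ and using $F(x_{t+1})\ge F^\star$, keeps the iterates in the ball. Finally, for \eqref{eq:compositesimplified} we substitute $\lambda$ and use $T\ge\ell$ to bound the $48\sigma$ term by $48\sigma(T/\ell)^{1/p}$. This gives $256\cdot(6+49)$-type constants; the stated $12544=256\cdot 49$.
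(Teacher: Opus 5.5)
There is a genuine gap in your last-iterate step. The comparison-sequence argument telescopes only after the step-$t$ inequality with $u=z_t$ is multiplied by the increasing weight $v_t=2/((T-t+1)(T-t+2))$ of \eqref{eq:potential}. The noise term that actually appears is therefore $\sum_t\ip{\xi_t}{A_t}$ with $A_t=\eta_tv_{t-1}(z_{t-1}-x_t)$, not the unweighted $\sum_t\eta_t\ip{\theta^u_t}{x_t-z_t}$. Work in the normalization where the left side is $P_s+\eta_sv_s\Delta_{s+1}$ and the target is $O(R^2/T^2)$. There $\eta_Tv_{T-1}=\eta/(3T)$. Your stopping rule on $\norm{x_s-x^\star}\le2R$ only provides $\norm{z_{T-1}-x_T}\le4R$. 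The increment bound for $\ip{\xi_T}{A_T}$ is then $2\lambda\cdot 4\eta R/(3T)=R^2/(48\ell T)$, a factor of order $T/\ell$ above the target. Freedman would then only give $F(x_{T+1})-F^\star\lesssim R^2/\eta$, with no decay in $T$. Your claimed increment bound ``$\eta\lambda\cdot4R$'' (in the $R^2$-normalization) is equivalent to the much finer localization $\norm{z_{t-1}-x_t}\lesssim R(T-t+1)/T$, and nothing in your proposal establishes it. The paper gets this by adding the comparison potential to the stopping rule: $\tau$ also fires when $P_{t-1}=v_{t-1}\norm{z_{t-1}-x_t}^2/2>4R^2/T^2$. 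Hence on $\{t<\tau\}$, $\norm{A_t}=\eta_t\sqrt{v_{t-1}}\sqrt{2P_{t-1}}\le4\eta R/T^2$ uniformly in $t$. The loop closes because the prefix inequality of Lemma~\ref{lem:prefix} bounds $P_s$ itself by $2R^2/T^2$ on the good event, which contradicts stopping. (Separately, your displayed factor $\frac{\eta}{T}\cdot\frac{T}{2}$ should be $\frac{\eta}{T}\cdot\frac{T(T+1)}{2}$; as written, dividing by $\eta$ would not produce the factor $1/T$.)

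Second, your anchor accuracy is too weak for the stated $\lambda$. The choice $\lambda\ge6LR+48\sigma$ only gives $\lambda/2\ge3LR+24\sigma$. So your inequality $3LR+48\sigma\le\lambda/2$ is false in general, e.g.\ whenever the maximum defining $\lambda$ equals $6LR+48\sigma$ and $\sigma>0$. Then Lemma~\ref{lem:clip} no longer applies. Any substitute bias bound depends on the margin $\lambda-\norm{\nabla f(x_t)-c}$, which can be as small as $3LR\ll\sigma$. You need $\norm{c-\nabla f(x_1)}\le24\sigma$, which is what the paper proves in three steps. First, Markov at radius $8\sigma$ bounds the conditional probability of a bad pilot sample by $8^{-p}\le1/8$. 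Second, the supermartingale bound $\E\,2^{\sum_jJ_j}\le(9/8)^m$, with $J_j$ the bad-sample indicators, shows at most $m/4$ bad samples with probability at least $1-\delta/2$. Third, the geometric-median inequality gives $\norm{c-\nabla f(x_1)}\le16\sigma|\mathcal G|/(2|\mathcal G|-m)\le24\sigma$, with $\mathcal G$ the set of good samples. Note also that ``more than half good'' is not enough: the constant $2\alpha/(2\alpha-1)$ blows up as the good fraction $\alpha$ decreases to $1/2$.
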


\section{Adversarially corrupted gradients}\label{sec:corrupt}
Return to the smooth setting with $r=0$. After $x_t$ is selected, the oracle generates a latent clean gradient $g_t$ satisfying Assumption~\ref{ass:noise}. A causal adversary may observe $g_t$ and replace it by any vector $\widetilde g_t$. Let
\begin{equation}\label{eq:corruptmodel}
C_t=\ind_{\{\widetilde g_t\ne g_t\}},\qquad
\sum_{t=1}^T C_t\le q\quad\text{almost surely},
\end{equation}
where $q\in\{0,\ldots,T\}$ is a known upper bound. The adversary can depend on the past and the current clean sample, but not on future clean randomness. The filtration in Assumption~\ref{ass:noise} contains all past latent samples and adversarial decisions. The algorithm observes only $\widetilde g_t$.

\begin{theorem}\label{thm:corrupt}
Under Assumptions~\ref{ass:convex}--\ref{ass:smooth} with $r=0$ and the corruption model \eqref{eq:corruptmodel}, fix an integer $T\ge1$ and $\delta\in(0,1)$. Set
\begin{equation}\label{eq:corruptparameters}
\ell=\log\frac8\delta,\quad \Gamma=\ell+q,\quad
\lambda=\max\{4LR,\sigma(T/\Gamma)^{1/p}\},\quad
\eta=\frac{R}{128\lambda\Gamma}.
\end{equation}
Run Algorithm~\ref{alg:clip} using $\widetilde g_t$ in place of $g_t$. With probability at least $1-\delta$,
\begin{align}
f(x_{T+1})-f(x^\star)
&\le\frac{256R\lambda\Gamma}{T}\label{eq:corruptmax}\\
&\le1024LR^2\frac{\ell+q}{T}
+256\sigma R\left(\frac{\ell+q}{T}\right)^{(p-1)/p}.\label{eq:corruptrate}
\end{align}
On the same event, $\norm{x_t-x^\star}\le2R$ for $1\le t\le T+1$.
\end{theorem}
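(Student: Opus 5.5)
The plan is to rerun the proof of Theorem~\ref{thm:main}, with $\ell$ replaced by $\Gamma=\ell+q$ in every place the parameters enter, and to treat the corrupted steps as an additional bounded error. At each step we write
\[
\widehat g_t=\clip(\widetilde g_t,\lambda)=\clip(g_t,\lambda)+e_t,\qquad e_t:=C_t\bigl(\clip(\widetilde g_t,\lambda)-\clip(g_t,\lambda)\bigr),
\]
so that $\norm{e_t}\le2\lambda C_t$ and $\sum_t\norm{e_t}\le2\lambda q$ almost surely. The clean part splits as in the smooth analysis: $\clip(g_t,\lambda)=h_t+\theta^u_t+\theta^b_t$, where $\theta^u_t$ is a martingale difference bounded by $2\lambda$ and $\theta^b_t$ is the clipping bias. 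On the event $\norm{x_t-x^\star}\le2R$ we have $\norm{h_t}\le2LR\le\lambda/2$, and the standard estimates then give $\norm{\theta^b_t}\le2^p\sigma^p\lambda^{1-p}$ and $\Et\norm{\theta^u_t}^2\le O(\sigma^p\lambda^{2-p})$. The latent samples $g_t$ satisfy Assumption~\ref{ass:noise} with respect to the enlarged filtration, so every concentration step in the clean proof still applies verbatim.

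The core of the argument is the last-iterate decomposition for linear decay that underlies Theorem~\ref{thm:main}. It compares $x_{T+1}$ with the moving comparators $z_t=(1-w_t)x^\star+w_tx_t$, where the weights are adapted to $\eta_t\propto T-t+1$. The output is a deterministic inequality of the form
\[
f(x_{T+1})-f^\star\lesssim\frac{R^2}{\eta T}+\frac{1}{T}\sum_t v_t\bigl(\eta_t\ip{\theta^u_t+\theta^b_t+e_t}{x^\star_t-x_t}+\eta_t^2\norm{\widehat g_t}^2\bigr),
\]
with bounded weights $v_t$ and points $x^\star_t$ that are convex combinations of $x^\star$ and earlier iterates. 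Smoothness controls $\norm{\widehat g_t}^2$ through $\lambda^2$. The new error term is bounded crudely using the distance bound $\norm{x^\star_t-x_t}\le4R$:
\[
\sum_t\eta_t\bigl|\ip{e_t}{x^\star_t-x_t}\bigr|\le\eta\cdot2\lambda q\cdot4R=\frac{8R^2q}{128\,\Gamma}\le\frac{R^2}{16}.
\]
Hence it costs at most a constant fraction of $R^2$, which is the same budget that the $\ell$-terms consume. The clean terms are handled exactly as before. Freedman's inequality applied to $\sum\eta_t\ip{\theta^u_t}{\cdot}$ gives a bound $\lesssim\eta\lambda R\ell+\eta R\sqrt{T\sigma^p\lambda^{2-p}\ell}$. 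The bias sum is at most $\eta RT\sigma^p\lambda^{1-p}$. A second concentration handles $\sum\eta_t^2\norm{\theta^u_t}^2$. Because $\lambda\ge\sigma(T/\Gamma)^{1/p}$, we have $T\sigma^p\lambda^{-p}\le\Gamma$, and with $\eta=R/(128\lambda\Gamma)$ each term is $\le cR^2$ with $\Gamma\ge\ell$.

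The step I expect to be the main obstacle is the localization $\norm{x_t-x^\star}\le2R$ for all $t$. It has to be proved by induction on a stopped process, since both the bound $\norm{h_t}\le\lambda/2$ and the bound on $e_t$ inside the inner products use it. I would mirror the argument of Theorem~\ref{thm:main} with the stopping time $\tau=\min\{t:\norm{x_t-x^\star}>2R\}$. Expanding $\norm{x_{t+1}-x^\star}^2$ gives a telescoping sum. The adversarial contribution is at most $2\eta\cdot2\lambda q\cdot2R\le R^2/16$, and the quadratic term is $\eta^2\lambda^2T\le R^2T/(128^2\Gamma^2)$. This quadratic bound is not automatically small, so I would instead control it as in the clean proof, via $\norm{\widehat g_t}^2\le2\norm{h_t}^2+\dots$ and the smoothness inequality $\norm{h_t}^2\le2L(f(x_t)-f^\star)$, which can be absorbed into the descent term. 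The martingale terms are controlled on a single event of probability $1-\delta/2$ that is shared with the objective bound; union bounds over the four concentration events explain the choice $\ell=\log(8/\delta)$. On that event the induction closes, giving both the distance claim and the bound~\eqref{eq:corruptmax}, namely $256R\lambda\Gamma/T$.

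Finally, \eqref{eq:corruptrate} follows from $\lambda\le4LR+\sigma(T/\Gamma)^{1/p}$, which gives
\[
\frac{256R\lambda\Gamma}{T}\le1024\frac{LR^2\Gamma}{T}+256\sigma R\Bigl(\frac{\Gamma}{T}\Bigr)^{(p-1)/p}.
\]
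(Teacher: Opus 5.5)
Your localization step is sound: the replacement cross term costs at most $8\eta\lambda qR\le R^2/16$ in the distance recursion, and co-coercivity removes $\norm{h_t}^2$ as in Lemma~\ref{lem:radius}. The objective bound, however, has a genuine gap. The last-iterate inequality for the linear schedule does not put bounded weights on the error terms.

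In the decomposition of Lemma~\ref{lem:prefix}, the comparator $z_{t-1}$ is indeed a convex combination of $x^\star$ and earlier iterates. The error at step $t$ enters through $\ip{e_t}{A_t}$, where $A_t=\eta_tv_{t-1}(z_{t-1}-x_t)$, $v_{t-1}=2/((k+1)(k+2))$ and $k=T-t+1$. After dividing by $\eta_Tv_T=\eta/T$, the coefficient of $e_t$ in the bound on $f(x_{T+1})-f^\star$ is $kv_{t-1}(z_{t-1}-x_t)$. Its size is up to $(2/k)\norm{z_{t-1}-x_t}$. If you only know $\norm{z_{t-1}-x_t}\le4R$, a single replacement at $t=T$ already adds $\tfrac13\cdot2\lambda\cdot4R=8R\lambda/3$ to the bound, with no decay in $T$.

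The same loss breaks the Freedman step for $\sum_t\ip{\xi_t}{A_t}$. The increment bound at $t=T$ becomes $4\eta R\lambda/T$ instead of $O(\eta R\lambda/T^2)$. This contributes about $8R\lambda\ell$ to the final bound. Your estimate $\sum_t\eta_t|\ip{e_t}{x^\star_t-x_t}|\le R^2/16$ yields an $O(R^2/(\eta T))$ cost only if every weight is $O(1/(\eta T))$, and the decomposition does not give that for late steps.

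The missing idea is a proof that the comparator is close to the current iterate, namely $\norm{z_{t-1}-x_t}=O(R(T-t+1)/T)$. This does not follow from $\norm{x_j-x^\star}\le2R$; it has to be bootstrapped from the potential itself. The paper adds the condition $P_{t-1}>4R^2/T^2$ to the stopping rule. Then on $\{t<\tau\}$ one has $\norm{A_t}=\eta_t\sqrt{v_{t-1}}\sqrt{2P_{t-1}}\le4\eta R/T^2$, which gives three consequences:
\begin{itemize}
\item The replacement and bias terms cost at most $\sum_tI_t(\norm{b_t}+\norm{d_t})\norm{A_t}\le(4\eta R/T^2)\cdot4\lambda\Gamma=16R^2/(128\,T^2)$.
\item Freedman's inequality applies to $\sum_tI_t\ip{\xi_t}{A_t}$ with increments at most $12\eta R\lambda/T^2$.
\item The prefix inequality gives $P_s+\eta_sv_s\Delta_{s+1}<2R^2/T^2$ for every $s<\tau$.
\end{itemize}
This closes both stopping conditions at once, and at $s=T$ it yields $256R\lambda\Gamma/T$. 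With your stopping time $\tau=\min\{t:\norm{x_t-x^\star}>2R\}$, the objective bound cannot be closed.
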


If $q\le\varepsilon T$, the two terms in \eqref{eq:corruptrate} have orders $LR^2(\ell/T+\varepsilon)$ and $\sigma R(\ell/T+\varepsilon)^{(p-1)/p}$. The model allows the adversary to select samples after inspecting them.

\begin{corollary}\label{cor:budget}
In Theorem~\ref{thm:corrupt}, replace the uniform moment condition by
$\Et\norm{g_t-\nabla f(x_t)}^p\le\nu_t$, where $\nu_t\ge0$ is $\F_{t-1}$-measurable and $\sum_{t=1}^T\nu_t\le S$ almost surely for a known deterministic $S\ge0$. With $\lambda=\max\{4LR,(S/\Gamma)^{1/p}\}$ and the same learning-rate formula,
\[
f(x_{T+1})-f(x^\star)
\le1024LR^2\frac\Gamma T+256R\frac{S^{1/p}\Gamma^{(p-1)/p}}T
\]
with probability at least $1-\delta$.
\end{corollary}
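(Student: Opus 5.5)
The plan is to rerun the proof of Theorem~\ref{thm:corrupt} with the uniform moment bound replaced by the predictable bounds $\nu_t$, tracking where $\sigma^p$ enters. In the clean-noise analysis, $\sigma$ appears only through per-step quantities, each summed over $t$ against a weight at most $\eta$. Write $\xi_t=\clip(g_t,\lambda)-\nabla f(x_t)$ and work on the event $\norm{x_t-x^\star}\le 2R$, where $\norm{\nabla f(x_t)}\le 2LR\le\lambda/2$. The standard clipping lemmas then give, conditionally on $\F_{t-1}$:
\begin{itemize}
\item the bias bound $\norm{\Et\xi_t^u}\le 2^p\sigma^p\lambda^{1-p}$;
\item the variance bound $\Et\norm{\xi_t^u}^2\le 4\sigma^p\lambda^{2-p}$;
\item the count of clipping events $\Prob_t(\norm{g_t-\nabla f(x_t)}>\lambda/2)\le 2^p\sigma^p\lambda^{-p}$.
\end{itemize}
Each of these holds verbatim with $\sigma^p$ replaced by $\nu_t$, because Markov/Hölder are applied conditionally on $\F_{t-1}$ and $\nu_t$ is $\F_{t-1}$-measurable.

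Next I will check each place where these per-step bounds are summed. In the proof of Theorem~\ref{thm:corrupt} they are summed over $t\le T$, producing terms $T\sigma^p\lambda^{1-p}$, $T\sigma^p\lambda^{2-p}$ and $T\sigma^p\lambda^{-p}$. The choice $\lambda\ge\sigma(T/\Gamma)^{1/p}$ turns these into at most $\Gamma\lambda$, $\Gamma\lambda^2$ and $\Gamma$. Now the sums become $\sum_t\nu_t\lambda^{1-p}\le S\lambda^{1-p}$, and similarly for the other two, since the deterministic weights (linear-decay factors, bounded by $1$) can be pulled out and the bounds are used inside the conditional expectations. The choice $\lambda\ge(S/\Gamma)^{1/p}$ gives $S\le\Gamma\lambda^p$, which yields exactly the same bounds $\Gamma\lambda$, $\Gamma\lambda^2$, $\Gamma$. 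The concentration steps use Freedman/Bernstein with a variance proxy that is a sum of conditional variances; that proxy is now $\sum_t 4\nu_t\lambda^{2-p}\le4S\lambda^{2-p}$, still deterministic, so the martingale inequalities apply unchanged. The number of corrupted steps and large-noise steps is bounded as before by $q+O(\ell)$.

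The main obstacle is verifying that no step of the earlier proof used $\sigma^p$ pointwise in a non-summed way, for instance a per-step requirement such as ``$\lambda\ge\sigma$''. Nothing is lost here, since every pointwise use is covered by $\nu_t\le\sum_s\nu_s\le S\le\Gamma\lambda^p$, and $\Gamma\ge\ell\ge1$. I also need the sums to be controlled almost surely rather than in expectation. The hypothesis $\sum_t\nu_t\le S$ a.s. is exactly what the Freedman-type bound requires, namely a deterministic bound on the predictable quadratic variation. With these substitutions the conclusion \eqref{eq:corruptmax} follows: $f(x_{T+1})-f(x^\star)\le256R\lambda\Gamma/T$.

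Finally, I bound $\lambda\le4LR+(S/\Gamma)^{1/p}$ and plug in:
\[
\frac{256R\lambda\Gamma}{T}\le\frac{1024LR^2\Gamma}{T}+256R\frac{S^{1/p}\Gamma^{1-1/p}}{T},
\]
which is the claimed bound since $1-1/p=(p-1)/p$.
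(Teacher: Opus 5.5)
Your proposal is correct and is essentially the paper's own argument. You substitute the predictable $\nu_t$ for $\sigma^p$ in the conditional clipping bounds, use $\sum_t\nu_t\le S\le\lambda^p\Gamma$ almost surely to recover the budgets $\sum_t I_tB_t\le2\lambda\Gamma$ and $\sum_t I_tV_t\le2\lambda^2\Gamma$, and observe that these deterministic almost-sure budgets are exactly what Lemmas~\ref{lem:nonnegative} and~\ref{lem:freedman} require, so the stopping argument and the bound $256R\lambda\Gamma/T$ go through unchanged. Two minor remarks: the clipping-event count you list is never used, since the paper bounds $\sum_t I_t\norm{e_t}^2$ directly through Lemma~\ref{lem:nonnegative}, and your slightly looser clipping constants still close the stopping argument with the same final constant.
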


\section{Statistical lower bounds}\label{sec:lower}
We give three two-instance constructions to clarify the statistical terms. These are oracle lower bounds: an algorithm may make adaptive stochastic-gradient queries and randomize its output, but it does not have an exact function-value oracle. The proximal oracle in the nonsmooth construction is the same for both instances. The constructions use rare informative samples, as in standard lower-bound arguments.

\begin{proposition}[Clean smooth oracle]\label{prop:lower}
Fix $L,R,\sigma>0$, $1<p\le2$, $0<\delta\le1/8$, and an integer $T\ge b:=\log(1/(4\delta))$. For every algorithm making at most $T$ stochastic-gradient queries, there exists a one-dimensional instance satisfying the assumptions of Theorem~\ref{thm:main}, with $x_1=0$, such that its output $\widehat x$ satisfies
\[
\Prob\left\{f(\widehat x)-f(x^\star)\ge
\min\left\{\frac{LR^2}{2},\frac{\sigma R}{4}(b/T)^{(p-1)/p}\right\}\right\}\ge2\delta.
\]
\end{proposition}

\begin{proposition}[Clean nonsmooth composite oracle]\label{prop:nonsmoothlower}
Fix $G,R,\sigma>0$, $1<p\le2$, $0<\delta\le1/8$, and an integer $T\ge b:=\log(1/(4\delta))$. For every algorithm making at most $T$ stochastic-subgradient queries, even with exact proximal access to $r$, there exists a one-dimensional instance satisfying the assumptions of Theorem~\ref{thm:nonsmooth}, with $x_1=0$, such that
\[
\Prob\left\{F(\widehat x)-F^\star\ge
\min\left\{GR,\frac{\sigma R}{2}(b/T)^{(p-1)/p}\right\}\right\}\ge2\delta.
\]
\end{proposition}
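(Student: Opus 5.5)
We prove Proposition~\ref{prop:nonsmoothlower} with a two-point testing argument in one dimension, modeled on Proposition~\ref{prop:lower}. We take $r=0$; it is convex, proper, lower semicontinuous and its proximal map is the identity, so it carries no information about the instance. We use two instances $f_\pm(x)=\varepsilon|x\mp R|$ with $x_1=0$, so $x^\star_\pm=\pm R$ and $\norm{x_1-x^\star}=R$. The slope $\varepsilon\le G$ will be chosen below. Every $\widehat x$ satisfies $\max\{f_+(\widehat x)-f_+^\star,\,f_-(\widehat x)-f_-^\star\}\ge\varepsilon R$, because $|\widehat x-R|+|\widehat x+R|\ge2R$. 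Moreover, on $\{\widehat x\le0\}$ we have $f_+(\widehat x)-f_+^\star\ge\varepsilon R$, and on $\{\widehat x\ge0\}$ we have $f_-(\widehat x)-f_-^\star\ge\varepsilon R$.

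The oracle uses rare informative samples. We want a heavy-tailed oracle whose observations agree for both instances except on a rare event. For an exact match, both instances must have the same subgradient at all query points, which fails near $\pm R$. We therefore restrict queries to a region where the two subgradients differ by a constant. In the interval $(-R,R)$, $f_+'=-\varepsilon$ and $f_-'=+\varepsilon$. Outside it the subgradients agree, so the oracle there can be deterministic and identical. Inside, the oracle returns $g=-\varepsilon+\,$noise for $f_+$ and $+\varepsilon+\,$noise for $f_-$, built as follows. With probability $1-\rho$ return $0$. With probability $\rho/2$ return $\pm M$. With probability $\rho$, split to give the correct mean: return $0$ w.p.\ $1-\rho$, and return $\mp\varepsilon/\rho$ w.p.\ $\rho$ (sign by instance). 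The mean is then $\mp\varepsilon$. The central $p$-th moment is at most about $\rho(\varepsilon/\rho)^p+\varepsilon^p\lesssim\varepsilon^p\rho^{1-p}$. We require this to be $\le\sigma^p$, i.e.\ $\varepsilon\asymp\sigma\rho^{(p-1)/p}$. The laws under $\pm$ coincide whenever the rare atom is not drawn. Hence a coupling makes the two transcripts, including the algorithm's internal randomness, identical unless some query among $T$ hits the atom, which has probability at most $1-(1-\rho)^T$.

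For the probability bookkeeping, set $\rho=b/T\le1$, which is where $T\ge b$ is used. Then $(1-\rho)^T\ge e^{-2b}\cdot$const. More precisely, we choose $\rho$ so that $(1-\rho)^T\ge4\delta$; with $b=\log(1/(4\delta))$, $\rho\approx b/T$ works for $\delta\le1/8$ (via $1-\rho\ge e^{-\rho-\rho^2}$). Let $E$ be the no-atom event, on which the laws of $\widehat x$ coincide. Either $\Prob(\widehat x\le0,E)\ge\frac12\Prob(E)$ or $\Prob(\widehat x\ge0,E)\ge\frac12\Prob(E)$. So under one instance the error is at least $\varepsilon R$ with probability at least $\frac12\cdot4\delta=2\delta$. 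The last step sets $\varepsilon=\min\{G,\frac{\sigma}{2}(b/T)^{(p-1)/p}\}$ and checks the moment constant, which gives the stated bound $\min\{GR,\frac{\sigma R}{2}(b/T)^{(p-1)/p}\}$.

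The main obstacle is the exact constant in the moment calculation: the two-atom noise has central moment $\rho(1-\rho)^p(\varepsilon/\rho)^p+(1-\rho)\varepsilon^p$. We must show this is $\le\sigma^p$ with the factor $\frac12$, which needs $2^{-p}(1+\rho^{p-1})\le1$; this holds for $p>1$ and $\rho\le1$. A second point is that $\rho$ must be calibrated precisely enough for $(1-\rho)^T\ge4\delta$ while $\rho\ge b/T$ up to constants absorbed in the $\frac12$. If this fails, we will instead take $\rho=b/(2T)$ and lose a factor $2^{(p-1)/p}\le\sqrt2$, adjusting the factor $\frac12$.
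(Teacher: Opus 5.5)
\textbf{Gap: the calibration of $\rho$.} Your two-point construction is sound, but the calibration you commit to fails. With $\rho=b/T$ you always have $(1-\rho)^T\le e^{-\rho T}=e^{-b}=4\delta$, strictly since $\rho>0$. If $T=b$, then $\rho=1$ and the no-atom event $E$ has probability zero.

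Your bound $1-\rho\ge e^{-\rho-\rho^2}$ only gives $(1-\rho)^T\ge e^{-b-b^2/T}$, which misses $4\delta$ by the factor $e^{-b^2/T}$. Your first estimate, $e^{-2b}=16\delta^2$, is smaller still. In fact $(1-\rho)^T\ge 4\delta$ forces $\rho\le 1-e^{-b/T}<b/T$. So no choice ``$\rho\approx b/T$'' gives $\Prob(E)\ge 4\delta$, and the conclusion with probability $2\delta$ does not follow from your main line.

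\textbf{The fix.} Your fallback $\rho=b/(2T)$ is the right calibration, and it costs nothing, contrary to what you expect. Keep $\varepsilon=\min\{G,\frac{\sigma}{2}(b/T)^{(p-1)/p}\}$ and note that $\frac{\sigma}{2}(b/T)^{(p-1)/p}=2^{-1/p}\sigma\rho^{(p-1)/p}$. Using $1-\rho\le 1\le\rho^{1-p}$ and $(1-\rho)^p\le 1$, the central moment satisfies $\varepsilon^p[(1-\rho)+\rho^{1-p}(1-\rho)^p]\le 2\varepsilon^p\rho^{1-p}\le\sigma^p$. The slack factor $2^{1-p}$ in your own bound $2^{-p}(1+\rho^{p-1})$ exactly absorbs the halving of $\rho$. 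Meanwhile $T\ge b$ gives $\rho\le 1/2$, and $\log(1-u)\ge -2u$ on $[0,1/2]$ yields $(1-\rho)^T\ge e^{-2\rho T}=e^{-b}=4\delta$. This is the paper's calibration, and it gives the stated constant $\frac{\sigma R}{2}$, not a version weakened by $2^{(p-1)/p}$.

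\textbf{Oracle description.} Your first description of the oracle does not define a distribution (masses $1-\rho$, $\rho/2$, $\rho$ and values $\pm M$). Replace it with the two-point law you actually use on $(-R,R)$: return $0$ with probability $1-\rho$ and $\mp\varepsilon/\rho$ with probability $\rho$. At the endpoints, fix the common deterministic reply $+\varepsilon$ at $x=R$ and $-\varepsilon$ at $x=-R$. Each lies in both subdifferentials there.

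\textbf{Comparison with the paper.} With that repair, your route differs from the paper's only in the instance. The paper takes $f_s(x)=-smx$ and $r=\iota_{[-R,R]}$. There the subdifferential is the constant $\{-sm\}$, so the oracle $-smB_t/\rho$ does not depend on the query. The proximal map is the same projection for both signs, and the two gaps sum to $2mR$ on $[-R,R]$ and are $+\infty$ outside.

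You take $r=0$ and $f_\pm(x)=\varepsilon|x\mp R|$. This needs a query-dependent oracle (random only on $(-R,R)$, deterministic and common elsewhere) and a split on the sign of $\widehat x$. In exchange, you show the same lower bound already holds for unconstrained Lipschitz problems with a trivial proximal term, and $F$ is finite everywhere. The coupling of transcripts on the no-atom event, and the step where one of two events carries half of $\Prob(E)$, are the same in both proofs.
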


\begin{proposition}[Corrupted smooth oracle]\label{prop:corruptlower}
Fix $L,R,\sigma>0$, $1<p\le2$, and integers $1\le q\le T$. For every algorithm making at most $T$ stochastic-gradient queries, there is a one-dimensional clean instance satisfying the assumptions of Theorem~\ref{thm:main}, with $x_1=0$, and a causal adversary satisfying \eqref{eq:corruptmodel} such that
\[
\Prob\left\{f(\widehat x)-f(x^\star)\ge
\frac R2\min\left\{LR,\frac\sigma2
\left(\frac q{2T}\right)^{(p-1)/p}\right\}\right\}\ge\frac14.
\]
\end{proposition}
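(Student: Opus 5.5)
We prove Proposition~\ref{prop:corruptlower} by a two-instance indistinguishability argument in which the adversary erases the only informative samples. Set $\Delta=\min\{LR,\frac\sigma2(q/(2T))^{(p-1)/p}\}$ and take the one-dimensional quadratics
\[
f_\pm(x)=\frac{L}{2}\bigl(x\mp R\bigr)^2\quad\text{or, more conveniently,}\quad f_\pm(x)=\frac{\Delta}{2R}\bigl(x\mp R\bigr)^2,
\]
with $x^\star_\pm=\pm R$. Then $\norm{x_1-x^\star}=R$, and the second choice is $L$-smooth because $\Delta/R\le L$. The gradient is $\nabla f_\pm(x)=\frac{\Delta}{R}x\mp\Delta$. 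The two instances therefore differ only by a constant shift $\mp\Delta$. The key fact is that any output $\widehat x$ has $\max_\pm\bigl(f_\pm(\widehat x)-f_\pm^\star\bigr)\ge \frac{\Delta}{2R}R^2=\frac{R\Delta}{2}$, since $|\widehat x-R|$ and $|\widehat x+R|$ cannot both be below $R$. This matches the claimed $\frac R2\min\{\cdots\}$ exactly.

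\textbf{Noise.} At query $x$, the oracle returns $\frac{\Delta}{R}x+Z_\pm$, where $Z_\pm$ takes the value $\mp\Delta/\rho$ with probability $\rho$ and $0$ otherwise. This gives mean $\mp\Delta$ as required. The central $p$-th moment is at most $2(\Delta/\rho)^p\rho=2\Delta^p\rho^{1-p}$; indeed $|Z-\E Z|\le\Delta/\rho$ on the rare event and equals $\Delta$ otherwise, giving at most $(\Delta/\rho)^p\rho+\Delta^p\le2\Delta^p\rho^{1-p}$. Choose $\rho=q/(2T)$. The condition $2\Delta^p\rho^{1-p}\le\sigma^p$ reads $\Delta\le2^{-1/p}\sigma\rho^{(p-1)/p}$, which follows from $\Delta\le\frac\sigma2\rho^{(p-1)/p}$ because $2^{-1/p}\ge1/2$. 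The constants could be tightened slightly, but this suffices.

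\textbf{Adversary.} The causal adversary sees the clean sample. Whenever the rare atom occurs, it replaces the sample by $\frac{\Delta}{R}x$ (the value for $Z=0$), subject to the budget of $q$ replacements. The number $K$ of rare events in $T$ queries is a sum of conditionally Bernoulli($\rho$) variables, since $\rho$ does not depend on the query. Hence $\E K=T\rho=q/2$, and Markov's inequality gives $\Prob(K\le q)\ge1/2$. On the event $K\le q$, every observation in both instances equals the deterministic map $x\mapsto\frac\Delta R x$. This requires a coupling: run both instances on the same algorithm randomness and the same Bernoulli indicators. On $\{K\le q\}$ the observed transcripts and hence the outputs $\widehat x$ coincide, and this event has the same probability under both instances. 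When $K>q$, the adversary stops corrupting, so the budget constraint in \eqref{eq:corruptmodel} holds almost surely.

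\textbf{Conclusion.} On $E=\{K\le q\}$, the common output $\widehat x$ fails for at least one sign. Write $A_\pm=E\cap\{f_\pm(\widehat x)-f_\pm^\star\ge R\Delta/2\}$, so that $A_+\cup A_-=E$. Then $\Prob(A_+)+\Prob(A_-)\ge\Prob(E)\ge1/2$, and one sign has probability at least $1/4$. Fix that sign as the instance. The main obstacle is making the coupling rigorous with adaptive, randomized algorithms: the query points depend on past observations, but on $E$ all observations agree, so by induction the query sequences agree. The rare indicators are drawn independently of the queries, so $K$ has the same law in both worlds.
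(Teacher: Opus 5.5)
Your proposal is correct and follows essentially the same route as the paper: the same quadratics $f_\pm(x)=\frac{\Delta}{2R}(x\mp R)^2$ with rare Bernoulli$(q/(2T))$ informative samples, the same budget-limited causal adversary that erases them, Markov's inequality giving $\Prob(K\le q)\ge1/2$, and the coupling of transcripts on that event. The only cosmetic difference is that you bound the larger gap via $\max\{|\widehat x-R|,|\widehat x+R|\}\ge R$, whereas the paper sums the two gaps; both give the threshold $R\Delta/2$ with probability $1/4$.
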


\clearpage
\appendix
\section{Useful lemmas}\label{app:tools}
Throughout the appendix, denote $D(u,x)=\norm{u-x}^2/2$.

\subsection{Smoothness and proximal updates}
\begin{lemma}\label{lem:coco}
If $f$ is convex and $L$-smooth, then, for all $x,y\in\R^d$,
\begin{align}
f(x)&\ge f(y)+\ip{\nabla f(y)}{x-y}
+\frac{\norm{\nabla f(x)-\nabla f(y)}^2}{2L},\label{eq:strongsmooth}\\
\ip{\nabla f(x)-\nabla f(y)}{x-y}
&\ge\frac{\norm{\nabla f(x)-\nabla f(y)}^2}{L}.\label{eq:coco}
\end{align}
\end{lemma}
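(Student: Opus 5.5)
The plan is the standard argument: first prove an upper quadratic bound from $L$-smoothness, then apply it to a shifted convex function whose minimizer is known. Fix $y$ and define
\[
\phi(x)=f(x)-\ip{\nabla f(y)}{x}.
\]
This $\phi$ is convex, $L$-smooth, and satisfies $\nabla\phi(x)=\nabla f(x)-\nabla f(y)$. In particular $\nabla\phi(y)=0$, so by convexity $y$ is a global minimizer of $\phi$. Both inequalities will come from bounding $\phi(y)$ above by $\phi$ evaluated at a single gradient step taken from $x$.

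\textbf{Descent lemma.} From the $L$-Lipschitz gradient (Assumption~\ref{ass:smooth}), the fundamental theorem of calculus along the segment from $x$ to $z$ and Cauchy--Schwarz give
\[
\phi(z)\le\phi(x)+\ip{\nabla\phi(x)}{z-x}+\frac L2\norm{z-x}^2 .
\]
We evaluate this at $z=x-\nabla\phi(x)/L$. Minimality of $y$ then yields
\[
\phi(y)\le\phi(z)\le\phi(x)-\frac{\norm{\nabla\phi(x)}^2}{2L}.
\]
Expanding $\phi$ on both sides gives
\[
f(y)-\ip{\nabla f(y)}{y}\le f(x)-\ip{\nabla f(y)}{x}-\frac{\norm{\nabla f(x)-\nabla f(y)}^2}{2L},
\]
which rearranges exactly to \eqref{eq:strongsmooth}.

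\textbf{Co-coercivity.} We write \eqref{eq:strongsmooth} a second time with the roles of $x$ and $y$ exchanged:
\[
f(y)\ge f(x)+\ip{\nabla f(x)}{y-x}+\frac{\norm{\nabla f(x)-\nabla f(y)}^2}{2L}.
\]
Adding the two inequalities, the function values cancel. The linear terms combine into $-\ip{\nabla f(x)-\nabla f(y)}{x-y}$, and the two quadratic terms sum to $\norm{\nabla f(x)-\nabla f(y)}^2/L$. This gives \eqref{eq:coco}.

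\textbf{Main obstacle.} The only delicate point is noticing that $y$ minimizes $\phi$. This requires convexity together with $\nabla\phi(y)=0$, and it is the step that uses convexity; without it the inequalities fail. The rest is routine algebra.
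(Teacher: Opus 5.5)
Your proposal is correct and follows essentially the same argument as the paper. The paper also shifts $f$ by $\ip{\nabla f(y)}{\cdot}$, uses convexity and $\nabla\psi(y)=0$ to make $y$ a global minimizer, bounds $\psi(y)$ by the descent inequality at $x-\nabla\psi(x)/L$, and then swaps $x$ and $y$ and adds to obtain co-coercivity.
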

\begin{proof}
Fix $y$ and let $\psi(z)=f(z)-\ip{\nabla f(y)}{z}$. This function
is convex and $L$-smooth, with $\nabla\psi(y)=0$. Therefore
\[
\psi(z)\ge\psi(y)+\ip{\nabla\psi(y)}{z-y}=\psi(y),\qquad z\in\R^d.
\]
Applying this inequality and then smoothness at $x$ gives
\begin{align*}
\psi(y)
&\le\psi\left(x-\frac1L\nabla\psi(x)\right)\\
&\le\psi(x)+\left\langle\nabla\psi(x),-\frac1L\nabla\psi(x)\right\rangle
+\frac L2\left\|\frac1L\nabla\psi(x)\right\|^2\\
&=\psi(x)-\frac1{2L}\norm{\nabla\psi(x)}^2.
\end{align*}
Substituting the definition of $\psi$ yields
\begin{align*}
f(y)-\ip{\nabla f(y)}{y}
&\le f(x)-\ip{\nabla f(y)}{x}
-\frac1{2L}\norm{\nabla f(x)-\nabla f(y)}^2,\\
f(x)-f(y)
&\ge\ip{\nabla f(y)}{x-y}
+\frac1{2L}\norm{\nabla f(x)-\nabla f(y)}^2.
\end{align*}
This proves \eqref{eq:strongsmooth}. Interchanging $x$ and $y$ and
adding the resulting inequalities gives
\begin{align*}
0
&\ge \ip{\nabla f(y)}{x-y}+\ip{\nabla f(x)}{y-x}
+\frac1L\norm{\nabla f(x)-\nabla f(y)}^2\\
&=-\ip{\nabla f(x)-\nabla f(y)}{x-y}
+\frac1L\norm{\nabla f(x)-\nabla f(y)}^2,
\end{align*}
which is \eqref{eq:coco}.
\end{proof}

\begin{lemma}\label{lem:proxstep}
Let $F=f+r$ satisfy Assumption~\ref{ass:convex}. For $x\in\dom r$,
$h\in\partial f(x)$, an arbitrary error $e$, and $\alpha>0$, define
$x^+=\prox_{\alpha r}(x-\alpha(h+e))$.
If $f$ is $L$-smooth and $\alpha\le1/(2L)$, then, for every $u\in\dom r$,
\begin{equation}\label{eq:prox-smooth}
\alpha(F(x^+)-F(u))
\le D(u,x)-D(u,x^+)+\alpha\ip e{u-x}+\alpha^2\norm e^2.
\end{equation}
If $f$ is $G$-Lipschitz, then for every $\alpha>0$,
\begin{equation}\label{eq:prox-nonsmooth}
\alpha(F(x^+)-F(u))
\le D(u,x)-D(u,x^+)+\alpha\ip e{u-x}
+\alpha^2(\norm e^2+4G^2).
\end{equation}
\end{lemma}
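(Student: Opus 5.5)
The plan is to start from the optimality condition of the proximal step and then use convexity of $r$ together with either smoothness or Lipschitz continuity of $f$ to handle the gradient part. Write $y=x-\alpha(h+e)$ and $x^+=\prox_{\alpha r}(y)$. Optimality gives $(y-x^+)/\alpha\in\partial r(x^+)$, so for every $u\in\dom r$
\[
\alpha\bigl(r(x^+)-r(u)\bigr)\le\ip{y-x^+}{x^+-u}=\ip{x-x^+}{x^+-u}-\alpha\ip{h+e}{x^+-u}.
\]
The three-point identity $\ip{x-x^+}{x^+-u}=D(u,x)-D(u,x^+)-D(x^+,x)$ then yields
\[
\alpha\bigl(r(x^+)-r(u)\bigr)\le D(u,x)-D(u,x^+)-\tfrac12\norm{x^+-x}^2-\alpha\ip{h}{x^+-u}-\alpha\ip{e}{x^+-u}.
\]
Next I split the error term as $-\alpha\ip{e}{x^+-u}=\alpha\ip e{u-x}-\alpha\ip e{x^+-x}$. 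This isolates the ``martingale'' part $\alpha\ip e{u-x}$, which appears in the final bound. The remainder is controlled by Young's inequality: $\alpha|\ip e{x^+-x}|\le\alpha^2\norm e^2+\tfrac14\norm{x^+-x}^2$. That uses half of the negative quadratic and leaves $-\tfrac14\norm{x^+-x}^2$ available for the $f$-part.

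\textbf{Smooth case.} Convexity gives $f(x)-f(u)\le\ip h{x-u}$, and the descent lemma gives $f(x^+)\le f(x)+\ip h{x^+-x}+\tfrac L2\norm{x^+-x}^2$. Adding these,
\[
\alpha\bigl(f(x^+)-f(u)\bigr)\le\alpha\ip h{x^+-u}+\tfrac{\alpha L}{2}\norm{x^+-x}^2.
\]
The term $\alpha\ip h{x^+-u}$ cancels the $-\alpha\ip h{x^+-u}$ in the $r$-inequality. Since $\alpha\le1/(2L)$, we have $\alpha L/2\le 1/4$, so the leftover $-\tfrac14\norm{x^+-x}^2$ absorbs the smoothness term. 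This gives \eqref{eq:prox-smooth}.

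\textbf{Lipschitz case.} Here I use $f(x^+)-f(u)\le f(x)-f(u)+G\norm{x^+-x}\le\ip h{x-u}+G\norm{x^+-x}$. Multiplying by $\alpha$ and adding to the $r$-inequality, the $h$-terms combine into $-\alpha\ip h{x^+-x}\le\alpha G\norm{x^+-x}$, using $\norm h\le G$ (which follows from Assumption~\ref{ass:lipschitz}). This leaves $2\alpha G\norm{x^+-x}-\tfrac14\norm{x^+-x}^2$, which is at most $4\alpha^2G^2$ by maximizing the quadratic in $\norm{x^+-x}$. That gives \eqref{eq:prox-nonsmooth}.

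The only delicate point is budgeting the negative quadratic $-\tfrac12\norm{x^+-x}^2$. It has to cover both the Young term for $e$ and either the smoothness term or the Lipschitz term. The quarter/quarter split above does this exactly within the stated constants $\alpha\le 1/(2L)$, coefficient $1$ on $\norm e^2$, and $4G^2$. No other step is nontrivial.
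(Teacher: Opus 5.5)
Your proposal is correct and follows essentially the same route as the paper: the prox optimality condition and the three-point identity, then cancelling the $h$-terms via the descent lemma (smooth case) or via $\norm{h}\le G$ (Lipschitz case), with the leftover $-\tfrac12\norm{x^+-x}^2$ absorbing the error terms. The only cosmetic difference is in the Lipschitz case: the paper completes a single square in $\alpha(\norm e+2G)\norm{x^+-x}$ and then uses $(\norm e+2G)^2\le2\norm e^2+8G^2$, whereas your quarter/quarter split of the quadratic gives the same constant $4G^2$ directly.
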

\begin{proof}
The optimality condition for the proximal update gives
\[
\frac{x-x^+}{\alpha}-h-e\in\partial r(x^+).
\]
By the subgradient inequality,
\begin{align*}
r(u)&\ge r(x^+)+\left\langle\frac{x-x^+}{\alpha}-h-e,u-x^+\right\rangle,\\
\alpha\bigl(r(x^+)-r(u)+\ip{h+e}{x^+-u}\bigr)
&\le\ip{x-x^+}{x^+-u}.
\end{align*}
Expanding the square gives
\[
\norm{x-u}^2=\norm{x-x^+}^2+\norm{x^+-u}^2
+2\ip{x-x^+}{x^+-u}.
\]
Consequently,
\begin{equation}\label{eq:prox-threepoint}
\alpha\bigl(r(x^+)-r(u)+\ip{h+e}{x^+-u}\bigr)
\le D(u,x)-D(u,x^+)-\frac12\norm{x^+-x}^2.
\end{equation}

For smooth $f$, we have $h=\nabla f(x)$. Smoothness and convexity,
respectively, give
\begin{align*}
f(x^+)-f(x)&\le\ip h{x^+-x}+\frac L2\norm{x^+-x}^2,\\
f(x)-f(u)&\le\ip h{x-u}.
\end{align*}
Adding these two inequalities, we obtain
\begin{align*}
f(x^+)-f(u)
&=f(x^+)-f(x)+f(x)-f(u)\\
&\le\ip h{x^+-x}+\ip h{x-u}+\frac L2\norm{x^+-x}^2\\
&=\ip h{x^+-u}+\frac L2\norm{x^+-x}^2.
\end{align*}
Combining this with \eqref{eq:prox-threepoint} yields
\begin{align*}
\alpha(F(x^+)-F(u))
&\le D(u,x)-D(u,x^+)-\frac{1-L\alpha}{2}\norm{x^+-x}^2
+\alpha\ip e{u-x^+}\\
&=D(u,x)-D(u,x^+)+\alpha\ip e{u-x}
-\frac{1-L\alpha}{2}\norm{x^+-x}^2-\alpha\ip e{x^+-x}\\
&\le D(u,x)-D(u,x^+)+\alpha\ip e{u-x}
-\frac14\norm{x^+-x}^2+\alpha\norm e\norm{x^+-x}\\
&=D(u,x)-D(u,x^+)+\alpha\ip e{u-x}+\alpha^2\norm e^2
-\left(\frac{\norm{x^+-x}}2-\alpha\norm e\right)^2\\
&\le D(u,x)-D(u,x^+)+\alpha\ip e{u-x}+\alpha^2\norm e^2.
\end{align*}
The second inequality uses $L\alpha\le1/2$ and Cauchy--Schwarz.

For $G$-Lipschitz $f$, first note that $\norm h\le G$. If
$h\ne0$, then for any $a>0$,
\[
a\norm h
\le f\left(x+a\frac h{\norm h}\right)-f(x)
\le Ga.
\]
Hence, we can ensure that
\begin{align*}
f(x^+)-f(u)
&=f(x^+)-f(x)+f(x)-f(u)\\
&\le G\norm{x^+-x}+\ip h{x-u}\\
&=\ip h{x^+-u}+\ip h{x-x^+}+G\norm{x^+-x}\\
&\le\ip h{x^+-u}+2G\norm{x^+-x}.
\end{align*}
Using \eqref{eq:prox-threepoint} once more gives
\begin{align*}
\alpha(F(x^+)-F(u))
&\le D(u,x)-D(u,x^+)+\alpha\ip e{u-x^+}
+2\alpha G\norm{x^+-x}-\frac12\norm{x^+-x}^2\\
&\le D(u,x)-D(u,x^+)+\alpha\ip e{u-x}
+\alpha(\norm e+2G)\norm{x^+-x}-\frac12\norm{x^+-x}^2\\
&=D(u,x)-D(u,x^+)+\alpha\ip e{u-x}
+\frac{\alpha^2}2(\norm e+2G)^2\\
&\quad-\frac12\bigl(\norm{x^+-x}-\alpha(\norm e+2G)\bigr)^2\\
&\le D(u,x)-D(u,x^+)+\alpha\ip e{u-x}
+\alpha^2(\norm e^2+4G^2),
\end{align*}
where $(\norm e+2G)^2\le2\norm e^2+8G^2$ proves the last step.
\end{proof}

\begin{lemma}\label{lem:radius}
Under Assumptions~\ref{ass:convex} and \ref{ass:smooth}, let
$x^+=\prox_{\alpha r}(x-\alpha(\nabla f(x)+e))$
and set $0<\alpha\le1/L$. Then
\begin{equation}\label{eq:radiusrecursion}
\norm{x^+-x^\star}^2
\le\norm{x-x^\star}^2-2\alpha\ip e{x-x^\star}+2\alpha^2\norm e^2.
\end{equation}
\end{lemma}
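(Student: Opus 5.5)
The plan is to compare $x^+$ with the exact proximal gradient step from $x^\star$, which is a fixed point of the proximal gradient map. Set $y=x-\alpha(\nabla f(x)+e)$. By first-order optimality, $x^\star=\prox_{\alpha r}(x^\star-\alpha\nabla f(x^\star))$ for every $\alpha>0$, since $-\nabla f(x^\star)\in\partial r(x^\star)$. Write $y^\star=x^\star-\alpha\nabla f(x^\star)$. Firm nonexpansiveness of $\prox_{\alpha r}$ alone gives $\norm{x^+-x^\star}\le\norm{y-y^\star}$, and I will work with the squared form $\norm{x^+-x^\star}^2\le\norm{y-y^\star}^2$.

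Next I expand $y-y^\star=(x-x^\star)-\alpha(\nabla f(x)-\nabla f(x^\star))-\alpha e$. Writing $w=(x-x^\star)-\alpha(\nabla f(x)-\nabla f(x^\star))$, the elementary bound $\norm{w-\alpha e}^2=\norm{w}^2-2\alpha\ip e{w}+\alpha^2\norm e^2$ appears. A cruder route uses $\norm{w-\alpha e}^2\le\norm w^2-2\alpha\ip e{x-x^\star}+2\alpha\ip e{\alpha\Delta}+\alpha^2\norm e^2$, where $\Delta=\nabla f(x)-\nabla f(x^\star)$. Then Young's inequality gives $2\alpha^2\ip e\Delta\le\alpha^2\norm e^2+\alpha^2\norm\Delta^2$.

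For $\norm w^2$, I expand it and apply the cocoercivity inequality \eqref{eq:coco} from Lemma~\ref{lem:coco}:
\[
\norm w^2=\norm{x-x^\star}^2-2\alpha\ip\Delta{x-x^\star}+\alpha^2\norm\Delta^2
\le\norm{x-x^\star}^2-\Bigl(\frac{2\alpha}{L}-\alpha^2\Bigr)\norm\Delta^2.
\]
Adding the Young term $\alpha^2\norm\Delta^2$ leaves a coefficient $-(2\alpha/L-2\alpha^2)\norm\Delta^2$. This is nonpositive precisely when $\alpha\le1/L$, which is the stated condition. The remaining terms are $-2\alpha\ip e{x-x^\star}+2\alpha^2\norm e^2$, which is \eqref{eq:radiusrecursion}.

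The main obstacle is bookkeeping the cross term $\ip e\Delta$. It must be split so that the $\norm\Delta^2$ penalty is exactly absorbed by cocoercivity at step size $1/L$ and not $1/(2L)$. This explains the factor $2$ on $\alpha^2\norm e^2$.
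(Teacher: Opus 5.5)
Your proposal is correct and follows the paper's proof essentially step for step. The shared steps are nonexpansiveness of $\prox_{\alpha r}$ with the fixed-point identity $x^\star=\prox_{\alpha r}(x^\star-\alpha\nabla f(x^\star))$, expansion of the square, cocoercivity \eqref{eq:coco}, and a factor-$2$ split of the cross term leaving $2\alpha(\alpha-1/L)\norm{\nabla f(x)-\nabla f(x^\star)}^2\le0$. Your Young's-inequality bound on $2\alpha^2\ip e\Delta$ is the same inequality as the paper's $\norm{a+b}^2\le2\norm a^2+2\norm b^2$, and you cite nonexpansiveness of the prox where the paper derives it from the subgradient inequalities.
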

\begin{proof}
For $u=\prox_{\alpha r}(a)$ and $v=\prox_{\alpha r}(b)$, the proximal
optimality conditions and the subgradient inequalities imply
\begin{align*}
\frac{a-u}{\alpha}&\in\partial r(u),
&\frac{b-v}{\alpha}&\in\partial r(v),\\
r(v)-r(u)&\ge\frac1\alpha\ip{a-u}{v-u},
&r(u)-r(v)&\ge\frac1\alpha\ip{b-v}{u-v}.
\end{align*}
Adding the last two inequalities and rearranging gives
\begin{align*}
0&\le\ip{a-b-(u-v)}{u-v},\\
\norm{u-v}^2&\le\ip{a-b}{u-v}\le\norm{a-b}\norm{u-v},\\
\norm{\prox_{\alpha r}(a)-\prox_{\alpha r}(b)}&\le\norm{a-b}.
\end{align*}
The last line also holds if $u=v$. Since $f$ is differentiable on
$\R^d$, optimality of $x^\star$ gives
\[
-\nabla f(x^\star)\in\partial r(x^\star),\qquad
x^\star=\prox_{\alpha r}(x^\star-\alpha\nabla f(x^\star)).
\]
Applying the preceding nonexpansiveness inequality, we obtain
\begin{align*}
\norm{x^+-x^\star}^2
&\le\norm{x-x^\star-\alpha(\nabla f(x)-\nabla f(x^\star)+e)}^2\\
&=\norm{x-x^\star}^2
-2\alpha\ip{\nabla f(x)-\nabla f(x^\star)}{x-x^\star}
-2\alpha\ip e{x-x^\star}\\
&\quad+\alpha^2\norm{\nabla f(x)-\nabla f(x^\star)+e}^2\\
&\le\norm{x-x^\star}^2-2\alpha\ip e{x-x^\star}
-\frac{2\alpha}{L}\norm{\nabla f(x)-\nabla f(x^\star)}^2\\
&\quad+2\alpha^2\norm{\nabla f(x)-\nabla f(x^\star)}^2+2\alpha^2\norm e^2\\
&=\norm{x-x^\star}^2-2\alpha\ip e{x-x^\star}+2\alpha^2\norm e^2
+2\alpha\left(\alpha-\frac1L\right)\norm{\nabla f(x)-\nabla f(x^\star)}^2\\
&\le\norm{x-x^\star}^2-2\alpha\ip e{x-x^\star}+2\alpha^2\norm e^2.
\end{align*}
Here the second inequality uses \eqref{eq:coco} and
$\norm{a+b}^2\le2\norm a^2+2\norm b^2$, and the last uses $\alpha\le1/L$.
\end{proof}

\subsection{A comparison inequality for the last iterate}
For positive learning rates $\eta_1,\ldots,\eta_T$, define
\begin{equation}\label{eq:potential}
\begin{gathered}
S_t=\sum_{j=t}^T\eta_j,\quad S_{T+1}=0,\quad
v_t=\frac{\eta_T}{S_t}\ (1\le t\le T),\quad v_0=v_1,\\
z_0=x^\star,\qquad
z_t=\frac{v_{t-1}}{v_t}z_{t-1}
+\left(1-\frac{v_{t-1}}{v_t}\right)x_t,\\
P_0=v_0D(x^\star,x_1),\qquad
P_t=v_tD(z_t,x_{t+1})\quad(1\le t\le T).
\end{gathered}
\end{equation}

\begin{lemma}\label{lem:prefix}
Let $F$ satisfy Assumption~\ref{ass:convex}, let $x_t\in\dom r$ for
$1\le t\le T+1$, and write $\Delta_t=F(x_t)-F^\star$.
Suppose that, for every $u\in\dom r$
and $1\le t\le T$,
\begin{equation}\label{eq:abstractstep}
\eta_t(F(x_{t+1})-F(u))
\le D(u,x_t)-D(u,x_{t+1})+\eta_t\ip{e_t}{u-x_t}+\rho_t,
\end{equation}
where $\rho_t\ge0$. Then, for every $1\le s\le T$,
\begin{align}
P_s+\eta_sv_s\Delta_{s+1}
+S_{s+1}\sum_{j=2}^s(v_j-v_{j-1})\Delta_j
\le P_0+\sum_{t=1}^s v_t\rho_t
+\sum_{t=1}^s\ip{e_t}{A_t},\label{eq:prefix}
\end{align}
where $A_t=\eta_tv_{t-1}(z_{t-1}-x_t)$.
Every term on the left-hand side is nonnegative.
\end{lemma}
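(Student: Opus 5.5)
The plan is the standard moving-comparator argument: apply \eqref{eq:abstractstep} with the comparator $u=z_t$, multiply by $v_t$, and telescope. Convexity of $D(\cdot,x_t)$ and of $F$ absorbs the change of comparator, and the particular choice $v_t=\eta_T/S_t$ makes almost all the function-value terms cancel.

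\textbf{Preliminaries.} Since $S_t$ is decreasing in $t$, the weights $v_t$ are nondecreasing, with $v_0=v_1$ and $v_T=1$. Hence $v_{t-1}/v_t\in(0,1]$ and $z_t$ is a convex combination of $z_{t-1}$ and $x_t$. Unrolling the recursion gives
\[
z_t=\frac{v_0}{v_t}x^\star+\sum_{j=1}^t\frac{v_j-v_{j-1}}{v_t}x_j .
\]
The coefficients are nonnegative and sum to one. Since $x^\star$ and every $x_j$ lie in the convex set $\dom r$, so does $z_t$, and $z_t$ is an admissible comparator. The same representation and convexity of $F$ give
\[
v_tF(z_t)\le v_0F^\star+\sum_{j=1}^t(v_j-v_{j-1})F(x_j),
\]
which is equivalent to $v_t(F(z_t)-F^\star)\le\sum_{j=2}^t(v_j-v_{j-1})\Delta_j$, because $v_1=v_0$.

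\textbf{One step.} Take $u=z_t$ in \eqref{eq:abstractstep} and multiply by $v_t$. The noise term becomes $v_t\eta_t\ip{e_t}{z_t-x_t}=\ip{e_t}{A_t}$, because $z_t-x_t=(v_{t-1}/v_t)(z_{t-1}-x_t)$. The distance term $v_tD(z_t,x_{t+1})$ is exactly $P_t$. For the other distance term, convexity of $D(\cdot,x_t)$ gives
\[
v_tD(z_t,x_t)\le v_{t-1}D(z_{t-1},x_t)+(v_t-v_{t-1})D(x_t,x_t)=v_{t-1}D(z_{t-1},x_t).
\]
This equals $P_{t-1}$, including the case $t=1$, where $z_0=x^\star$. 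Writing $F(x_{t+1})-F(z_t)=\Delta_{t+1}-(F(z_t)-F^\star)$ and using the bound from the preliminaries, we obtain
\[
P_t+\eta_tv_t\Delta_{t+1}-\eta_t\sum_{j=2}^t(v_j-v_{j-1})\Delta_j
\le P_{t-1}+v_t\rho_t+\ip{e_t}{A_t}.
\]

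\textbf{Summation.} Summing over $t=1,\ldots,s$ telescopes the $P$ terms to $P_s-P_0$. On the left side the function-value terms rearrange as follows.
\begin{itemize}
\item The double sum, after exchanging the order of summation, becomes $\sum_{j=2}^s(v_j-v_{j-1})(S_j-S_{s+1})\Delta_j$.
\item Reindexing the other sum gives $\sum_{t=1}^s\eta_tv_t\Delta_{t+1}=\eta_sv_s\Delta_{s+1}+\sum_{j=2}^s\eta_{j-1}v_{j-1}\Delta_j$.
\end{itemize}
The key identity, and the only genuinely nontrivial step, is
\[
(v_j-v_{j-1})S_j=\eta_T\Bigl(1-\frac{S_j}{S_{j-1}}\Bigr)=\frac{\eta_T\eta_{j-1}}{S_{j-1}}=\eta_{j-1}v_{j-1},
\]
which uses $S_{j-1}-S_j=\eta_{j-1}$. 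With it, the terms $\eta_{j-1}v_{j-1}\Delta_j$ cancel exactly against the $S_j$ part of the double sum. What remains on the left is
\[
P_s+\eta_sv_s\Delta_{s+1}+S_{s+1}\sum_{j=2}^s(v_j-v_{j-1})\Delta_j,
\]
which is precisely \eqref{eq:prefix}.

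\textbf{Nonnegativity.} Each term on the left of \eqref{eq:prefix} is nonnegative:
\begin{itemize}
\item $P_s\ge0$, since $D\ge0$ and $v_s>0$;
\item $\Delta_j\ge0$, since $F^\star$ is the minimum value, and $\Delta_j$ is finite because $x_j\in\dom r$;
\item $S_{s+1}\ge0$, $\eta_s>0$, $v_s>0$, and $v_j-v_{j-1}\ge0$ by monotonicity of $v$.
\end{itemize}
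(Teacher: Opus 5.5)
Your proof is correct and follows the paper's argument essentially step for step: you use the comparator $u=z_t$ weighted by $v_t$, the convexity bound $v_t(F(z_t)-F^\star)\le\sum_{j=2}^t(v_j-v_{j-1})\Delta_j$, an exchange of summation order, and the cancellation identity $(v_j-v_{j-1})S_j=\eta_{j-1}v_{j-1}$. The only differences are cosmetic: you bound $v_tD(z_t,x_t)\le P_{t-1}$ via convexity of $D(\cdot,x_t)$ where the paper uses the exact identity $v_tD(z_t,x_t)=(v_{t-1}^2/v_t)D(z_{t-1},x_t)$, and you explicitly verify $z_t\in\dom r$, which the paper's proof of this lemma leaves implicit.
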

\begin{proof}
By equation \eqref{eq:potential}, we know that
\begin{align*}
z_t-x_t&=\frac{v_{t-1}}{v_t}(z_{t-1}-x_t),\\
v_tD(z_t,x_t)
&=\frac{v_{t-1}^2}{v_t}D(z_{t-1},x_t)
\le v_{t-1}D(z_{t-1},x_t)=P_{t-1},\\
\eta_tv_t\ip{e_t}{z_t-x_t}
&=\eta_tv_{t-1}\ip{e_t}{z_{t-1}-x_t}=\ip{e_t}{A_t}.
\end{align*}
Putting $u=z_t$ in \eqref{eq:abstractstep} and multiplying by $v_t$ gives
\begin{equation}\label{eq:prefix-onestep}
P_t+\eta_tv_t(F(x_{t+1})-F(z_t))
\le P_{t-1}+v_t\rho_t+\ip{e_t}{A_t}.
\end{equation}
To expand $F(z_t)$, first telescope the recursion for $z_t$:
\begin{align*}
v_tz_t-v_{t-1}z_{t-1}&=(v_t-v_{t-1})x_t,\\
v_tz_t-v_0x^\star&=\sum_{j=1}^t(v_j-v_{j-1})x_j,\\
v_0+\sum_{j=1}^t(v_j-v_{j-1})&=v_t.
\end{align*}
Convexity therefore gives
\begin{align*}
v_tF(z_t)
&\le v_0F^\star+\sum_{j=1}^t(v_j-v_{j-1})F(x_j),\\
v_t(F(z_t)-F^\star)
&\le\sum_{j=1}^t(v_j-v_{j-1})\Delta_j,\\
v_t(F(x_{t+1})-F(z_t))
&\ge v_t\Delta_{t+1}-\sum_{j=1}^t(v_j-v_{j-1})\Delta_j.
\end{align*}
Substitution in \eqref{eq:prefix-onestep} and summation over $t=1,\ldots,s$
yield
\begin{align}
P_s+\sum_{t=1}^s\eta_tv_t\Delta_{t+1}
-\sum_{t=1}^s\eta_t\sum_{j=1}^t(v_j-v_{j-1})\Delta_j
\le P_0+\sum_{t=1}^s v_t\rho_t+\sum_{t=1}^s\ip{e_t}{A_t}.
\label{eq:prefix-summed}
\end{align}
The two sums containing $\Delta_j$ have the following exact expansions:
\begin{align*}
\sum_{t=1}^s\eta_tv_t\Delta_{t+1}
&=\eta_sv_s\Delta_{s+1}+\sum_{j=2}^s\eta_{j-1}v_{j-1}\Delta_j,\\
\sum_{t=1}^s\eta_t\sum_{j=1}^t(v_j-v_{j-1})\Delta_j
&=\sum_{j=1}^s(v_j-v_{j-1})\Delta_j\sum_{t=j}^s\eta_t\\
&=\sum_{j=2}^s(v_j-v_{j-1})(S_j-S_{s+1})\Delta_j.
\end{align*}
The last equality uses $v_1=v_0$. For $2\le j\le s$,
\begin{align*}
S_{j-1}-S_j&=\eta_{j-1},\\
v_j-v_{j-1}
&=\eta_T\left(\frac1{S_j}-\frac1{S_{j-1}}\right)
=\frac{\eta_T\eta_{j-1}}{S_jS_{j-1}},\\
\eta_{j-1}v_{j-1}-(v_j-v_{j-1})(S_j-S_{s+1})
&=S_j(v_j-v_{j-1})-(S_j-S_{s+1})(v_j-v_{j-1})\\
&=S_{s+1}(v_j-v_{j-1}).
\end{align*}
Consequently,
\begin{align*}
\sum_{t=1}^s\eta_tv_t\Delta_{t+1}
-\sum_{t=1}^s\eta_t\sum_{j=1}^t(v_j-v_{j-1})\Delta_j
&=\eta_sv_s\Delta_{s+1}
+S_{s+1}\sum_{j=2}^s(v_j-v_{j-1})\Delta_j.
\end{align*}
This proves \eqref{eq:prefix}. Its left-hand side is nonnegative because
$P_s\ge0$, $\Delta_j\ge0$, $S_{s+1}\ge0$, and $v_j\ge v_{j-1}$.
\end{proof}

\begin{lemma}\label{lem:weights}
For $\eta_t=\eta(T-t+1)/T$, with $\eta>0$, the quantities in
\eqref{eq:potential} satisfy
\begin{equation}\label{eq:weights}
v_t=\frac2{k(k+1)},\quad
P_0\le\frac{R^2}{T(T+1)}\le\frac{R^2}{T^2},\quad
\eta_t^2v_t\le\frac{2\eta^2}{T^2},\quad
\eta_t\sqrt{v_{t-1}}\le\frac{\sqrt2\eta}{T},
\end{equation}
where $k=T-t+1$. Moreover, $v_T=1$ and $\eta_T=\eta/T$.
\end{lemma}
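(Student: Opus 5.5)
The argument is a direct computation from the definitions in \eqref{eq:potential}. First, with $k=T-t+1$ we have $\eta_j=\eta(T-j+1)/T$, so
\[
S_t=\frac{\eta}{T}\sum_{j=t}^T(T-j+1)=\frac{\eta}{T}\sum_{i=1}^{k}i=\frac{\eta k(k+1)}{2T}.
\]
Since $\eta_T=\eta/T$, this gives $v_t=\eta_T/S_t=2/(k(k+1))$. Setting $t=T$ (so $k=1$) yields $v_T=1$ and $\eta_T=\eta/T$, which settles the last sentence of the lemma.

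For $P_0$ we use $v_0=v_1$, and $t=1$ corresponds to $k=T$, so $v_0=2/(T(T+1))$. Then $D(x^\star,x_1)\le R^2/2$ by Assumption~\ref{ass:convex}, and therefore
\[
P_0\le\frac{R^2}{T(T+1)}\le\frac{R^2}{T^2}.
\]

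For the third bound, $\eta_t=\eta k/T$ gives
\[
\eta_t^2v_t=\frac{\eta^2k^2}{T^2}\cdot\frac{2}{k(k+1)}=\frac{2\eta^2}{T^2}\cdot\frac{k}{k+1}\le\frac{2\eta^2}{T^2}.
\]

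The last bound is the only step needing care, because of the index shift in $v_{t-1}$. For $t\ge2$, $t-1$ corresponds to $k+1$, so $v_{t-1}=2/((k+1)(k+2))$. Hence
\[
\eta_t^2v_{t-1}=\frac{2\eta^2}{T^2}\cdot\frac{k^2}{(k+1)(k+2)}\le\frac{2\eta^2}{T^2}.
\]
For $t=1$ we have $v_0=v_1$ with $k=T$, which gives $\eta_1^2v_0=2\eta^2T/(T^2(T+1))\le 2\eta^2/T^2$. Taking square roots in both cases gives $\eta_t\sqrt{v_{t-1}}\le\sqrt2\,\eta/T$.

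The only potential obstacle is keeping the conventions $v_0=v_1$ and $S_{T+1}=0$ straight at the boundary indices $t=1$ and $t=T$; everything else is routine arithmetic.
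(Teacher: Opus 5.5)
Your proof is correct and follows the paper's own direct computation of $S_t$, $v_t$, $P_0$, and $\eta_t^2v_t$. The only difference is in the last bound. The paper avoids your case split by noting that the weights increase, $v_{t-1}\le v_t$ (with $v_0=v_1$), so $\eta_t\sqrt{v_{t-1}}\le\sqrt{\eta_t^2v_t}$ follows at once from the third bound; your explicit evaluation of $v_{t-1}$, which the paper itself uses in the proof of Theorem~\ref{thm:composite}, is equally valid.
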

\begin{proof}
For $k=T-t+1$, the sum defining $S_t$ becomes
\begin{align*}
S_t&=\frac\eta T\sum_{j=t}^T(T-j+1)
=\frac\eta T\sum_{i=1}^k i=\frac{\eta k(k+1)}{2T},\\
v_t&=\frac{\eta/T}{\eta k(k+1)/(2T)}=\frac2{k(k+1)},\\
P_0&=\frac2{T(T+1)}\frac{\norm{x^\star-x_1}^2}{2}
\le\frac{R^2}{T(T+1)}\le\frac{R^2}{T^2},\\
\eta_t^2v_t&=\frac{\eta^2k^2}{T^2}\frac2{k(k+1)}
=\frac{2\eta^2}{T^2}\frac{k}{k+1}\le\frac{2\eta^2}{T^2}.
\end{align*}
Since $v_{t-1}\le v_t$, the last bound also gives
\[
\eta_t\sqrt{v_{t-1}}\le\eta_t\sqrt{v_t}
=\sqrt{\eta_t^2v_t}\le\frac{\sqrt2\eta}{T}.
\]
Putting $t=T$ in the definitions gives $v_T=1$ and $\eta_T=\eta/T$.
All these calculations include $T=1$.
\end{proof}

\subsection{Clipping error}
\begin{lemma}\label{lem:clip}
Let $\lambda>0$, $1<p\le2$, and let $h$ be measurable with respect to the conditioning sigma-field,
with $\norm h\le\lambda/2$. Suppose that $g=h+\varepsilon$,
$\Et\varepsilon=0$, and $\Et\norm\varepsilon^p\le\sigma^p$.
Define $e=\clip(g,\lambda)-h$, $b=\Et e$, and $\xi=e-b$. Then
\begin{align}
\norm e&\le\frac32\lambda,
&\Et\norm e^2&\le V:=2\sigma^p\lambda^{2-p},\label{eq:clipsecond}\\
\norm b&\le B:=2\sigma^p\lambda^{1-p},
&\norm\xi&\le2\lambda,
&\Et\norm\xi^2&\le V.\label{eq:clipbias}
\end{align}
\end{lemma}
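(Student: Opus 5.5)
We prove the five bounds in the order they appear, since the later ones reuse the earlier ones. Write $\widehat g=\clip(g,\lambda)$; then $e=\widehat g-h$.

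\emph{Bound on $\norm e$.} Since $\norm{\widehat g}\le\lambda$ and $\norm h\le\lambda/2$, the triangle inequality gives $\norm e\le\frac32\lambda$ at once.

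\emph{Second moment of $e$.} Split on the event $E=\{\norm\varepsilon>\lambda/2\}$.
\begin{itemize}
\item On $E^c$ we have $\norm g\le\norm h+\norm\varepsilon\le\lambda$, so $g$ is not clipped and $e=\varepsilon$. Hence
\[
\norm e^2=\norm\varepsilon^2=\norm\varepsilon^p\norm\varepsilon^{2-p}\le\norm\varepsilon^p(\lambda/2)^{2-p}.
\]
\item On $E$ we use $\norm e\le\frac32\lambda$. Since $\lambda/2<\norm\varepsilon$, we have $\lambda<2\norm\varepsilon$, so
\[
\norm e^2\le\tfrac94\lambda^2=\tfrac94\lambda^p\lambda^{2-p}\le\tfrac94\,2^p\norm\varepsilon^p\lambda^{2-p}.
\]
\end{itemize}
This second constant is too large. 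Refine it with $\norm e\le\norm{\widehat g-g}+\norm\varepsilon$ and $\norm{\widehat g-g}=(\norm g-\lambda)_+\le\norm\varepsilon-\lambda/2$ on $E$. This gives $\norm e\le2\norm\varepsilon$, and also $\norm e\le\frac32\lambda\le3\norm\varepsilon$.

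I expect the main obstacle to be making the constant come out exactly as $2$. A cleaner route is to write $\norm e\le\min\{\frac32\lambda,2\norm\varepsilon\}$, so that
\[
\norm e^2\le\left(\tfrac32\lambda\right)^{2-p}(2\norm\varepsilon)^p.
\]
Taking expectations yields a constant $(3/2)^{2-p}2^p\le4$. Sharpening this to $2$ will likely need the case split above: on $E^c$ the constant is $2^{p-2}\le1$, and on $E$ one uses $\norm e\le\norm\varepsilon+(\norm\varepsilon-\lambda/2)$. I would tune the interpolation until $V=2\sigma^p\lambda^{2-p}$ holds.

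\emph{Bias.} Since $\Et\varepsilon=0$, we have $b=\Et[\widehat g-g]$. The difference $\widehat g-g$ vanishes unless $\norm g>\lambda$, which forces $\norm\varepsilon>\lambda/2$. Therefore
\[
\norm b\le\Et\bigl[(\norm g-\lambda)_+\bigr]\le\Et\bigl[\norm\varepsilon\ind_E\bigr]\le\Et\bigl[\norm\varepsilon^p(\lambda/2)^{1-p}\bigr]=2^{p-1}\sigma^p\lambda^{1-p}\le2\sigma^p\lambda^{1-p}.
\]
Here the third step uses Markov's trick: on $E$, $\norm\varepsilon=\norm\varepsilon^p\norm\varepsilon^{1-p}\le\norm\varepsilon^p(\lambda/2)^{1-p}$.

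\emph{Bounds on $\xi$.} For the almost-sure bound, $\norm\xi=\norm{\widehat g-\Et\widehat g}$. Both $\widehat g$ and $\Et\widehat g$ lie in the ball of radius $\lambda$, so $\norm\xi\le2\lambda$. For the second moment, the variance is at most the second moment about any point:
\[
\Et\norm\xi^2=\Et\norm e^2-\norm b^2\le V.
\]
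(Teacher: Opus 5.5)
Your bounds $\norm e\le\frac32\lambda$, $\norm b\le2\sigma^p\lambda^{1-p}$ and $\norm\xi\le2\lambda$ are correct and match the paper, and so is the identity $\Et\norm\xi^2=\Et\norm e^2-\norm b^2$. The gap is the second-moment bound $\Et\norm e^2\le V=2\sigma^p\lambda^{2-p}$, which you do not establish.

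The rigorous estimate you reach is $\norm e\le\min\{\frac32\lambda,2\norm\varepsilon\}$. It gives the constant $(3/2)^{2-p}2^p$, which lies in $(3,4]$ for $1<p\le2$. The proposed tuning of the case split cannot repair this. On $E$ your pointwise information is only $\norm e\le\min\{\frac32\lambda,\,2\norm\varepsilon-\lambda/2\}$. At $\norm\varepsilon=\lambda$ this gives $\norm e^2\le\frac94\lambda^2$, while the target needs $2\lambda^{2-p}\norm\varepsilon^p=2\lambda^2$ there.

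This is not just a pointwise artifact. Take $\varepsilon=\pm\lambda u$ with equal probabilities: it has mean zero and $\Et\norm\varepsilon^p=\lambda^p$, so $\sigma=\lambda$ is admissible. Your bounds then certify only $\Et\norm e^2\le\frac94\lambda^2>2\lambda^2=V$. The last claim, $\Et\norm\xi^2\le V$, inherits the same gap.

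The missing idea is that your triangle inequality $\norm e\le\norm{\widehat g-g}+\norm\varepsilon$ is lossy. Clipping is the Euclidean projection onto the closed ball of radius $\lambda$, so it is $1$-Lipschitz. Also $\clip(h,\lambda)=h$ because $\norm h\le\lambda/2$. Therefore $\norm e=\norm{\clip(g,\lambda)-\clip(h,\lambda)}\le\norm{g-h}=\norm\varepsilon$ on every sample path, not only on $E^c$.

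Together with $\norm e\le\frac32\lambda$, this gives $\norm e^2=\norm e^{2-p}\norm e^p\le(\frac32\lambda)^{2-p}\norm\varepsilon^p$. Since $(3/2)^{2-p}\le3/2<2$, taking conditional expectations yields $V$ at once, with no case split.

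Your weaker constant would only change numerical constants downstream. However, the lemma as stated uses the factor $2$, and so does the main proof's arithmetic, for example $2VT+3\lambda^2\ell\le7\lambda^2\ell$.
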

\begin{proof}
Clipping is Euclidean projection onto the ball of radius $\lambda$.
Its nonexpansiveness follows from the proximal argument in
Lemma~\ref{lem:radius}, with $r$ the indicator of that ball. Since
$\clip(h,\lambda)=h$,
\begin{align*}
\norm e
&=\norm{\clip(g,\lambda)-\clip(h,\lambda)}\le\norm{g-h}=\norm\varepsilon,\\
\norm e
&\le\norm{\clip(g,\lambda)}+\norm h\le\lambda+\lambda/2=3\lambda/2,\\
\Et\norm e^2
&\le(3\lambda/2)^{2-p}\Et\norm e^p
\le(3\lambda/2)^{2-p}\Et\norm\varepsilon^p
\le2\sigma^p\lambda^{2-p}.
\end{align*}
The last line uses $0\le2-p<1$; when $p=2$, its first inequality is
simply an equality.

For the bias, $\Et(g-h)=0$ implies
\[
b=\Et[\clip(g,\lambda)-g],\qquad
\norm{\clip(g,\lambda)-g}=(\norm g-\lambda)\ind_{\{\norm g>\lambda\}}.
\]
On $\{\norm g>\lambda\}$, the triangle inequality gives
\[
\norm\varepsilon\ge\norm g-\norm h>\lambda/2,
\qquad \norm g-\lambda\le\norm\varepsilon+\norm h-\lambda\le\norm\varepsilon.
\]
Thus
\begin{align*}
\norm b
&\le\Et\norm{\clip(g,\lambda)-g}\\
&\le\Et\bigl[\norm\varepsilon\ind_{\{\norm\varepsilon>\lambda/2\}}\bigr]\\
&=\Et\bigl[\norm\varepsilon^p\norm\varepsilon^{1-p}
\ind_{\{\norm\varepsilon>\lambda/2\}}\bigr]\\
&\le(\lambda/2)^{1-p}\Et\norm\varepsilon^p
\le2\sigma^p\lambda^{1-p}.
\end{align*}
The product in the third line is evaluated only on
$\{\norm\varepsilon>\lambda/2\}$. Finally,
\begin{align*}
\xi&=\clip(g,\lambda)-\Et\clip(g,\lambda),
&\Et\xi&=0,\\
\norm\xi&\le\norm{\clip(g,\lambda)}+\Et\norm{\clip(g,\lambda)}\le2\lambda,\\
\Et\norm\xi^2
&=\Et\bigl[\norm e^2-2\ip e b+\norm b^2\bigr]\\
&=\Et\norm e^2-2\ip{\Et e}b+\norm b^2
=\Et\norm e^2-\norm b^2\le V.
\end{align*}
Here $b$ is measurable before the current sample is observed.
\end{proof}

\subsection{Concentration inequalities}
The following lemmas use the filtration $\F_0,\ldots,\F_T$ and
$\Et[\cdot]=\E[\cdot\mid\F_{t-1}]$.

\begin{lemma}\label{lem:nonnegative}
Suppose $X_t$ is $\F_t$-measurable, $0\le X_t\le H$ almost surely,
and $\Et X_t\le m_t$, where $m_t\ge0$ is $\F_{t-1}$-measurable and
$\sum_{t=1}^T m_t\le A$ almost surely for a deterministic $A$.
For $u>0$,
\[
\Prob\left\{\sum_{t=1}^T X_t\le2A+Hu\right\}\ge1-e^{-u}.
\]
\end{lemma}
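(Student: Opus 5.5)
We prove Lemma~\ref{lem:nonnegative} with the exponential-supermartingale (Chernoff) method, using the fact that for a variable in $[0,H]$ the moment generating function at the scale $1/H$ is governed by its mean alone. Set $\theta=1/H$. Since $x\mapsto e^{\theta x}$ is convex on $[0,H]$, it lies below its chord:
\[
e^{\theta x}\le1+\frac{x}{H}(e-1)\quad\text{for }0\le x\le H.
\]
Applied to $X_t$ and combined with $\Et X_t\le m_t$ and $1+y\le e^y$, this gives
\[
\Et e^{X_t/H}\le1+(e-1)\frac{m_t}{H}\le\exp\bigl((e-1)m_t/H\bigr).
\]
We could also use $e^{y}\le1+y+y^2$ for $y\le1$ together with $X_t^2\le HX_t$; this yields the same kind of bound with constant $2$ instead of $e-1$. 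Either version works, since all we need is a constant at most $2$.

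Next, define
\[
M_t=\exp\Bigl(\frac1H\sum_{s\le t}X_s-\frac{e-1}{H}\sum_{s\le t}m_s\Bigr),\qquad M_0=1.
\]
Because $m_t$ is $\F_{t-1}$-measurable, the previous step gives $\Et M_t\le M_{t-1}$. Hence $M_t$ is a nonnegative supermartingale and $\E M_T\le1$.

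Markov's inequality then gives $\Prob\{M_T\ge e^u\}\le e^{-u}$. On the complement of this event,
\[
\sum_t X_t<(e-1)\sum_t m_t+Hu\le(e-1)A+Hu\le2A+Hu,
\]
where we use $\sum_t m_t\le A$ almost surely and $e-1<2$. This is the claim. We also note that $A\ge0$ automatically, because the $m_t$ are nonnegative.

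There is no real obstacle here. The only point needing care is that the predictable bound $m_t$ enters the supermartingale as a random, $\F_{t-1}$-measurable compensator. The deterministic cap $A$ is used only at the end, pathwise, which avoids any need to condition on the sum.

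If $H=0$, the statement is trivial: then $X_t=0$, and the bound $0\le2A$ holds. So we may assume $H>0$ throughout.
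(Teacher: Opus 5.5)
Your proposal is correct and follows essentially the same route as the paper: the chord bound $e^{x/H}\le1+(e-1)x/H$ on $[0,H]$ gives the conditional moment-generating-function bound, the exponential supermartingale uses the predictable compensator $(e-1)\sum_t m_t/H$, and Markov's inequality then applies to $M_T$. Your explicit remark that $A\ge0$, which justifies $(e-1)A\le2A$, is a small point the paper uses without stating.
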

\begin{proof}
If $H=0$, then $X_t=0$ almost surely. For $H>0$, convexity of the
exponential gives, for $0\le x\le H$,
\[
\exp(x/H)\le(1-x/H)\exp(0)+(x/H)\exp(1)
=1+(\mathrm e-1)x/H.
\]
Consequently,
\[
\Et\exp(X_t/H)\le1+(\mathrm e-1)\Et X_t/H
\le1+(\mathrm e-1)m_t/H\le\exp((\mathrm e-1)m_t/H).
\]
Set
\[
Z_s=\exp\left(\frac1H\sum_{t=1}^s X_t
-\frac{\mathrm e-1}{H}\sum_{t=1}^s m_t\right),\qquad Z_0=1.
\]
Since $Z_{s-1}$ and $m_s$ are $\F_{s-1}$-measurable,
\begin{align*}
\E[Z_s\mid\F_{s-1}]
&=Z_{s-1}\exp\left(-\frac{\mathrm e-1}{H}m_s\right)
\E[\exp(X_s/H)\mid\F_{s-1}]\le Z_{s-1},\\
\E Z_T&\le\E Z_{T-1}\le\cdots\le\E Z_0=1.
\end{align*}
Using $\sum m_t\le A$, $\mathrm e-1<2$, and Markov's inequality,
\begin{align*}
\Prob\left\{\sum_{t=1}^T X_t>2A+Hu\right\}
&\le\Prob\left\{\sum_{t=1}^T X_t>(\mathrm e-1)A+Hu\right\}\\
&\le\Prob\{Z_T>e^u\}\le e^{-u}\E Z_T\le e^{-u}.
\end{align*}
\end{proof}

\begin{lemma}\label{lem:freedman}
Let $M_s=\sum_{t=1}^s Y_t$, where $Y_t$ is $\F_t$-measurable,
$\Et Y_t=0$, $|Y_t|\le a$, and $\sum_{t=1}^T\Et Y_t^2\le w$ almost
surely for deterministic $a,w\ge0$. For every $u>0$,
\begin{equation}\label{eq:freedman}
\Prob\left\{\max_{1\le s\le T}M_s\le\sqrt{2wu}+2au\right\}\ge1-e^{-u}.
\end{equation}
\end{lemma}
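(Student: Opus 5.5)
This is Freedman's inequality with a maximal statement. I will build the usual exponential supermartingale and then apply Ville's maximal inequality, so that the maximum over $s$ costs nothing.

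\textbf{Degenerate case.} If $a=0$, then $Y_t=0$ almost surely and the claim is trivial. Assume $a>0$.

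\textbf{Step 1: one-step exponential bound.} Fix $\theta\in(0,1/a)$ and set $\phi(x)=e^x-1-x$. Since $\phi(x)/x^2$ is nondecreasing and $\theta Y_t\le\theta a$,
\[
e^{\theta Y_t}\le1+\theta Y_t+\frac{\phi(\theta a)}{a^2}Y_t^2 .
\]
Taking $\Et$ and using $\Et Y_t=0$ together with $1+x\le e^x$ gives
\[
\Et e^{\theta Y_t}\le\exp\bigl(\psi(\theta)\,\Et Y_t^2\bigr),\qquad \psi(\theta)=\frac{\phi(\theta a)}{a^2}.
\]
We control $\psi$ through the series bound $\phi(x)\le\frac{x^2}{2(1-x/3)}$ for $0\le x<3$. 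Restricting to $\theta a\le 3/2$ (only $\theta a\le 1$ is needed below) gives $\psi(\theta)\le\frac{\theta^2}{2(1-\theta a/3)}$. Even the cruder bound $\psi(\theta)\le\frac{\theta^2}{2(1-\theta a)}$ is enough for the final constant.

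\textbf{Step 2: supermartingale and maximal bound.} Define
\[
Z_s=\exp\Bigl(\theta M_s-\psi(\theta)\sum_{t\le s}\Et Y_t^2\Bigr).
\]
By Step 1 and the tower property, exactly as in the proof of Lemma~\ref{lem:nonnegative}, $Z_s$ is a nonnegative supermartingale with $Z_0=1$. Ville's maximal inequality (equivalently, Doob's inequality applied to the stopped process at the first $s$ where $Z_s\ge e^u$) gives
\[
\Prob\{\max_s Z_s\ge e^u\}\le e^{-u}.
\]
On the complement, for every $s$ we have $\sum_{t\le s}\Et Y_t^2\le w$ and $\psi\ge0$, hence
\[
M_s\le \frac{u}{\theta}+\frac{\psi(\theta)w}{\theta}\le \frac u\theta+\frac{\theta w}{2(1-\theta a)} .
\]

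\textbf{Step 3: choose $\theta$.} If $w=0$, let $\theta\to 1/a$ (or take $\theta=1/(2a)$) to get $M_s\le 2au$. Otherwise set
\[
\theta=\frac{1}{a+\sqrt{w/(2u)}}<\frac1a .
\]
Then $\frac1\theta=a+\sqrt{w/(2u)}$ and $1-\theta a=\theta\sqrt{w/(2u)}$, so
\[
\frac{\theta w}{2(1-\theta a)}=\frac{w}{2\sqrt{w/(2u)}}=\sqrt{wu/2}.
\]
Substituting into Step 2,
\[
M_s\le au+\sqrt{wu/2}+\sqrt{wu/2}=au+\sqrt{2wu}\le\sqrt{2wu}+2au .
\]
This is \eqref{eq:freedman}.

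The only step that needs care is the bound on $\psi$ in Step 1, namely the convexity or series estimate for $\phi$ when $\theta a<1$. The choice of $\theta$ in Step 3 then closes the argument with room to spare in the constant in front of $au$.
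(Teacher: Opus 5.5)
Your proof is correct and follows the paper's argument. Both proofs bound the one-step moment generating function by $\exp\bigl(\theta^2\Et Y_t^2/(2(1-\theta a/3))\bigr)$: the paper does this termwise from the series using $|Y_t|^k\le a^{k-2}Y_t^2$, and you do it via monotonicity of $\phi(x)/x^2$. Both then build the same exponential supermartingale and use the same stopping-time (Ville) maximal inequality. The only real difference is the last step: the paper takes $\theta=r/(w+ar/3)$ to obtain the Bernstein tail $\exp(-r^2/(2(w+ar/3)))$ and then checks $r=\sqrt{2wu}+2au$ by expansion, whereas your direct choice $\theta=1/(a+\sqrt{w/(2u)})$ is shorter and gives the slightly sharper threshold $\sqrt{2wu}+au$.
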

\begin{proof}
If $a=0$, all increments vanish. If $w=0$, then
$\sum_t\E Y_t^2=\E\sum_t\Et Y_t^2=0$, so the same conclusion holds.
Assume $a,w>0$ and fix $0<\theta<3/a$. For $k\ge2$,
$k!\ge2\cdot3^{k-2}$ and $|Y_t|^k\le a^{k-2}Y_t^2$. Hence
\begin{align*}
\Et e^{\theta Y_t}
&=1+\theta\Et Y_t+\sum_{k=2}^\infty\frac{\theta^k}{k!}\Et Y_t^k\\
&\le1+\sum_{k=2}^\infty\frac{\theta^ka^{k-2}}{k!}\Et Y_t^2\\
&\le1+\frac{\theta^2}2\Et Y_t^2\sum_{k=2}^\infty(\theta a/3)^{k-2}\\
&=1+\frac{\theta^2\Et Y_t^2}{2(1-\theta a/3)}
\le\exp\left(\frac{\theta^2\Et Y_t^2}{2(1-\theta a/3)}\right).
\end{align*}
Boundedness of $Y_t$ justifies taking conditional expectations of
the absolutely convergent series. Define
\[
W_s=\sum_{t=1}^s\Et Y_t^2,\qquad
Z_s=\exp\left(\theta M_s-\frac{\theta^2W_s}{2(1-\theta a/3)}\right),
\qquad Z_0=1.
\]
The preceding conditional bound gives
\[
\E[Z_s\mid\F_{s-1}]
=Z_{s-1}\exp\left(-\frac{\theta^2\E[Y_s^2\mid\F_{s-1}]}{2(1-\theta a/3)}\right)
\E[e^{\theta Y_s}\mid\F_{s-1}]
\le Z_{s-1}.
\]
For $c>1$, let $\tau$ be the first $s\in\{1,\ldots,T\}$ with $Z_s\ge c$,
and let $\tau=T+1$ if there is no such index. Since
$\{\tau\ge s\}\in\F_{s-1}$,
\begin{align*}
Z_{s\wedge\tau}-Z_{(s-1)\wedge\tau}
&=\ind_{\{\tau\ge s\}}(Z_s-Z_{s-1}),\\
\E[Z_{s\wedge\tau}-Z_{(s-1)\wedge\tau}\mid\F_{s-1}]
&\le0,\\
c\Prob\{\tau\le T\}&\le\E Z_{T\wedge\tau}\le1.
\end{align*}
Thus $\Prob\{\max_{s\le T}Z_s\ge c\}\le c^{-1}$.

For a threshold $r>0$, take $\theta=r/(w+ar/3)$. Then $\theta a<3$ and
\[
1-\theta a/3=\frac{w}{w+ar/3},\qquad
\theta r-\frac{\theta^2w}{2(1-\theta a/3)}
=\frac{r^2}{2(w+ar/3)}.
\]
Since $W_s\le w$, we obtain
\begin{align*}
\Prob\{\max_{s\le T}M_s\ge r\}
&\le\Prob\left\{\max_{s\le T}Z_s
\ge\exp\left(\frac{r^2}{2(w+ar/3)}\right)\right\}\\
&\le\exp\left(-\frac{r^2}{2(w+ar/3)}\right).
\end{align*}
For $r=\sqrt{2wu}+2au$, direct expansion gives
\begin{align*}
r^2&=2wu+4au\sqrt{2wu}+4a^2u^2,\\
2u(w+ar/3)&=2wu+\frac23au\sqrt{2wu}+\frac43a^2u^2,\\
r^2-2u(w+ar/3)&=\frac{10}3au\sqrt{2wu}+\frac83a^2u^2\ge0.
\end{align*}
The last probability is therefore at most $e^{-u}$. This proves the
stated maximal form of Freedman's inequality \citep{freedman1975};
the maximum over $s$ requires no additional union bound.
\end{proof}

\section{Proof of Theorem~\ref{thm:main}}\label{app:main}
Write $K=128$ and $\Delta_t=f(x_t)-f(x^\star)$. The parameters are
\[
\ell=\log(8/\delta),\qquad
\lambda=\max\{4LR,\sigma(T/\ell)^{1/p}\},\qquad
\eta=\frac{R}{K\lambda\ell},\qquad
\eta_t=\eta\frac{T-t+1}{T}.
\]

\subsection{One-step inequalities and the stopping rule}
Define
\begin{equation}\label{eq:main-error}
e_t=\clip(g_t,\lambda)-\nabla f(x_t),\qquad
b_t=\Et e_t,\qquad \xi_t=e_t-b_t.
\end{equation}
Then $\Et\xi_t=0$ and
$x_{t+1}=x_t-\eta_t(\nabla f(x_t)+e_t)$. Moreover,
\[
\eta_t\le\eta=\frac{R}{K\lambda\ell}
\le\frac{R}{4KLR\ell}=\frac1{4KL\ell}\le\frac1{2L}.
\]
Lemmas~\ref{lem:proxstep} and \ref{lem:radius}, with $r=0$, imply
\begin{align}
\eta_t(f(x_{t+1})-f(u))
&\le D(u,x_t)-D(u,x_{t+1})
+\eta_t\ip{e_t}{u-x_t}+\eta_t^2\norm{e_t}^2,\label{eq:main-step}\\
\norm{x_{t+1}-x^\star}^2
&\le\norm{x_t-x^\star}^2
-2\eta_t\ip{e_t}{x_t-x^\star}+2\eta_t^2\norm{e_t}^2.
\label{eq:main-radius}
\end{align}
Use $S_t,v_t,z_t,P_t$ from \eqref{eq:potential} and
$A_t=\eta_tv_{t-1}(z_{t-1}-x_t)$. Applying Lemma~\ref{lem:prefix}
to \eqref{eq:main-step} gives, for every $1\le s\le T$,
\begin{align*}
P_s+\eta_sv_s\Delta_{s+1}
&\le P_s+\eta_sv_s\Delta_{s+1}
+S_{s+1}\sum_{j=2}^s(v_j-v_{j-1})\Delta_j\\
&\le P_0+\sum_{t=1}^s\eta_t^2v_t\norm{e_t}^2
+\sum_{t=1}^s\ip{e_t}{A_t}.
\end{align*}
In particular, the following slightly looser bound will also be used
in the smooth extensions:
\begin{equation}\label{eq:main-prefix}
P_s+\eta_sv_s\Delta_{s+1}
\le P_0+2\sum_{t=1}^s\eta_t^2v_t\norm{e_t}^2
+\sum_{t=1}^s\ip{e_t}{A_t}.
\end{equation}
Lemma~\ref{lem:weights} gives
\begin{equation}\label{eq:main-weights}
P_0\le\frac{R^2}{T^2},\qquad
2\eta_t^2v_t\le\frac{4\eta^2}{T^2},\qquad
\eta_t\sqrt{v_{t-1}}\le\frac{\sqrt2\eta}{T}.
\end{equation}

To bound the gradient and the coefficient $A_t$, let
\[
\tau=\min\left\{t\in\{1,\ldots,T+1\}:
\norm{x_t-x^\star}>2R\ \text{or}\ P_{t-1}>\frac{4R^2}{T^2}\right\},
\qquad I_t=\ind_{\{t<\tau\}},
\]
with $\min\varnothing=\infty$. For $1\le t\le T$,
\[
\{t<\tau\}
=\bigcap_{j=1}^t\left\{\norm{x_j-x^\star}\le2R,
\ P_{j-1}\le\frac{4R^2}{T^2}\right\}\in\F_{t-1}.
\]
Indeed, each $x_j$ and $P_{j-1}$ is known before query $j$.
Thus $I_t$ is predictable, meaning that it is known before $g_t$ is
observed. The initial bounds imply $\tau\ne1$.

On $\{I_t=1\}$, unconstrained optimality gives $\nabla f(x^\star)=0$, so
\[
\norm{\nabla f(x_t)}
=\norm{\nabla f(x_t)-\nabla f(x^\star)}
\le L\norm{x_t-x^\star}\le2LR\le\lambda/2.
\]
Lemma~\ref{lem:clip} therefore gives, on this predictable event,
\begin{align}
\norm{e_t}&\le\tfrac32\lambda,
&\norm{b_t}&\le B:=2\sigma^p\lambda^{1-p},
&\Et\norm{e_t}^2&\le V:=2\sigma^p\lambda^{2-p},\label{eq:main-moments}\\
\norm{\xi_t}&\le3\lambda,
&\Et\norm{\xi_t}^2&\le V.\notag
\end{align}
We use $3\lambda$ in place of the lemma's bound $2\lambda$.
Since $\lambda^p\ge\sigma^pT/\ell$,
\begin{equation}\label{eq:main-budgets}
BT=2\lambda\frac{\sigma^pT}{\lambda^p}\le2\lambda\ell,
\qquad
VT=2\lambda^2\frac{\sigma^pT}{\lambda^p}\le2\lambda^2\ell.
\end{equation}
Also, $P_{t-1}=v_{t-1}\norm{z_{t-1}-x_t}^2/2$ implies
\begin{align}
I_t\norm{A_t}
&=I_t\eta_tv_{t-1}\norm{z_{t-1}-x_t}
=I_t\eta_t\sqrt{v_{t-1}}\sqrt{2P_{t-1}}\notag\\
&\le\frac{\sqrt2\eta}{T}\sqrt{\frac{8R^2}{T^2}}
=\frac{4\eta R}{T^2}.\label{eq:main-A}
\end{align}

\subsection{Bounds for the stopped errors}
First, by predictability and \eqref{eq:main-moments},
\begin{align*}
0\le I_t\norm{e_t}^2&\le\frac94\lambda^2\le3\lambda^2,\\
\Et[I_t\norm{e_t}^2]&=I_t\Et\norm{e_t}^2\le I_tV,\\
\sum_{t=1}^T\Et[I_t\norm{e_t}^2]&\le VT.
\end{align*}
Lemma~\ref{lem:nonnegative}, with $H=3\lambda^2$, $A=VT$, and $u=\ell$,
gives, with probability at least $1-e^{-\ell}$,
\begin{equation}\label{eq:main-Q}
\sum_{t=1}^T I_t\norm{e_t}^2
\le2VT+3\lambda^2\ell
\le4\lambda^2\ell+3\lambda^2\ell=7\lambda^2\ell.
\end{equation}

For the radius recursion, define
\[
M_s^r=-2\sum_{t=1}^s I_t\eta_t\ip{\xi_t}{x_t-x^\star},\qquad M_0^r=0.
\]
Its conditional means, increment bound, and variance bound are
\begin{align*}
\Et[M_t^r-M_{t-1}^r]
&=-2I_t\eta_t\ip{\Et\xi_t}{x_t-x^\star}=0,\\
|M_t^r-M_{t-1}^r|
&\le2I_t\eta_t\norm{\xi_t}\norm{x_t-x^\star}
\le2\eta(3\lambda)(2R)=12\eta R\lambda,\\
\Et[(M_t^r-M_{t-1}^r)^2]
&\le4I_t\eta_t^2\norm{x_t-x^\star}^2\Et\norm{\xi_t}^2
\le16\eta^2R^2V,\\
\sum_{t=1}^T\Et[(M_t^r-M_{t-1}^r)^2]
&\le16\eta^2R^2VT.
\end{align*}
Lemma~\ref{lem:freedman}, with $a=12\eta R\lambda$,
$w=16\eta^2R^2VT$, and $u=\ell$, gives, with probability at least
$1-e^{-\ell}$,
\begin{align}
\max_{s\le T}M_s^r
&\le\sqrt{2(16\eta^2R^2VT)\ell}+2(12\eta R\lambda)\ell\notag\\
&=\eta R\sqrt{32VT\ell}+24\eta R\lambda\ell\notag\\
&\le\eta R\sqrt{64\lambda^2\ell^2}+24\eta R\lambda\ell
=32\eta R\lambda\ell=\frac{32R^2}{K}.\label{eq:main-Mr}
\end{align}
The corresponding bias is bounded on every sample path:
\begin{align}
2\sum_{t=1}^T I_t\eta_t|\ip{b_t}{x_t-x^\star}|
&\le2\sum_{t=1}^T I_t\eta_t\norm{b_t}\norm{x_t-x^\star}\notag\\
&\le4\eta RBT\le8\eta R\lambda\ell=\frac{8R^2}{K}.
\label{eq:main-rbias}
\end{align}

For the comparison potential, define
\[
M_s^P=\sum_{t=1}^s I_t\ip{\xi_t}{A_t},\qquad M_0^P=0.
\]
The vector $A_t$ is $\F_{t-1}$-measurable. By \eqref{eq:main-A},
\begin{align*}
\Et[M_t^P-M_{t-1}^P]&=I_t\ip{\Et\xi_t}{A_t}=0,\\
|M_t^P-M_{t-1}^P|
&\le I_t\norm{\xi_t}\norm{A_t}
\le(3\lambda)\frac{4\eta R}{T^2}=\frac{12\eta R\lambda}{T^2},\\
\Et[(M_t^P-M_{t-1}^P)^2]
&\le I_t\norm{A_t}^2\Et\norm{\xi_t}^2
\le\frac{16\eta^2R^2V}{T^4},\\
\sum_{t=1}^T\Et[(M_t^P-M_{t-1}^P)^2]
&\le\frac{16\eta^2R^2VT}{T^4}.
\end{align*}
Applying Lemma~\ref{lem:freedman} with these two bounds gives, with
probability at least $1-e^{-\ell}$,
\begin{align}
\max_{s\le T}M_s^P
&\le\sqrt{2\frac{16\eta^2R^2VT}{T^4}\ell}
+2\frac{12\eta R\lambda}{T^2}\ell\notag\\
&=\frac{\eta R}{T^2}\sqrt{32VT\ell}
+\frac{24\eta R\lambda\ell}{T^2}\notag\\
&\le\frac{8\eta R\lambda\ell}{T^2}
+\frac{24\eta R\lambda\ell}{T^2}
=\frac{32R^2}{KT^2}.\label{eq:main-MP}
\end{align}
The bias satisfies
\begin{align}
\sum_{t=1}^T I_t|\ip{b_t}{A_t}|
&\le\sum_{t=1}^T I_t\norm{b_t}\norm{A_t}
\le\frac{4\eta RBT}{T^2}\notag\\
&\le\frac{8\eta R\lambda\ell}{T^2}
=\frac{8R^2}{KT^2}.\label{eq:main-Pbias}
\end{align}
Let $E$ be the event on which \eqref{eq:main-Q}, \eqref{eq:main-Mr},
and \eqref{eq:main-MP} all hold. A union bound gives
\[
\Prob(E^c)\le3e^{-\ell}=3\delta/8\le\delta.
\]
Both martingale bounds hold for every prefix on $E$.

\subsection{Removing the stopping rule and bounding the last iterate}
Fix a sample path in $E$ and an index $1\le s\le T$ with $s<\tau$.
Then $I_t=1$ for all $t\le s$. Summing \eqref{eq:main-radius} gives
\begin{align*}
\norm{x_{s+1}-x^\star}^2-\norm{x_1-x^\star}^2
&=\sum_{t=1}^s\bigl(\norm{x_{t+1}-x^\star}^2-\norm{x_t-x^\star}^2\bigr)\\
&\le-2\sum_{t=1}^s\eta_t\ip{b_t+\xi_t}{x_t-x^\star}
+2\sum_{t=1}^s\eta_t^2\norm{e_t}^2.
\end{align*}
Using \eqref{eq:main-Q}, \eqref{eq:main-Mr}, and \eqref{eq:main-rbias},
we obtain
\begin{align}
\norm{x_{s+1}-x^\star}^2
&\le R^2+M_s^r+2\sum_{t=1}^s I_t\eta_t|\ip{b_t}{x_t-x^\star}|
+2\eta^2\sum_{t=1}^T I_t\norm{e_t}^2\notag\\
&\le R^2+\frac{32R^2}{K}+\frac{8R^2}{K}+14\eta^2\lambda^2\ell\notag\\
&=\left(1+\frac{40}{K}+\frac{14}{K^2\ell}\right)R^2
<2R^2.\label{eq:main-close-radius}
\end{align}
Using \eqref{eq:main-prefix} and the bounds for the comparison potential,
\begin{align}
P_s+\eta_sv_s\Delta_{s+1}
&\le P_0+2\sum_{t=1}^s\eta_t^2v_t\norm{e_t}^2
+\sum_{t=1}^s\ip{b_t+\xi_t}{A_t}\notag\\
&\le\frac{R^2}{T^2}+\frac{4\eta^2}{T^2}\sum_{t=1}^T I_t\norm{e_t}^2
+M_s^P+\sum_{t=1}^s I_t|\ip{b_t}{A_t}|\notag\\
&\le\frac{R^2}{T^2}+\frac{28\eta^2\lambda^2\ell}{T^2}
+\frac{32R^2}{KT^2}+\frac{8R^2}{KT^2}\notag\\
&=\left(1+\frac{40}{K}+\frac{28}{K^2\ell}\right)\frac{R^2}{T^2}
<\frac{2R^2}{T^2}.\label{eq:main-close-potential}
\end{align}
The parameter substitutions and numerical inequalities used above are
\[
\eta^2\lambda^2\ell=\frac{R^2}{K^2\ell},\qquad
1+\frac{40}{K}+\frac{28}{K^2\ell}
\le1+\frac{40}{128}+\frac{28}{128^2}<2,
\]
because $\ell>1$.

If $\tau\le T+1$, then $\tau\ne1$ and $s=\tau-1\in\{1,\ldots,T\}$.
Since $s<\tau$, the two preceding estimates apply and imply
\[
\norm{x_\tau-x^\star}^2<2R^2<4R^2,\qquad
P_{\tau-1}\le P_{\tau-1}+\eta_{\tau-1}v_{\tau-1}\Delta_\tau
<\frac{2R^2}{T^2}<\frac{4R^2}{T^2}.
\]
Neither stopping condition holds at $\tau$, a contradiction. Thus
$\tau=\infty$ on $E$, and $\norm{x_t-x^\star}\le2R$ for $1\le t\le T+1$.
Taking $s=T$ in \eqref{eq:main-close-potential} now gives
\begin{align*}
\frac\eta T\bigl(f(x_{T+1})-f(x^\star)\bigr)
&=\eta_Tv_T\Delta_{T+1}
\le P_T+\eta_Tv_T\Delta_{T+1}\le\frac{2R^2}{T^2},\\
f(x_{T+1})-f(x^\star)
&\le\frac{2R^2}{\eta T}
=\frac{2KR\lambda\ell}{T}=\frac{256R\lambda\ell}{T}\\
&\le\frac{256R\ell}{T}\left(4LR+\sigma(T/\ell)^{1/p}\right)\\
&=1024\frac{LR^2\ell}{T}
+256\sigma R\left(\frac\ell T\right)^{(p-1)/p}.
\end{align*}
The last inequality uses $\max\{a,b\}\le a+b$ for $a,b\ge0$.
\hfill$\square$

\section{Proof of Theorem~\ref{thm:nonsmooth}}\label{app:nonsmooth}

Let $K=128$. The parameters in the theorem are
\[
\ell=\log(4/\delta),\qquad
\lambda=\max\{2G,\sigma(T/\ell)^{1/p}\},\qquad
H=G\sqrt T+\lambda\ell,\qquad
\eta=\frac{R}{KH},\qquad
\eta_t=\eta\frac{T-t+1}{T}.
\]
Since $G+\sigma>0$, we have $\lambda,H>0$.
Write $\Delta_t=F(x_t)-F^\star$ and
$D(u,x)=\norm{u-x}^2/2$. The proximal updates belong to $\dom r$,
and $\Delta_t\ge0$ because $x^\star$ minimizes $F$.

\subsection{The one-step inequality and its telescoping sum}

Let $h_t=\Et g_t\in\partial f(x_t)$ and define
\begin{equation}\label{eq:ns-error}
e_t=\clip(g_t,\lambda)-h_t=b_t+\xi_t,
\qquad b_t=\Et e_t,\qquad \Et\xi_t=0.
\end{equation}
Lipschitz continuity gives $\norm{h_t}\le G\le\lambda/2$.
Lemma~\ref{lem:clip} therefore yields, for every $t$,
\begin{align}
\norm{b_t}&\le B:=2\sigma^p\lambda^{1-p},
&\Et\norm{e_t}^2&\le V:=2\sigma^p\lambda^{2-p},\label{eq:ns-moments}\\
\norm{e_t}&\le\frac32\lambda,
&\norm{\xi_t}&\le3\lambda.
\label{eq:ns-bounded}
\end{align}
The last bound is a relaxation of the lemma's $2\lambda$ bound. Also,
\begin{align*}
\Et\norm{\xi_t}^2
&=\Et\norm{e_t-b_t}^2
=\Et\norm{e_t}^2-2\ip{\Et e_t}{b_t}+\norm{b_t}^2\\
&=\Et\norm{e_t}^2-\norm{b_t}^2\le V.
\end{align*}
Since $\lambda^p\ge\sigma^pT/\ell$,
\begin{equation}\label{eq:ns-budgets}
BT=2\sigma^pT\lambda^{1-p}\le2\lambda\ell,
\qquad
VT=2\sigma^pT\lambda^{2-p}\le2\lambda^2\ell.
\end{equation}

Since $h_t+e_t=\clip(g_t,\lambda)$, the update is
$x_{t+1}=\prox_{\eta_t r}(x_t-\eta_t(h_t+e_t))$.
Apply the nonsmooth inequality of Lemma~\ref{lem:proxstep} with
$x=x_t$, $x^+=x_{t+1}$, $h=h_t$, $e=e_t$, and $\alpha=\eta_t$.
For every $u\in\dom r$, this gives
\begin{equation}\label{eq:ns-onestep}
\eta_t\bigl(F(x_{t+1})-F(u)\bigr)
\le D(u,x_t)-D(u,x_{t+1})
  +\eta_t\ip{e_t}{u-x_t}
  +\eta_t^2(\norm{e_t}^2+4G^2).
\end{equation}

Use $S_t,v_t,z_t,P_t$ from~\eqref{eq:potential} and set
$A_t=\eta_tv_{t-1}(z_{t-1}-x_t)$. Applying
Lemma~\ref{lem:prefix} to~\eqref{eq:ns-onestep} with
$\rho_t=\eta_t^2(\norm{e_t}^2+4G^2)$ gives, for every $1\le s\le T$,
\begin{align}
P_s+\eta_sv_s\Delta_{s+1}
 +S_{s+1}\sum_{j=2}^s(v_j-v_{j-1})\Delta_j
\le{}&P_0+\sum_{t=1}^s\eta_t^2v_t(\norm{e_t}^2+4G^2)
\notag\\
&+\sum_{t=1}^s\ip{e_t}{A_t}.
\label{eq:ns-prefix}
\end{align}
All terms on the left are nonnegative.

The linear learning-rate schedule gives
\begin{align*}
S_t&=\frac\eta T\sum_{j=1}^{T-t+1}j
=\frac{\eta(T-t+1)(T-t+2)}{2T},\\
v_t&=\frac2{(T-t+1)(T-t+2)},\\
\eta_t^2v_t
&=\frac{2\eta^2}{T^2}\frac{T-t+1}{T-t+2}
\le\frac{2\eta^2}{T^2}.
\end{align*}
In particular,
\begin{equation}\label{eq:ns-weights}
\begin{aligned}
P_0\le\frac{R^2}{T(T+1)}\le\frac{R^2}{T^2},\qquad
&\eta_t^2v_t\le\frac{2\eta^2}{T^2},\\
\eta_t\sqrt{v_{t-1}}\le\eta_t\sqrt{v_t}
&\le\frac{\sqrt2\eta}{T}.
\end{aligned}
\end{equation}
These inequalities also hold at $t=1$, where $v_0=v_1$.

\subsection{Bounds for the stopped error sums}

Define
\[
\tau=\min\left\{t\in\{1,\ldots,T+1\}:
  P_{t-1}>\frac{4R^2}{T^2}\right\},\qquad
I_t=\ind_{\{t<\tau\}},\quad 1\le t\le T,
\]
with $\min\varnothing=\infty$. The initial bound in
\eqref{eq:ns-weights} gives $\tau\ne1$. Since $x_t,z_{t-1},P_{t-1}$
are $\F_{t-1}$-measurable and
\[
\{t<\tau\}
=\bigcap_{j=1}^t\left\{P_{j-1}\le\frac{4R^2}{T^2}\right\}
\in\F_{t-1},
\]
both $I_t$ and $A_t$ are measurable before $g_t$ is sampled.
When $I_t=1$,
\begin{align}
\norm{A_t}
&=\eta_tv_{t-1}\norm{z_{t-1}-x_t}
=\eta_t\sqrt{v_{t-1}}\sqrt{2P_{t-1}}
\notag\\
&\le\frac{\sqrt2\eta}{T}\sqrt{\frac{8R^2}{T^2}}
=\frac{4\eta R}{T^2}.
\label{eq:ns-coefficient}
\end{align}

First, \eqref{eq:ns-moments}--\eqref{eq:ns-bounded} imply
$0\le\norm{e_t}^2\le(9/4)\lambda^2\le3\lambda^2$ and
$\Et\norm{e_t}^2\le V$. Lemma~\ref{lem:nonnegative}, with
upper bound $3\lambda^2$, mean budget $VT$, and $u=\ell$, gives
an event $\mathcal E_Q$ of probability at least $1-e^{-\ell}$ on which
\begin{equation}\label{eq:ns-Q}
\sum_{t=1}^T\norm{e_t}^2
\le2VT+3\lambda^2\ell
\le4\lambda^2\ell+3\lambda^2\ell
=7\lambda^2\ell.
\end{equation}
No stopping is needed for this sum because the moment bounds hold
at every iterate.

For the bias terms, \eqref{eq:ns-coefficient},
\eqref{eq:ns-budgets}, and $H\ge\lambda\ell$ give
\begin{align}
\sum_{t=1}^T I_t\left|\ip{b_t}{A_t}\right|
&\le\sum_{t=1}^T I_t\norm{b_t}\norm{A_t}
\le\frac{4\eta RBT}{T^2}
\notag\\
&\le\frac{8\eta R\lambda\ell}{T^2}
=\frac{8R^2}{KT^2}\frac{\lambda\ell}{H}
\le\frac{8R^2}{KT^2}.
\label{eq:ns-bias}
\end{align}

For the centered terms, let
\[
M_s=\sum_{t=1}^s I_t\ip{\xi_t}{A_t},\qquad M_0=0.
\]
Predictability and~\eqref{eq:ns-error} give
\[
\Et(M_t-M_{t-1})
=I_t\ip{\Et\xi_t}{A_t}=0.
\]
Thus $(M_s)_{s=0}^T$ is a martingale. Its increments and their
conditional second moments satisfy
\begin{align}
|M_t-M_{t-1}|
&\le I_t\norm{\xi_t}\norm{A_t}
\le3\lambda\frac{4\eta R}{T^2}
=\frac{12\eta R\lambda}{T^2},
\label{eq:ns-increment}\\
\sum_{t=1}^T\Et(M_t-M_{t-1})^2
&\le\sum_{t=1}^T I_t\norm{A_t}^2\Et\norm{\xi_t}^2
\notag\\
&\le\sum_{t=1}^T\frac{16\eta^2R^2V}{T^4}
=\frac{16\eta^2R^2VT}{T^4}.
\label{eq:ns-variance}
\end{align}
Apply Lemma~\ref{lem:freedman} with the right-hand sides of
\eqref{eq:ns-increment} and~\eqref{eq:ns-variance} as $a$ and $w$,
respectively, and with $u=\ell$. On an event $\mathcal E_M$ of
probability at least $1-e^{-\ell}$,
\begin{align}
\max_{1\le s\le T}M_s
&\le\sqrt{\frac{32\eta^2R^2VT\ell}{T^4}}
  +\frac{24\eta R\lambda\ell}{T^2}
\notag\\
&=\frac{\eta R}{T^2}\sqrt{32VT\ell}
  +\frac{24\eta R\lambda\ell}{T^2}
\notag\\
&\le\frac{\eta R}{T^2}\sqrt{64\lambda^2\ell^2}
  +\frac{24\eta R\lambda\ell}{T^2}
\notag\\
&=\frac{32\eta R\lambda\ell}{T^2}
=\frac{32R^2}{KT^2}\frac{\lambda\ell}{H}
\le\frac{32R^2}{KT^2}.
\label{eq:ns-martingale}
\end{align}
The third line uses $VT\le2\lambda^2\ell$. This single event
controls all prefixes $s$.

\subsection{Closing the stopping argument}

Let $\mathcal E=\mathcal E_Q\cap\mathcal E_M$. Then
\[
\Prob(\mathcal E)\ge1-2e^{-\ell}
=1-\delta/2\ge1-\delta.
\]
Fix a sample path in $\mathcal E$ and an index $1\le s\le T$ with
$s<\tau$. All $I_t$ with $t\le s$ equal one, so
\begin{align*}
\sum_{t=1}^s\ip{e_t}{A_t}
&=\sum_{t=1}^s I_t\ip{b_t}{A_t}+M_s\\
&\le\sum_{t=1}^T I_t\left|\ip{b_t}{A_t}\right|
  +\max_{1\le j\le T}M_j\\
&\le\frac{8R^2}{KT^2}+\frac{32R^2}{KT^2}
=\frac{40R^2}{KT^2}.
\end{align*}
The other error terms in~\eqref{eq:ns-prefix} satisfy
\begin{align}
\sum_{t=1}^s\eta_t^2v_t(\norm{e_t}^2+4G^2)
&\le\frac{2\eta^2}{T^2}
  \left(\sum_{t=1}^T\norm{e_t}^2+4G^2T\right)
\notag\\
&\le\frac{2\eta^2}{T^2}(7\lambda^2\ell+4G^2T)
\notag\\
&=\frac{14\eta^2\lambda^2\ell+8\eta^2G^2T}{T^2}.
\label{eq:ns-totalcost}
\end{align}
Here the first inequality uses~\eqref{eq:ns-weights} and
nonnegativity, and the second uses~\eqref{eq:ns-Q}. Substituting
$\eta=R/(KH)$ gives
\[
\eta^2\lambda^2\ell
=\frac{R^2}{K^2\ell}\left(\frac{\lambda\ell}{H}\right)^2
\le\frac{R^2}{K^2\ell},\qquad
\eta^2G^2T
=\frac{R^2}{K^2}\left(\frac{G\sqrt T}{H}\right)^2
\le\frac{R^2}{K^2}.
\]
Drop the nonnegative sum of gaps from~\eqref{eq:ns-prefix} and
combine the preceding bounds:
\begin{align}
P_s+\eta_sv_s\Delta_{s+1}
&\le\frac{R^2}{T^2}
 +\frac{14\eta^2\lambda^2\ell+8\eta^2G^2T}{T^2}
 +\frac{40R^2}{KT^2}
\notag\\
&\le\left(1+\frac{40}{K}+\frac{14}{K^2\ell}
                 +\frac8{K^2}\right)\frac{R^2}{T^2}
\notag\\
&\le\left(1+\frac{40}{128}+\frac{22}{128^2}\right)
 \frac{R^2}{T^2}
<\frac{2R^2}{T^2}.
\label{eq:ns-closed}
\end{align}
The last line uses $\ell>\log4>1$ and $K=128$.

Suppose $\tau\le T+1$. Since $\tau\ne1$, the index
$s=\tau-1$ lies in $\{1,\ldots,T\}$ and satisfies $s<\tau$.
Thus~\eqref{eq:ns-closed} applies at this index and gives
\[
P_{\tau-1}=P_s
\le P_s+\eta_sv_s\Delta_{s+1}
<\frac{2R^2}{T^2}<\frac{4R^2}{T^2},
\]
contrary to the definition of $\tau$. Hence $\tau=\infty$ on
$\mathcal E$, and~\eqref{eq:ns-closed} holds for every $s\le T$.
At $s=T$, $v_T=1$ and $\eta_T=\eta/T$, so
\[
\frac\eta T\Delta_{T+1}
\le P_T+\frac\eta T\Delta_{T+1}
<\frac{2R^2}{T^2}.
\]
It follows that
\begin{align*}
F(x_{T+1})-F^\star
&\le\frac{2R^2}{\eta T}
=\frac{2KRH}{T}
=\frac{256RH}{T}\\
&=256\frac{GR}{\sqrt T}+256\frac{R\lambda\ell}{T}\\
&\le256\frac{GR}{\sqrt T}
 +256\frac{R\ell}{T}\left(2G+\sigma(T/\ell)^{1/p}\right)\\
&=256\frac{GR}{\sqrt T}
 +512\frac{GR\ell}{T}
 +256\sigma R(\ell/T)^{(p-1)/p}.
\end{align*}
We used $\max\{2G,\sigma(T/\ell)^{1/p}\}
\le2G+\sigma(T/\ell)^{1/p}$. If $T\ge\ell^2$, then
\[
256\frac{GR}{\sqrt T}+512\frac{GR\ell}{T}
=\frac{GR}{\sqrt T}\left(256+512\frac\ell{\sqrt T}\right)
\le768\frac{GR}{\sqrt T}.
\]
Both claimed bounds therefore hold on $\mathcal E$.\hfill$\square$

\section{Proof of Theorem~\ref{thm:composite}}\label{app:composite}

\subsection{Construction of the anchor}

Let $g^{(0)}_1,\ldots,g^{(0)}_m$ be the pilot gradients at $x_1$, and let
$\mathcal H_j$ contain the information available after the first $j$
pilot queries. In particular, the conditional oracle assumption gives
\[
\E[g^{(0)}_j\mid\mathcal H_{j-1}]=\nabla f(x_1),\qquad
\E[\norm{g^{(0)}_j-\nabla f(x_1)}^p\mid\mathcal H_{j-1}]\le\sigma^p.
\]

\begin{lemma}\label{lem:composite-anchor}
Let $m=\lceil32\log(2/\delta)\rceil$, and let
$c\in\arg\min_{u\in\R^d}\sum_{j=1}^m\norm{g^{(0)}_j-u}$ be a measurable
choice of geometric median. Then
\[
\Prob\{\norm{c-\nabla f(x_1)}\le24\sigma\}\ge1-\delta/2.
\]
\end{lemma}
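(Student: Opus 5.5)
The argument is the standard median-of-means style bound for the geometric median, specialized to $p$-th moments and to martingale-type (conditionally independent) pilot samples. Write $h=\nabla f(x_1)$, which is deterministic given the initial information. The plan has three steps.

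\emph{Step 1: per-sample bound.} By Markov's inequality applied to the conditional $p$-th moment,
\[
\Prob\{\norm{g^{(0)}_j-h}>r\mid\mathcal H_{j-1}\}\le\sigma^p/r^p .
\]
Choosing $r=C\sigma$ with $C$ moderate (say $r=4\sigma$) gives conditional failure probability at most $4^{-p}\le1/4$, since $p>1$. (If $\sigma=0$, every sample equals $h$ almost surely and $c=h$, so assume $\sigma>0$.)

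\emph{Step 2: few bad samples.} Let $X_j=\ind_{\{\norm{g^{(0)}_j-h}>r\}}$. Each $X_j$ is $\mathcal H_j$-measurable, lies in $[0,1]$, and has conditional mean at most $1/4$. We need $\sum_j X_j<\alpha m$ for a fraction $\alpha<1/2$ with probability at least $1-\delta/2$. Lemma~\ref{lem:nonnegative} is too lossy here, because its factor $2$ turns the mean $m/4$ into $m/2$. Instead we use the Azuma/Hoeffding bound for bounded martingale differences, obtained via the same supermartingale argument as in Lemma~\ref{lem:freedman} with $Y_j=X_j-\E[X_j\mid\mathcal H_{j-1}]$. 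This gives
\[
\Prob\Bigl\{\sum_j X_j\ge m/4+\gamma m\Bigr\}\le e^{-2\gamma^2m}.
\]
One could also apply Lemma~\ref{lem:freedman} directly with $a=1$ and $w=m/4$. With $m\ge32\log(2/\delta)$ and $\gamma=1/8$, the exponent is $2m/64\ge\log(2/\delta)$. So with probability at least $1-\delta/2$, fewer than $3m/8$ samples are bad, hence more than $5m/8$ lie in the ball $B(h,r)$.

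\emph{Step 3: deterministic geometric-median robustness.} If at least a fraction $1-\alpha>1/2$ of the points lie in $B(h,r)$, then any minimizer $c$ satisfies
\[
\norm{c-h}\le\frac{2(1-\alpha)}{1-2\alpha}\,r .
\]
The proof compares the objective at $c$ with its value at $h$. For a good point $y$ we have $\norm{y-c}\ge\norm{c-h}-r$ and $\norm{y-h}\le r$; for a bad point the triangle inequality gives $\norm{y-c}\ge\norm{y-h}-\norm{c-h}$. Summing and using optimality,
\[
0\ge\sum_j\bigl(\norm{y_j-c}-\norm{y_j-h}\bigr)\ge(1-\alpha)m\bigl(\norm{c-h}-2r\bigr)-\alpha m\norm{c-h},
\]
hence $\norm{c-h}\le2(1-\alpha)r/(1-2\alpha)$. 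With $\alpha=3/8$ the factor is $5$, giving $\norm{c-h}\le20\sigma\le24\sigma$ when $r=4\sigma$. The constants have slack, so other choices such as $r=3\sigma$ with a slightly different $\alpha$ would also work.

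The main obstacle is the concentration step. The samples are not i.i.d., only conditionally moment-bounded, and the crude multiplicative bound of Lemma~\ref{lem:nonnegative} would push the bad fraction past $1/2$. It has to be replaced by an additive Hoeffding--Azuma bound proved with the supermartingale technique of Lemma~\ref{lem:freedman}. The remaining work is checking that the constants $32$ and $24$ are consistent with that bound.
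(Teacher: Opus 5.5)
Your proof is correct and has the same structure as the paper's. Both use conditional Markov for each pilot sample, an exponential-supermartingale bound on the number of bad samples, and a comparison of the geometric-median objective at $c$ and at $\nabla f(x_1)$; that last comparison is exactly the paper's final step.

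The only difference is the counting step. The paper takes threshold $8\sigma$ and uses the self-contained bound $\E[2^{J_j}\mid\mathcal H_{j-1}]=1+\E[J_j\mid\mathcal H_{j-1}]\le 9/8$ with Markov, which gives at most $m/4$ bad samples and exactly $24\sigma$. Your threshold $4\sigma$ with Azuma--Hoeffding needs Hoeffding's lemma for the conditional moment generating function, not the Bernstein-type bound inside Lemma~\ref{lem:freedman}. Your parenthetical alternative of applying Lemma~\ref{lem:freedman} with $a=1$, $w=m/4$ does not reach the stated constant: it allows up to $m/4+3m/16=7m/16$ bad samples, hence only $\norm{c-\nabla f(x_1)}\le 36\sigma$, so drop that remark.
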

\begin{proof}
Write $u_0=\nabla f(x_1)$. If $\sigma=0$, the conditional moment
bound implies $g^{(0)}_j=u_0$ almost surely for every $j$. The geometric
median is then $u_0$. Suppose that $\sigma>0$, and define
\[
J_j=\ind_{\{\norm{g^{(0)}_j-u_0}>8\sigma\}}.
\]
Conditional Markov's inequality gives
\[
\E[J_j\mid\mathcal H_{j-1}]\le
\frac{\E[\norm{g^{(0)}_j-u_0}^p\mid\mathcal H_{j-1}]}{(8\sigma)^p}
\le8^{-p}\le\frac18.
\]
Since $J_j\in\{0,1\}$, we have $2^{J_j}=1+J_j$. Thus
\begin{align*}
\E[2^{J_j}\mid\mathcal H_{j-1}]
&=1+\E[J_j\mid\mathcal H_{j-1}]\le\frac98,\\
\E\left[2^{\sum_{j=1}^mJ_j}\right]
&=\E\left[2^{\sum_{j=1}^{m-1}J_j}
\E[2^{J_m}\mid\mathcal H_{m-1}]\right]\\
&\le\frac98\E\left[2^{\sum_{j=1}^{m-1}J_j}\right]
\le\left(\frac98\right)^m.
\end{align*}
Markov's inequality now yields
\begin{align*}
\Prob\left\{\sum_{j=1}^mJ_j>\frac m4\right\}
&\le2^{-m/4}\left(\frac98\right)^m\\
&=\exp\left[-m\left(\frac{\log2}{4}-\log\frac98\right)\right]
\le e^{-m/32}\le\frac\delta2,
\end{align*}
where $(\log2)/4-\log(9/8)>1/32$. The pilot samples may
depend on the preceding pilot history.

On the complementary event, let
$\mathcal G=\{j:\norm{g^{(0)}_j-u_0}\le8\sigma\}$. Then
$|\mathcal G|\ge3m/4$. The triangle inequality gives
\[
\norm{g^{(0)}_j-c}-\norm{g^{(0)}_j-u_0}
\ge
\begin{cases}
\norm{c-u_0}-16\sigma,&j\in\mathcal G,\\
-\norm{c-u_0},&j\notin\mathcal G.
\end{cases}
\]
By the minimizing property of $c$,
\begin{align*}
0
&\ge\sum_{j=1}^m\bigl(\norm{g^{(0)}_j-c}-\norm{g^{(0)}_j-u_0}\bigr)\\
&\ge(2|\mathcal G|-m)\norm{c-u_0}-16\sigma|\mathcal G|,\\
\norm{c-u_0}
&\le\frac{16\sigma|\mathcal G|}{2|\mathcal G|-m}
\le24\sigma.
\end{align*}
The denominator is positive, and the last inequality is equivalent to
$|\mathcal G|\ge3m/4$. This proves the claim.
\end{proof}

\subsection{Proximal recursions and localization}

Let $\mathcal A=\{\norm{c-\nabla f(x_1)}\le24\sigma\}$. Fix a
pilot history in $\mathcal A$, outside a set of probability zero, and
work under the conditional law given $\mathcal H_m$. The anchor $c$
is now fixed. The main-stage filtration $\{\F_t\}$ includes
$\mathcal H_m$, and $\Et$ denotes conditional expectation given
$\F_{t-1}$. The oracle assumptions hold with respect to this filtration.
All concentration probabilities below refer to this conditional law.

Write $\Delta_t=F(x_t)-F^\star$ and define
\[
\eta=\frac{R}{128\lambda\ell},\qquad
e_t=c+\clip(g_t-c,\lambda)-\nabla f(x_t),\qquad
b_t=\Et e_t,\qquad \xi_t=e_t-b_t,
\]
so that $\Et\xi_t=0$. Set
$B=2\sigma^p\lambda^{1-p}$ and $V=2\sigma^p\lambda^{2-p}$.
The parameter choice gives $\lambda\ge6LR$ and
$\lambda^p\ge\sigma^pT/\ell$. Hence
\begin{equation}\label{eq:comp-parameters}
\begin{aligned}
\eta_t\le\eta
&=\frac{R}{128\lambda\ell}
\le\frac{1}{768L\ell}\le\frac1{2L},\\
BT
&=2\sigma^pT\lambda^{1-p}
\le2\lambda^p\ell\lambda^{1-p}=2\lambda\ell,\\
VT
&=2\sigma^pT\lambda^{2-p}
\le2\lambda^p\ell\lambda^{2-p}=2\lambda^2\ell.
\end{aligned}
\end{equation}

The update is $x_{t+1}=\prox_{\eta_t r}(x_t-\eta_t(\nabla f(x_t)+e_t))$.
Apply Lemma~\ref{lem:proxstep} with $x=x_t$, $h=\nabla f(x_t)$,
$e=e_t$, and $\alpha=\eta_t\le1/(2L)$. For every $u\in\dom r$,
\begin{equation}\label{eq:comp-onestep}
\eta_t(F(x_{t+1})-F(u))
\le D(u,x_t)-D(u,x_{t+1})
+\eta_t\ip{e_t}{u-x_t}+\eta_t^2\norm{e_t}^2.
\end{equation}
With $x=x_t$, $e=e_t$, and $\alpha=\eta_t\le1/L$,
Lemma~\ref{lem:radius} also gives
\[
\norm{x_{t+1}-x^\star}^2
\le\norm{x_t-x^\star}^2
-2\eta_t\ip{e_t}{x_t-x^\star}+2\eta_t^2\norm{e_t}^2.
\]
The radius lemma applies to composite objectives and does not require
$\nabla f(x^\star)=0$.

To control the final objective value, define
\[
\begin{gathered}
S_t=\sum_{j=t}^T\eta_j,\quad S_{T+1}=0,\quad
v_t=\frac{\eta_T}{S_t},\quad v_0=v_1,\quad z_0=x^\star,\\
z_t=\frac{v_{t-1}}{v_t}z_{t-1}
+\left(1-\frac{v_{t-1}}{v_t}\right)x_t,\quad
P_t=v_tD(z_t,x_{t+1}),\quad P_0=v_0D(x^\star,x_1),\\
A_t=\eta_tv_{t-1}(z_{t-1}-x_t).
\end{gathered}
\]
The weights $v_t$ increase, so $z_t\in\dom r$.
Applying Lemma~\ref{lem:prefix} to \eqref{eq:comp-onestep} with
$\rho_t=\eta_t^2\norm{e_t}^2$, we obtain, for every $1\le s\le T$,
\begin{align*}
P_s+\eta_sv_s\Delta_{s+1}
+S_{s+1}\sum_{j=2}^s(v_j-v_{j-1})\Delta_j
\le P_0+\sum_{t=1}^s\eta_t^2v_t\norm{e_t}^2
+\sum_{t=1}^s\ip{e_t}{A_t}.
\end{align*}
Since $S_{s+1}\ge0$, $v_j-v_{j-1}\ge0$, and $\Delta_j\ge0$,
we can drop the sum on the left. Enlarging the nonnegative
squared-error term gives
\begin{equation}\label{eq:comp-prefix}
P_s+\eta_sv_s\Delta_{s+1}
\le P_0+2\sum_{t=1}^s\eta_t^2v_t\norm{e_t}^2
+\sum_{t=1}^s\ip{e_t}{A_t}.
\end{equation}
For $k=T-t+1$, the linear schedule gives
$S_t=(\eta/T)\sum_{j=1}^k j=\eta k(k+1)/(2T)$. Hence
\begin{equation}\label{eq:comp-weights}
\begin{aligned}
v_t&=\frac2{k(k+1)},\qquad
P_0\le\frac{R^2}{T(T+1)}\le\frac{R^2}{T^2},\\
2\eta_t^2v_t
&=\frac{4\eta^2}{T^2}\frac{k}{k+1}\le\frac{4\eta^2}{T^2},
\qquad
\eta_t\sqrt{v_{t-1}}\le\frac{\sqrt2\eta}{T}.
\end{aligned}
\end{equation}
For the last inequality, when $t\ge2$ its left-hand side is
$\sqrt2\eta k/(T\sqrt{(k+1)(k+2)})$; when $t=1$, it is
$\eta\sqrt{2/(T(T+1))}$. Both expressions have the stated bound,
including when $T=1$. Also, $v_T=1$ and $\eta_T=\eta/T$.

Let $\tau$ be the first $t\in\{1,\ldots,T+1\}$ for which
$\norm{x_t-x^\star}>2R$ or $P_{t-1}>4R^2/T^2$; set
$\tau=\infty$ if neither condition occurs. Initially,
$\norm{x_1-x^\star}\le R$ and $P_0\le R^2/T^2$, so $\tau>1$.
Since $x_t$ and $P_{t-1}$ are known before query $t$,
$I_t=\ind_{\{t<\tau\}}$ is $\F_{t-1}$-measurable. On $\{t<\tau\}$,
\begin{align*}
\norm{\nabla f(x_t)-c}
&\le\norm{\nabla f(x_t)-\nabla f(x_1)}
+\norm{\nabla f(x_1)-c}\\
&\le L\norm{x_t-x_1}+24\sigma\\
&\le L\bigl(\norm{x_t-x^\star}+\norm{x_1-x^\star}\bigr)+24\sigma
\le3LR+24\sigma\le\lambda/2.
\end{align*}
Conditionally on $\F_{t-1}$, the shifted sample $g_t-c$ has mean
$\nabla f(x_t)-c$ and centered noise $g_t-\nabla f(x_t)$. We can
therefore apply Lemma~\ref{lem:clip} on the predictable event
$\{t<\tau\}$ to obtain
\[
\begin{gathered}
I_t\norm{e_t}\le\tfrac32\lambda,\qquad
I_t\norm{b_t}\le B,\qquad
\Et[I_t\norm{e_t}^2]\le I_tV,\\
I_t\norm{\xi_t}\le2\lambda\le3\lambda,\qquad
\Et[I_t\norm{\xi_t}^2]\le I_tV.
\end{gathered}
\]
Moreover, $A_t$ is $\F_{t-1}$-measurable, and
\begin{equation}\label{eq:comp-A}
\begin{aligned}
I_t\norm{A_t}
&=I_t\eta_t\sqrt{v_{t-1}}\sqrt{2P_{t-1}}\\
&\le\frac{\sqrt2\eta}{T}\sqrt{\frac{8R^2}{T^2}}
=\frac{4\eta R}{T^2}.
\end{aligned}
\end{equation}
All these bounds apply to stopped variables; no bound on the full
trajectory has yet been assumed.

\subsection{Concentration and completion of the proof}

The variables $I_t\norm{e_t}^2$ lie in $[0,3\lambda^2]$, and their
conditional means sum to at most $VT$. Lemma~\ref{lem:nonnegative}
with exponent $\ell$ gives, with failure probability at most $e^{-\ell}$,
\begin{equation}\label{eq:comp-Q}
Q:=\sum_{t=1}^T I_t\norm{e_t}^2
\le2VT+3\lambda^2\ell
\le4\lambda^2\ell+3\lambda^2\ell=7\lambda^2\ell.
\end{equation}
Every prefix of this sum is at most $Q$ because its terms are nonnegative.

For the distance recursion, define
$M_s^r=-2\sum_{t=1}^sI_t\eta_t\ip{\xi_t}{x_t-x^\star}$,
with $M_0^r=0$. Predictability and $\Et\xi_t=0$ show that
$\Et(M_t^r-M_{t-1}^r)=0$. Its increments and conditional second
moments satisfy
\begin{align*}
|M_t^r-M_{t-1}^r|
&\le2I_t\eta_t\norm{\xi_t}\norm{x_t-x^\star}
\le12\eta R\lambda=\frac{12R^2}{128\ell},\\
\sum_{t=1}^T\Et[(M_t^r-M_{t-1}^r)^2]
&\le4\sum_{t=1}^TI_t\eta_t^2\norm{x_t-x^\star}^2\Et\norm{\xi_t}^2\\
&\le16\eta^2R^2VT
\le32\eta^2R^2\lambda^2\ell
=\frac{32R^4}{128^2\ell}.
\end{align*}
Lemma~\ref{lem:freedman} consequently yields, with failure probability
at most $e^{-\ell}$,
\begin{equation}\label{eq:comp-Mr}
\begin{aligned}
\max_{1\le s\le T}M_s^r
&\le\sqrt{2\ell\frac{32R^4}{128^2\ell}}
+2\ell\frac{12R^2}{128\ell}\\
&=\frac{8R^2}{128}+\frac{24R^2}{128}=\frac{R^2}{4}.
\end{aligned}
\end{equation}
The bias term in that recursion has the deterministic bound
\begin{equation}\label{eq:comp-rbias}
\begin{aligned}
2\sum_{t=1}^TI_t\eta_t|\ip{b_t}{x_t-x^\star}|
&\le4\eta RBT\le8\eta R\lambda\ell\\
&=\frac{8R^2}{128}=\frac{R^2}{16}.
\end{aligned}
\end{equation}

For \eqref{eq:comp-prefix}, define
$M_s^P=\sum_{t=1}^sI_t\ip{\xi_t}{A_t}$, with $M_0^P=0$.
Since $I_tA_t$ is predictable, $\Et(M_t^P-M_{t-1}^P)=0$.
By \eqref{eq:comp-A},
\begin{align*}
|M_t^P-M_{t-1}^P|
&\le I_t\norm{\xi_t}\norm{A_t}
\le\frac{12\eta R\lambda}{T^2}
=\frac{12R^2}{128\ell T^2},\\
\sum_{t=1}^T\Et[(M_t^P-M_{t-1}^P)^2]
&\le\sum_{t=1}^TI_t\norm{A_t}^2\Et\norm{\xi_t}^2\\
&\le\frac{16\eta^2R^2VT}{T^4}
\le\frac{32\eta^2R^2\lambda^2\ell}{T^4}
=\frac{32R^4}{128^2\ell T^4}.
\end{align*}
A second application of Lemma~\ref{lem:freedman} gives, with failure
probability at most $e^{-\ell}$,
\begin{equation}\label{eq:comp-MP}
\begin{aligned}
\max_{1\le s\le T}M_s^P
&\le\sqrt{2\ell\frac{32R^4}{128^2\ell T^4}}
+2\ell\frac{12R^2}{128\ell T^2}\\
&=\frac{8R^2}{128T^2}+\frac{24R^2}{128T^2}
=\frac{R^2}{4T^2}.
\end{aligned}
\end{equation}
The bias contribution to \eqref{eq:comp-prefix} satisfies
\begin{equation}\label{eq:comp-Pbias}
\begin{aligned}
\sum_{t=1}^TI_t|\ip{b_t}{A_t}|
&\le\frac{4\eta RBT}{T^2}
\le\frac{8\eta R\lambda\ell}{T^2}\\
&=\frac{8R^2}{128T^2}=\frac{R^2}{16T^2}.
\end{aligned}
\end{equation}

Let $\mathcal E$ be the intersection of the events in
\eqref{eq:comp-Q}, \eqref{eq:comp-Mr}, and \eqref{eq:comp-MP}.
For every fixed pilot history under consideration,
$\Prob(\mathcal E^c\mid\mathcal H_m)\le3e^{-\ell}$.
On $\mathcal E$, fix $1\le s\le T$ with $s<\tau$.
Then $I_t=1$ for $1\le t\le s$. Sum the distance recursion and
write $e_t=\xi_t+b_t$ to obtain
\begin{align}
\norm{x_{s+1}-x^\star}^2
&\le R^2+M_s^r
-2\sum_{t=1}^sI_t\eta_t\ip{b_t}{x_t-x^\star}
+2\sum_{t=1}^sI_t\eta_t^2\norm{e_t}^2\notag\\
&\le R^2+\frac{R^2}{4}+\frac{R^2}{16}+2\eta^2Q\notag\\
&\le\left(\frac{21}{16}+\frac{14}{128^2\ell}\right)R^2
<2R^2.
\label{eq:comp-closed-radius}
\end{align}
For the potential, \eqref{eq:comp-prefix} and
\eqref{eq:comp-weights} give
\begin{align}
P_s+\eta_sv_s\Delta_{s+1}
&\le P_0+M_s^P+\sum_{t=1}^sI_t\ip{b_t}{A_t}
+2\sum_{t=1}^sI_t\eta_t^2v_t\norm{e_t}^2\notag\\
&\le\frac{R^2}{T^2}+\frac{R^2}{4T^2}+\frac{R^2}{16T^2}
+\frac{4\eta^2Q}{T^2}\notag\\
&\le\left(\frac{21}{16}+\frac{28}{128^2\ell}\right)\frac{R^2}{T^2}
<\frac{2R^2}{T^2}.
\label{eq:comp-closed-potential}
\end{align}
Both strict inequalities follow from $\ell>\log16>1$ and
$21/16+28/128^2<2$.

If $\tau\le T+1$, then $s=\tau-1$ lies in $\{1,\ldots,T\}$
and satisfies $s<\tau$. The preceding two inequalities would give
\[
\norm{x_\tau-x^\star}^2<2R^2<4R^2,\qquad
P_{\tau-1}\le P_{\tau-1}+\eta_{\tau-1}v_{\tau-1}\Delta_\tau
<\frac{2R^2}{T^2}<\frac{4R^2}{T^2},
\]
where $\Delta_\tau\ge0$ because $x^\star$ minimizes $F$.
Neither stopping condition could hold at $\tau$, a contradiction.
Thus $\tau=\infty$ on $\mathcal E$, and the claimed distance bound
holds for every $1\le t\le T+1$.

Finally, take $s=T$ in \eqref{eq:comp-closed-potential}. Since
$P_T\ge0$, $v_T=1$, and $\eta_T=\eta/T$,
\[
\frac\eta T\bigl(F(x_{T+1})-F^\star\bigr)
\le P_T+\eta_Tv_T\Delta_{T+1}
\le\frac{2R^2}{T^2},\qquad
F(x_{T+1})-F^\star
\le\frac{2R^2}{\eta T}=\frac{256R\lambda\ell}{T}.
\]
To remove the conditioning on the pilot, the failure probability of
the stated conclusions is at most
\begin{align*}
\Prob(\mathcal A^c)
+\E\bigl[\ind_{\mathcal A}\Prob(\mathcal E^c\mid\mathcal H_m)\bigr]
&\le\frac\delta2+3e^{-\ell}
=\frac\delta2+\frac{3\delta}{16}
=\frac{11\delta}{16}<\delta.
\end{align*}

For $T\ge\ell$, the bound $\max\{a,b\}\le a+b$ for
nonnegative $a,b$ and the definition of $\lambda$ yield
\begin{align*}
\frac{256R\lambda\ell}{T}
&\le\frac{256R\ell}{T}
\left(6LR+48\sigma+\sigma(T/\ell)^{1/p}\right)\\
&=1536\frac{LR^2\ell}{T}
+12288\frac{\sigma R\ell}{T}
+256\sigma R(\ell/T)^{(p-1)/p}\\
&\le1536\frac{LR^2\ell}{T}
+12544\sigma R(\ell/T)^{(p-1)/p}.
\end{align*}
The last step uses $0<\ell/T\le1$ and
$\ell/T\le(\ell/T)^{(p-1)/p}$.
There are $m$ pilot calls and $T$ main-stage calls, so $N=T+m$.
If $N\ge2m$ and $T=N-m$, then $T\ge N/2$ and $T\ge m\ge\ell$.
In particular, the preceding bound implies
\[
F(x_{T+1})-F^\star
\le3072\frac{LR^2\ell}{N}
+12544\cdot2^{(p-1)/p}\sigma R(\ell/N)^{(p-1)/p},
\]
which has the stated order in the total number of oracle calls.
\hfill$\square$

\section{Proof of Theorem~\ref{thm:corrupt}}\label{app:corrupt}
\subsection{The clean error and the replacement error}
Recall that $\ell=\log(8/\delta)$ and $\Gamma=\ell+q$. Write $K=128$,
and retain the notation
\[
 \eta=\frac{R}{K\lambda\Gamma},\qquad
 \eta_t=\eta\frac{T-t+1}{T},\qquad
 B=2\sigma^p\lambda^{1-p},\qquad
 V=2\sigma^p\lambda^{2-p}.
\]
The choice $\lambda=\max\{4LR,\sigma(T/\Gamma)^{1/p}\}$ gives
\begin{align}
 \sigma^pT&\le\lambda^p\Gamma,\notag\\
 BT=2\sigma^pT\lambda^{1-p}
 &\le2\lambda^p\Gamma\lambda^{1-p}=2\lambda\Gamma,\notag\\
 VT=2\sigma^pT\lambda^{2-p}
 &\le2\lambda^p\Gamma\lambda^{2-p}=2\lambda^2\Gamma,\notag\\
 \eta_t\le\eta=\frac{R}{K\lambda\Gamma}
 &\le\frac{R}{K(4LR)\Gamma}
 =\frac{1}{4KL\Gamma}\le\frac{1}{2L}.
 \label{eq:corrupt-parameters}
\end{align}
The last inequality uses $K=128$ and $\Gamma\ge\ell>\log8$.

Use $S_t,v_t,z_t,P_t$ from \eqref{eq:potential}, with
$f^\star=f(x^\star)$ and $\Delta_t=f(x_t)-f^\star$. Define
\[
 \tau=\min\left\{t\in\{1,\ldots,T+1\}:
       \norm{x_t-x^\star}>2R
       \ \text{or}\ 
       P_{t-1}>\frac{4R^2}{T^2}\right\},
 \qquad I_t=\ind_{\{t<\tau\}},
\]
where $\min\varnothing=\infty$. The filtration contains the algorithm's
randomness and all preceding latent clean samples and adversarial choices.
Thus $x_t,P_{t-1},I_t$ are $\F_{t-1}$-measurable. Also,
\[
 \norm{x_1-x^\star}\le R<2R,\qquad
 P_0\le\frac{R^2}{T(T+1)}<\frac{4R^2}{T^2},
 \qquad\text{so }\tau>1.
\]

Put $h_t=\nabla f(x_t)$. We separate the error of the latent clean
sample from the change made by the adversary:
\begin{equation}\label{eq:corrupt-decomposition}
 \begin{aligned}
 u_t&=\clip(g_t,\lambda)-h_t,&
 d_t&=\clip(\widetilde g_t,\lambda)-\clip(g_t,\lambda),\\
 b_t&=\Et u_t,&
 \xi_t&=u_t-b_t,\\
 e_t&=\clip(\widetilde g_t,\lambda)-h_t
      =u_t+d_t=b_t+\xi_t+d_t.&
 \end{aligned}
\end{equation}
Here $\Et\xi_t=\Et u_t-b_t=0$. We center the latent clean sample
before deciding whether it is replaced. In particular, no unbiasedness
assumption is made after conditioning on the event $C_t=0$.

On $\{I_t=1\}$, smoothness and $\nabla f(x^\star)=0$ imply
\[
 \norm{h_t}
 =\norm{\nabla f(x_t)-\nabla f(x^\star)}
 \le L\norm{x_t-x^\star}\le2LR\le\lambda/2.
\]
Lemma~\ref{lem:clip}, applied conditionally on $\F_{t-1}$, therefore gives
\begin{equation}\label{eq:corrupt-clean-moments}
 \begin{gathered}
 \norm{u_t}\le3\lambda/2,\qquad
 \norm{b_t}\le B,\qquad
 \Et\norm{u_t}^2\le V,\\
 \norm{\xi_t}\le2\lambda,\qquad
 \Et\norm{\xi_t}^2\le V
 \qquad\text{on }\{I_t=1\}.
 \end{gathered}
\end{equation}
On the same event, both clipped vectors have norm at most $\lambda$, so
\begin{align*}
 \norm{e_t}
 &\le\norm{\clip(\widetilde g_t,\lambda)}+\norm{h_t}
 \le3\lambda/2,\\
 \norm{d_t}
 &\le\bigl(\norm{\clip(\widetilde g_t,\lambda)}
             +\norm{\clip(g_t,\lambda)}\bigr)C_t
 \le2\lambda C_t.
\end{align*}
The second bound also holds without stopping. If $C_t=0$, then $e_t=u_t$;
if $C_t=1$ and $I_t=1$, then $\norm{e_t}^2\le9\lambda^2/4$.
Consequently,
\begin{equation}\label{eq:corrupt-pathwise}
 \norm{d_t}\le2\lambda C_t,\qquad
 I_t\norm{e_t}^2
 \le I_t\norm{u_t}^2+3\lambda^2 I_tC_t.
\end{equation}
These inequalities hold on every sample path, although $C_t$ may depend
on the current clean sample.

\subsection{Bounds for the three random sums}
Apply Lemma~\ref{lem:nonnegative} to the sequence
$I_t\norm{u_t}^2$, with $H=3\lambda^2$, $m_t=I_tV$, and $u=\ell$.
Its assumptions follow from
\[
 0\le I_t\norm{u_t}^2\le3\lambda^2,\qquad
 \Et[I_t\norm{u_t}^2]=I_t\Et\norm{u_t}^2\le I_tV,\qquad
 \sum_{t=1}^T I_tV\le VT.
\]
Thus, on an event $E_Q$ with $\Prob(E_Q)\ge1-e^{-\ell}$,
\begin{align}
 Q_e:=\sum_{t=1}^T I_t\norm{e_t}^2
 &\le\sum_{t=1}^T I_t\norm{u_t}^2
       +3\lambda^2\sum_{t=1}^T I_tC_t\notag\\
 &\le2VT+3\lambda^2\ell+3\lambda^2q\notag\\
 &\le4\lambda^2\Gamma+3\lambda^2(\ell+q)
 =7\lambda^2\Gamma.
 \label{eq:corrupt-Q}
\end{align}
Every summand in $Q_e$ is nonnegative, so this also bounds each prefix.
The bias and the replacements have the pathwise bound
\begin{align}
 \sum_{t=1}^T I_t(\norm{b_t}+\norm{d_t})
 &\le BT+2\lambda\sum_{t=1}^T I_tC_t\notag\\
 &\le2\lambda\Gamma+2\lambda q
 \le4\lambda\Gamma.
 \label{eq:corrupt-nonmartingale-budget}
\end{align}

For the distance to $x^\star$, define the martingale
\[
 M_s^r=-2\sum_{t=1}^s I_t\eta_t\ip{\xi_t}{x_t-x^\star},
 \qquad M_0^r=0.
\]
Predictability and $\Et\xi_t=0$ give
\[
 \Et(M_t^r-M_{t-1}^r)
 =-2I_t\eta_t\ip{\Et\xi_t}{x_t-x^\star}=0.
\]
Using $\norm{\xi_t}\le2\lambda\le3\lambda$ on $\{I_t=1\}$, we obtain
\begin{align*}
 |M_t^r-M_{t-1}^r|
 &\le2I_t\eta_t\norm{\xi_t}\norm{x_t-x^\star}
 \le2\eta(3\lambda)(2R)=12\eta R\lambda,\\
 \Et(M_t^r-M_{t-1}^r)^2
 &=4I_t\eta_t^2\Et\bigl[\ip{\xi_t}{x_t-x^\star}^2\bigr]\\
 &\le4I_t\eta_t^2\norm{x_t-x^\star}^2\Et\norm{\xi_t}^2
 \le16\eta^2R^2V,\\
 \sum_{t=1}^T\Et(M_t^r-M_{t-1}^r)^2
 &\le16\eta^2R^2VT
 \le32\eta^2R^2\lambda^2\Gamma.
\end{align*}
Lemma~\ref{lem:freedman} with $u=\ell$ gives an event $E_r$,
of probability at least $1-e^{-\ell}$, on which
\begin{align}
 \max_{s\le T}M_s^r
 &\le\sqrt{32\eta^2R^2VT\ell}+24\eta R\lambda\ell\notag\\
 &\le\sqrt{64\eta^2R^2\lambda^2\Gamma\ell}
       +24\eta R\lambda\ell\notag\\
 &=8\eta R\lambda\sqrt{\Gamma\ell}
       +24\eta R\lambda\ell\notag\\
 &=\frac{R^2}{K}
       \left(8\sqrt{\frac{\ell}{\Gamma}}+
                    24\frac{\ell}{\Gamma}\right)
 \le\frac{32R^2}{K}.
 \label{eq:corrupt-radius-martingale}
\end{align}
The last two steps use $\eta=R/(K\lambda\Gamma)$ and $\ell\le\Gamma$.

For the comparison potential, let
$A_t=\eta_tv_{t-1}(z_{t-1}-x_t)$, as in Lemma~\ref{lem:prefix}.
This vector is $\F_{t-1}$-measurable. On $\{I_t=1\}$,
the definition $P_{t-1}=v_{t-1}\norm{z_{t-1}-x_t}^2/2$
and Lemma~\ref{lem:weights} give
\begin{align}
 \norm{A_t}
 &=\eta_tv_{t-1}\norm{z_{t-1}-x_t}
 =\eta_t\sqrt{v_{t-1}}\sqrt{2P_{t-1}}\notag\\
 &\le\frac{\sqrt2\eta}{T}
       \sqrt{2\cdot\frac{4R^2}{T^2}}
 =\frac{4\eta R}{T^2}.
 \label{eq:corrupt-A}
\end{align}
Thus $M_s^P=\sum_{t=1}^s I_t\ip{\xi_t}{A_t}$, with $M_0^P=0$,
satisfies
\begin{align*}
 \Et(M_t^P-M_{t-1}^P)
 &=I_t\ip{\Et\xi_t}{A_t}=0,\\
 |M_t^P-M_{t-1}^P|
 &\le I_t\norm{\xi_t}\norm{A_t}
 \le(3\lambda)\frac{4\eta R}{T^2}
 =\frac{12\eta R\lambda}{T^2},\\
 \sum_{t=1}^T\Et(M_t^P-M_{t-1}^P)^2
 &\le\sum_{t=1}^T I_t\norm{A_t}^2\Et\norm{\xi_t}^2\\
 &\le\frac{16\eta^2R^2VT}{T^4}
 \le\frac{32\eta^2R^2\lambda^2\Gamma}{T^4}.
\end{align*}
Applying Lemma~\ref{lem:freedman} again gives an event $E_P$,
of probability at least $1-e^{-\ell}$, on which
\begin{align}
 \max_{s\le T}M_s^P
 &\le\sqrt{\frac{32\eta^2R^2VT\ell}{T^4}}
       +\frac{24\eta R\lambda\ell}{T^2}\notag\\
 &\le\frac{8\eta R\lambda\sqrt{\Gamma\ell}
                  +24\eta R\lambda\ell}{T^2}\notag\\
 &=\frac{R^2}{KT^2}
       \left(8\sqrt{\frac{\ell}{\Gamma}}+
                    24\frac{\ell}{\Gamma}\right)
 \le\frac{32R^2}{KT^2}.
 \label{eq:corrupt-potential-martingale}
\end{align}

The remaining inner products require no concentration inequality.
By Cauchy--Schwarz and \eqref{eq:corrupt-nonmartingale-budget},
\begin{align}
 2\sum_{t=1}^T I_t\eta_t
       |\ip{b_t+d_t}{x_t-x^\star}|
 &\le2\eta(2R)\sum_{t=1}^T I_t(\norm{b_t}+\norm{d_t})\notag\\
 &\le4\eta R(4\lambda\Gamma)
 =\frac{16R^2}{K},
 \label{eq:corrupt-radius-bias}\\
 \sum_{t=1}^T I_t|\ip{b_t+d_t}{A_t}|
 &\le\frac{4\eta R}{T^2}
            \sum_{t=1}^T I_t(\norm{b_t}+\norm{d_t})\notag\\
 &\le\frac{4\eta R}{T^2}(4\lambda\Gamma)
 =\frac{16R^2}{KT^2}.
 \label{eq:corrupt-potential-bias}
\end{align}

\subsection{Removing the stopping time and bounding the last iterate}
Let $E=E_Q\cap E_r\cap E_P$. The union bound gives
\[
 \Prob(E)\ge1-\Prob(E_Q^c)-\Prob(E_r^c)-\Prob(E_P^c)
 \ge1-3e^{-\ell}=1-\frac{3\delta}{8}\ge1-\delta.
\]
Fix a sample path in $E$ and an integer $1\le s\le T$ with $s<\tau$.
Then $I_t=1$ for $1\le t\le s$. Lemma~\ref{lem:radius} gives
\begin{align*}
 \norm{x_{t+1}-x^\star}^2-\norm{x_t-x^\star}^2
 &\le-2\eta_t\ip{e_t}{x_t-x^\star}+2\eta_t^2\norm{e_t}^2\\
 &=-2\eta_t\ip{\xi_t}{x_t-x^\star}
    -2\eta_t\ip{b_t+d_t}{x_t-x^\star}
    +2\eta_t^2\norm{e_t}^2.
\end{align*}
Sum these inequalities over $t=1,\ldots,s$, and use
\eqref{eq:corrupt-Q}, \eqref{eq:corrupt-radius-martingale}, and
\eqref{eq:corrupt-radius-bias}:
\begin{align}
 \norm{x_{s+1}-x^\star}^2
 &\le\norm{x_1-x^\star}^2+M_s^r
      -2\sum_{t=1}^s\eta_t\ip{b_t+d_t}{x_t-x^\star}
      +2\sum_{t=1}^s\eta_t^2\norm{e_t}^2\notag\\
 &\le R^2+\frac{32R^2}{K}+\frac{16R^2}{K}
      +2\eta^2Q_e\notag\\
 &\le R^2+\frac{48R^2}{K}
      +2\frac{R^2}{K^2\lambda^2\Gamma^2}
                  (7\lambda^2\Gamma)\notag\\
 &=\left(1+\frac{48}{K}+\frac{14}{K^2\Gamma}\right)R^2
 <2R^2.
 \label{eq:corrupt-close-radius}
\end{align}

For the objective gap, \eqref{eq:prox-smooth} with $r=0$ implies
\eqref{eq:abstractstep} with
$\rho_t=2\eta_t^2\norm{e_t}^2$; the factor $2$ only enlarges its
nonnegative error term. Lemma~\ref{lem:prefix} consequently gives
\begin{align*}
 &P_s+\eta_sv_s\Delta_{s+1}
    +S_{s+1}\sum_{j=2}^s(v_j-v_{j-1})\Delta_j\\
 &\qquad\le
 P_0+2\sum_{t=1}^s\eta_t^2v_t\norm{e_t}^2
       +\sum_{t=1}^s\ip{\xi_t+b_t+d_t}{A_t}.
\end{align*}
The third term on the left is nonnegative. Since
$2\eta_t^2v_t\le4\eta^2/T^2$ by Lemma~\ref{lem:weights}, we get
\begin{align}
 P_s+\eta_sv_s\Delta_{s+1}
 &\le P_0+M_s^P+\sum_{t=1}^s\ip{b_t+d_t}{A_t}
             +2\sum_{t=1}^s\eta_t^2v_t\norm{e_t}^2\notag\\
 &\le\frac{R^2}{T^2}+\frac{32R^2}{KT^2}
       +\frac{16R^2}{KT^2}+\frac{4\eta^2Q_e}{T^2}\notag\\
 &\le\frac{R^2}{T^2}+\frac{48R^2}{KT^2}
       +\frac4{T^2}\frac{R^2}{K^2\lambda^2\Gamma^2}
                          (7\lambda^2\Gamma)\notag\\
 &=\left(1+\frac{48}{K}+\frac{28}{K^2\Gamma}\right)
                      \frac{R^2}{T^2}
 <\frac{2R^2}{T^2}.
 \label{eq:corrupt-close-potential}
\end{align}
Both strict inequalities follow from
\[
 1+\frac{48}{K}+\frac{28}{K^2\Gamma}
 <1+\frac{48}{128}+\frac{28}{128^2\log8}
 <1.376<2.
\]

If $\tau<\infty$, then $2\le\tau\le T+1$. Applying
\eqref{eq:corrupt-close-radius}--\eqref{eq:corrupt-close-potential}
with $s=\tau-1$ yields
\[
 \norm{x_\tau-x^\star}^2<2R^2<4R^2,\qquad
 P_{\tau-1}\le
 P_{\tau-1}+\eta_{\tau-1}v_{\tau-1}\Delta_\tau
 <\frac{2R^2}{T^2}<\frac{4R^2}{T^2}.
\]
Neither stopping condition can hold, a contradiction. Therefore
$\tau=\infty$ on $E$, and $\norm{x_t-x^\star}\le2R$ for
$1\le t\le T+1$.

Finally, take $s=T$ in \eqref{eq:corrupt-close-potential}. Since
$P_T\ge0$, $v_T=1$, and $\eta_T=\eta/T$,
\begin{align*}
 \frac{\eta}{T}\bigl(f(x_{T+1})-f^\star\bigr)
 &\le P_T+\eta_Tv_T\Delta_{T+1}
 \le\frac{2R^2}{T^2},\\
 f(x_{T+1})-f^\star
 &\le\frac{2R^2}{\eta T}
 =\frac{2R^2}{T}\frac{K\lambda\Gamma}{R}
 =\frac{256R\lambda\Gamma}{T}\\
 &\le\frac{256R\Gamma}{T}
          \left(4LR+\sigma(T/\Gamma)^{1/p}\right)\\
 &=1024LR^2\frac{\Gamma}{T}
       +256\sigma R\left(\frac{\Gamma}{T}\right)^{(p-1)/p}.
\end{align*}
Together with $\Prob(E)\ge1-\delta$ and $\Gamma=\ell+q$, this proves
Theorem~\ref{thm:corrupt}.\hfill$\square$

\subsection{Proof of Corollary~\ref{cor:budget}}
Keep the same filtration, stopping time, and error decomposition.
Now $\nu_t$ is predictable, $\sum_{t=1}^T\nu_t\le S$ almost surely, and
$\lambda=\max\{4LR,(S/\Gamma)^{1/p}\}$. In particular,
$S\le\lambda^p\Gamma$. The conditional version of
Lemma~\ref{lem:clip} gives, on $\{I_t=1\}$,
\[
 \norm{b_t}\le B_t:=2\nu_t\lambda^{1-p},\qquad
 \Et\norm{u_t}^2\le V_t:=2\nu_t\lambda^{2-p},\qquad
 \Et\norm{\xi_t}^2\le V_t.
\]
Thus
\begin{align*}
 \sum_{t=1}^T I_tB_t
 &=2\lambda^{1-p}\sum_{t=1}^T I_t\nu_t
 \le2S\lambda^{1-p}
 \le2\lambda^p\Gamma\lambda^{1-p}=2\lambda\Gamma,\\
 \sum_{t=1}^T I_tV_t
 &=2\lambda^{2-p}\sum_{t=1}^T I_t\nu_t
 \le2S\lambda^{2-p}
 \le2\lambda^p\Gamma\lambda^{2-p}=2\lambda^2\Gamma.
\end{align*}
All pointwise clipping bounds are unchanged. Apply
Lemma~\ref{lem:nonnegative} to the sequence $I_t\norm{u_t}^2$ with
$H=3\lambda^2$, $m_t=I_tV_t$, $A=2S\lambda^{2-p}$, and $u=\ell$.
Except on an event of probability $e^{-\ell}$,
\begin{align*}
 Q_e
 &\le\sum_{t=1}^T I_t\norm{u_t}^2+3\lambda^2q\\
 &\le4S\lambda^{2-p}+3\lambda^2\ell+3\lambda^2q\\
 &\le4\lambda^2\Gamma+3\lambda^2(\ell+q)
 =7\lambda^2\Gamma.
\end{align*}
The other inputs to the proof satisfy
\begin{align*}
 \sum_{t=1}^T I_t(\norm{b_t}+\norm{d_t})
 &\le\sum_{t=1}^T I_tB_t+2\lambda q
 \le2\lambda\Gamma+2\lambda q\le4\lambda\Gamma,\\
 \sum_{t=1}^T\Et(M_t^r-M_{t-1}^r)^2
 &\le16\eta^2R^2\sum_{t=1}^T I_tV_t
 \le32\eta^2R^2\lambda^2\Gamma,\\
 \sum_{t=1}^T\Et(M_t^P-M_{t-1}^P)^2
 &\le\frac{16\eta^2R^2}{T^4}\sum_{t=1}^T I_tV_t
 \le\frac{32\eta^2R^2\lambda^2\Gamma}{T^4}.
\end{align*}
The increment bounds remain $12\eta R\lambda$ and
$12\eta R\lambda/T^2$. Hence the two applications of
Lemma~\ref{lem:freedman} give exactly
\eqref{eq:corrupt-radius-martingale} and
\eqref{eq:corrupt-potential-martingale}. The first of the three bounds above
also gives \eqref{eq:corrupt-radius-bias} and
\eqref{eq:corrupt-potential-bias}.

On the intersection of these three events, the calculations in
\eqref{eq:corrupt-close-radius}--\eqref{eq:corrupt-close-potential}
therefore yield
\[
 \norm{x_{s+1}-x^\star}^2<2R^2,\qquad
 P_s+\eta_sv_s\Delta_{s+1}<2R^2/T^2
 \quad(1\le s\le T,\ s<\tau).
\]
If $\tau<\infty$, setting $s=\tau-1$ contradicts both possible
stopping conditions as above. Thus $\tau=\infty$ and, with probability
at least $1-3e^{-\ell}\ge1-\delta$,
\begin{align*}
 f(x_{T+1})-f^\star
 &\le\frac{256R\lambda\Gamma}{T}\\
 &\le\frac{256R\Gamma}{T}
           \left(4LR+(S/\Gamma)^{1/p}\right)\\
 &=1024\frac{LR^2\Gamma}{T}
       +256\frac{RS^{1/p}\Gamma^{(p-1)/p}}{T}.
\end{align*}
The almost-sure bound $\sum_t\nu_t\le S$ supplies the deterministic
variance bounds required by the concentration lemmas.\hfill$\square$

\section{Proofs of the statistical lower bounds}\label{app:lower}
Each proof constructs two instances indexed by $s\in\{-1,+1\}$.
To compare their outputs, run the algorithm on both instances with the
same internal random seed and the same Bernoulli variables. Each run
still has the distribution required by its own instance. We will identify
an event on which the observed transcripts agree, and then show that a
common output must have a large error on at least one instance.

\subsection{Proof of Proposition~\ref{prop:lower}}
Let $a=(p-1)/p$ and $b=\log(1/(4\delta))$. Choose
\begin{equation}\label{eq:lower-smooth-parameters}
 \rho=\frac{b}{2T},\qquad
 m=\min\left\{LR,\frac{\sigma\rho^a}{2^{1/p}}\right\}
   =\min\left\{LR,\frac{\sigma}{2}(b/T)^a\right\}.
\end{equation}
Indeed, $2^{-1/p}\rho^a=2^{-1/p-a}(b/T)^a=(b/T)^a/2$.
The assumptions $\delta\le1/8$ and $T\ge b$ imply
\[
 b\ge\log2>0,\qquad 0<\rho=\frac{b}{2T}\le\frac12,\qquad m>0.
\]
For $s\in\{-1,+1\}$, define
\[
 f_s(x)=\frac{m}{2R}(x-sR)^2,\qquad x\in\R.
\]
These functions satisfy
\begin{align*}
 f_s'(x)&=\frac{m}{R}(x-sR)=\frac{m}{R}x-sm,\qquad
 f_s''(x)=\frac{m}{R}>0,\\
 |f_s'(x)-f_s'(y)|
 &=\frac{m}{R}|x-y|\le L|x-y|,\\
 x_s^\star&=sR,\qquad f_s(x_s^\star)=0,\qquad
 |x_1-x_s^\star|=|0-sR|=R.
\end{align*}
Thus both instances are convex and $L$-smooth with the required
initial distance.

At the $t$-th query draw a fresh $B_t\sim\operatorname{Bernoulli}(\rho)$,
independently of all preceding samples and the algorithm's random seed.
Return
\begin{equation}\label{eq:lower-smooth-oracle}
 g_{s,t}(x)=\frac{m}{R}x-\frac{sm}{\rho}B_t.
\end{equation}
For an adaptive query $x_t$, include the random seed in $\F_{t-1}$;
then $x_t$ is $\F_{t-1}$-measurable and $\Et B_t=\rho$. Therefore
\begin{align*}
 \Et g_{s,t}(x_t)
 &=\frac{m}{R}x_t-\frac{sm}{\rho}\Et B_t
 =\frac{m}{R}x_t-sm=f_s'(x_t),\\
 g_{s,t}(x_t)-f_s'(x_t)
 &=sm\left(1-\frac{B_t}{\rho}\right).
\end{align*}
Conditionally on $\F_{t-1}$, the magnitude of the last expression is
$m$ with probability $1-\rho$, and $m(1-\rho)/\rho$ with probability
$\rho$. Hence
\begin{align}
 \Et|g_{s,t}(x_t)-f_s'(x_t)|^p
 &=m^p\left[(1-\rho)
             +\rho\left(\frac{1-\rho}{\rho}\right)^p\right]\notag\\
 &=m^p\left[(1-\rho)+\rho^{1-p}(1-\rho)^p\right]\notag\\
 &\le m^p\left[\rho^{1-p}+\rho^{1-p}\right]
 =2m^p\rho^{1-p}\notag\\
 &\le2\left(\frac{\sigma\rho^a}{2^{1/p}}\right)^p\rho^{1-p}
 =\sigma^p\rho^{ap+1-p}
 =\sigma^p.
 \label{eq:lower-moment}
\end{align}
We used $1-\rho\le1\le\rho^{1-p}$, $(1-\rho)^p\le1$, and $ap=p-1$.
The bound does not depend on the query location.

Consider the event $E_0=\{B_1=\cdots=B_T=0\}$. For $0\le u\le1/2$,
\[
 \frac{d}{du}\bigl(\log(1-u)+2u\bigr)
 =-\frac{1}{1-u}+2=\frac{1-2u}{1-u}\ge0.
\]
Since $\log(1-0)+2\cdot0=0$, this proves $\log(1-u)\ge-2u$.
Independence of the Bernoulli variables now gives
\begin{align}
 \Prob(E_0)
 &=\prod_{t=1}^T\Prob(B_t=0)
 =(1-\rho)^T
 =\exp\bigl(T\log(1-\rho)\bigr)\notag\\
 &\ge\exp(-2T\rho)
 =\exp(-b)
 =4\delta.
 \label{eq:lower-empty-prob}
\end{align}

On $E_0$, the answer to every query is $(m/R)x$, for either sign.
Fixing the common random seed, the first queries agree. If the
transcripts through round $t-1$ agree, the algorithm selects the same
next query; the responses $(m/R)x_t$ then agree as well. This induction
also gives the same stopping decision and output. If the algorithm uses
fewer than $T$ queries, generate unused Bernoulli variables up to $T$;
this leaves its output unchanged and keeps
\eqref{eq:lower-empty-prob} valid.

Let $\widehat x_s$ denote the output on instance $s$, and let
$\widehat x$ be their common value on $E_0$. At that value,
\begin{align*}
 &\bigl(f_+(\widehat x)-f_+(R)\bigr)
   +\bigl(f_-(\widehat x)-f_-(-R)\bigr)\\
 &\qquad=\frac{m}{2R}
            \left((\widehat x-R)^2+(\widehat x+R)^2\right)
 =\frac{m}{R}(\widehat x^2+R^2)\ge mR.
\end{align*}
At least one of the two summands must therefore be at least $mR/2$.
Pointwise under the coupling,
\[
 \ind_{E_0}
 \le
 \ind_{\{f_+(\widehat x_+)-f_+(R)\ge mR/2\}}
 +
 \ind_{\{f_-(\widehat x_-)-f_-(-R)\ge mR/2\}}.
\]
Taking expectations and then the larger of the two probabilities gives
\begin{align*}
 \sum_{s\in\{-1,+1\}}
 \Prob\{f_s(\widehat x_s)-f_s(sR)\ge mR/2\}
 &\ge\Prob(E_0)\ge4\delta,\\
 \max_{s\in\{-1,+1\}}
 \Prob\{f_s(\widehat x_s)-f_s(sR)\ge mR/2\}
 &\ge\frac12\Prob(E_0)\ge2\delta.
\end{align*}
The maximum selects one fixed instance; the sign need not depend on
the realized transcript. Finally,
\[
 \frac{mR}{2}
 =\frac{R}{2}\min\left\{LR,\frac{\sigma}{2}(b/T)^a\right\}
 =\min\left\{\frac{LR^2}{2},
                \frac{\sigma R}{4}(b/T)^{(p-1)/p}\right\}.
\]
This proves Proposition~\ref{prop:lower}.\hfill$\square$

\subsection{Proof of Proposition~\ref{prop:nonsmoothlower}}
Use the same $a=(p-1)/p$, $b=\log(1/(4\delta))$, and $\rho=b/(2T)$,
but now set
\[
 m=\min\left\{G,\frac{\sigma}{2}(b/T)^a\right\}
  =\min\left\{G,\frac{\sigma\rho^a}{2^{1/p}}\right\},\qquad
 f_s(x)=-smx,\qquad r(x)=\iota_{[-R,R]}(x).
\]
The linear function $f_s$ is convex and satisfies
\[
 |f_s(x)-f_s(y)|=m|x-y|\le G|x-y|,\qquad
 \partial f_s(x)=\{-sm\}.
\]
The indicator $r$ of the closed convex interval is proper, lower
semicontinuous, and convex. Direct minimization on this interval gives
\[
 x_s^\star=sR,\qquad
 F_s^\star=f_s(sR)=-mR,\qquad |x_1-x_s^\star|=R.
\]
For every $\alpha>0$, its exact proximal map is
\begin{align*}
 \prox_{\alpha r}(y)
 &=\arg\min_{x\in\R}
          \left\{r(x)+\frac{|x-y|^2}{2\alpha}\right\}\\
 &=\arg\min_{x\in[-R,R]}|x-y|^2
 =\begin{cases}
     -R,&y<-R,\\
     y,&-R\le y\le R,\\
     R,&y>R.
   \end{cases}
\end{align*}
This oracle is identical for the two signs, for every query $y,\alpha$.

For each stochastic query, draw a fresh
$B_t\sim\operatorname{Bernoulli}(\rho)$ and return
$g_{s,t}(x)=-smB_t/\rho$. For any adaptive query,
\begin{align*}
 \Et g_{s,t}(x_t)
 &=-\frac{sm}{\rho}\Et B_t=-sm\in\partial f_s(x_t),\\
 \Et|g_{s,t}(x_t)+sm|^p
 &=m^p\left[(1-\rho)+\rho^{1-p}(1-\rho)^p\right]\\
 &\le2m^p\rho^{1-p}\\
 &\le2\left(\frac{\sigma\rho^a}{2^{1/p}}\right)^p\rho^{1-p}
 =\sigma^p.
\end{align*}
The moment calculation uses the same two Bernoulli outcomes as
\eqref{eq:lower-moment}.

On $E_0=\{B_1=\cdots=B_T=0\}$, every stochastic reply is zero.
Fix the same random seed in both runs. Induction over all oracle calls,
including any intervening proximal calls, gives identical transcripts:
stochastic answers are zero, and proximal answers come from the same
map displayed above. The outputs therefore agree on $E_0$. As before,
unused Bernoulli variables may be generated after early stopping, and
$\Prob(E_0)\ge4\delta$ by \eqref{eq:lower-empty-prob}.

If the common output $\widehat x$ belongs to $[-R,R]$, then
\begin{align*}
 F_+(\widehat x)-F_+^\star
 &=-m\widehat x-(-mR)=m(R-\widehat x),\\
 F_-(\widehat x)-F_-^\star
 &=m\widehat x-(-mR)=m(R+\widehat x),\\
 \bigl(F_+(\widehat x)-F_+^\star\bigr)
 +\bigl(F_-(\widehat x)-F_-^\star\bigr)&=2mR.
\end{align*}
At least one gap is at least $mR$. If $\widehat x\notin[-R,R]$,
both gaps equal $+\infty$, so the same conclusion holds. Consequently,
\begin{align*}
 \ind_{E_0}
 &\le\sum_{s\in\{-1,+1\}}
       \ind_{\{F_s(\widehat x_s)-F_s^\star\ge mR\}},\\
 \sum_{s\in\{-1,+1\}}
 \Prob\{F_s(\widehat x_s)-F_s^\star\ge mR\}
 &\ge\Prob(E_0)\ge4\delta,\\
 \max_{s\in\{-1,+1\}}
 \Prob\{F_s(\widehat x_s)-F_s^\star\ge mR\}
 &\ge\frac12\Prob(E_0)\ge2\delta.
\end{align*}
Substituting
\[
 mR=\min\left\{GR,\frac{\sigma R}{2}(b/T)^{(p-1)/p}\right\}
\]
proves Proposition~\ref{prop:nonsmoothlower}.\hfill$\square$

\subsection{Proof of Proposition~\ref{prop:corruptlower}}
Choose
\[
 a=\frac{p-1}{p},\qquad
 \rho=\frac{q}{2T},\qquad
 m=\min\{LR,(\sigma/2)\rho^a\}.
\]
Since $1\le q\le T$, we have $0<\rho\le1/2$ and $m>0$.
Use the quadratic functions and latent clean oracle
\[
 f_s(x)=\frac{m}{2R}(x-sR)^2,\qquad
 g_{s,t}(x)=\frac{m}{R}x-\frac{sm}{\rho}B_t,
 \qquad B_t\sim\operatorname{Bernoulli}(\rho).
\]
The calculations in the first proof show that the functions are
convex and $L$-smooth, that $x_s^\star=sR$, and that
$|x_1-x_s^\star|=R$. Fresh Bernoulli samples also give
\begin{align*}
 \Et g_{s,t}(x_t)
 &=\frac{m}{R}x_t-sm=f_s'(x_t),\\
 \Et|g_{s,t}(x_t)-f_s'(x_t)|^p
 &=m^p\left[(1-\rho)+\rho^{1-p}(1-\rho)^p\right]\\
 &\le2m^p\rho^{1-p}\\
 &\le2\left(\frac{\sigma}{2}\rho^a\right)^p\rho^{1-p}
 =2^{1-p}\sigma^p\le\sigma^p.
\end{align*}
These are conditional statements with respect to the enlarged
filtration containing past latent samples and past adversarial choices.

For each instance, the adversary replaces an informative sample
($B_t=1$) if fewer than $q$ replacements have already been made.
Equivalently, define recursively
\[
 C_t=B_t\ind_{\{\sum_{j=1}^{t-1}C_j<q\}},\qquad
 \widetilde g_{s,t}(x_t)=
 \begin{cases}
   (m/R)x_t,&C_t=1,\\
   g_{s,t}(x_t),&C_t=0.
 \end{cases}
\]
When $C_t=1$, the replacement differs from the latent reply by
$sm/\rho\ne0$, so $C_t$ is exactly the replacement indicator.
The rule is causal: from the current query and sample the adversary
can distinguish
\[
 B_t=0\ \Longrightarrow\ g_{s,t}(x_t)=(m/R)x_t,\qquad
 B_t=1\ \Longrightarrow\ g_{s,t}(x_t)=(m/R)x_t-sm/\rho.
\]
Once the running count reaches $q$, all later $C_t$ are zero.
Therefore $\sum_{t=1}^T C_t\le q$ on every sample path.

Let $N_B=\sum_{t=1}^T B_t$ and $E_q=\{N_B\le q\}$. By linearity of
expectation and Markov's inequality,
\begin{align*}
 \E N_B&=\sum_{t=1}^T\E B_t=T\rho=q/2,\\
 \Prob(N_B>q)
 &\le\Prob(N_B\ge q)\le\frac{\E N_B}{q}=\frac12,\\
 \Prob(E_q)&=1-\Prob(N_B>q)\ge\frac12.
\end{align*}
On $E_q$, whenever $B_t=1$ there have been at most $q-1$ earlier
informative samples, so the replacement budget has not yet been
exhausted. Thus every informative sample is replaced, and
\[
 \widetilde g_{s,t}(x_t)=\frac{m}{R}x_t
 \qquad\text{for all }t\le T\text{ on }E_q.
\]
With the same random seed and Bernoulli sequence in the two runs,
these responses give identical adaptive query transcripts and hence
identical outputs on $E_q$. For early-stopping algorithms we again
generate the unused Bernoulli variables without making further queries.

For the common output $\widehat x$ on $E_q$,
\begin{align*}
 &\bigl(f_+(\widehat x)-f_+(R)\bigr)
   +\bigl(f_-(\widehat x)-f_-(-R)\bigr)\\
 &\qquad=\frac{m}{2R}
          \left((\widehat x-R)^2+(\widehat x+R)^2\right)
 =\frac{m}{R}(\widehat x^2+R^2)\ge mR.
\end{align*}
It follows that
\begin{align*}
 \ind_{E_q}
 &\le\sum_{s\in\{-1,+1\}}
    \ind_{\{f_s(\widehat x_s)-f_s(sR)\ge mR/2\}},\\
 \sum_{s\in\{-1,+1\}}
 \Prob\{f_s(\widehat x_s)-f_s(sR)\ge mR/2\}
 &\ge\Prob(E_q)\ge\frac12,\\
 \max_{s\in\{-1,+1\}}
 \Prob\{f_s(\widehat x_s)-f_s(sR)\ge mR/2\}
 &\ge\frac12\Prob(E_q)\ge\frac14.
\end{align*}
Each probability is the marginal law of one clean instance together
with its own causal adversary. Hence the maximizing sign gives one
fixed admissible instance and adversary, with threshold
\[
 \frac{mR}{2}
 =\frac{R}{2}\min\left\{LR,\frac{\sigma}{2}
                  \left(\frac{q}{2T}\right)^{(p-1)/p}\right\}.
\]
This proves Proposition~\ref{prop:corruptlower}.\hfill$\square$

\end{document}